\documentclass[11pt, reqno]{amsart}
\usepackage{amsfonts, amssymb, amscd, amsrefs}
\usepackage{graphicx}
\usepackage{hyperref}
\usepackage{slashed}
\usepackage{fullpage}
\usepackage{float}
\usepackage{enumitem}
\usepackage[table]{xcolor}
\usepackage{colortbl}
\usepackage{bbold}
\usepackage{placeins}
\usepackage{afterpage}

\usepackage{multirow}
\setcounter{topnumber}{4}
\setcounter{bottomnumber}{4}
\setcounter{totalnumber}{6}

\usepackage{booktabs}
\usepackage{tabularx}
\usepackage{longtable, array}

\usepackage{algorithm}
\usepackage{algpseudocode}

\usepackage{makecell}



\usepackage{amsmath, amssymb, amsthm}
\usepackage{tikz}
\usetikzlibrary{
  positioning,
  calc,
  arrows.meta,
  decorations.pathreplacing,
  shapes.geometric,
  fit
}

\tikzset{
arrow/.style={->, >=stealth, thick},
every picture/.style={ transform shape, scale=0.85, every node/.style={ scale=0.85, inner sep=2pt, outer sep=1pt } },
box/.style={ rectangle, draw, minimum width=3cm, minimum height=1cm, text centered, align=center },
every text node part/.style={ font=\sffamily } }

\usepackage{pgf}
\usepackage[pdf]{graphviz}
\usepackage{uniquecounter}


\definecolor{codebg}{RGB}{245, 245, 230} 


\usepackage[useregional]{datetime2}
\DTMlangsetup[en-US]{zone=eastern,mapzone}

\theoremstyle{plain}
\newtheorem{theorem}{Theorem}[section]
\newtheorem{lemma}[theorem]{Lemma}
\newtheorem{proposition}[theorem]{Proposition}

\theoremstyle{definition}
\newtheorem{definition}[theorem]{Definition}

\theoremstyle{remark}
\newtheorem{remark}[theorem]{Remark}

\newcommand{\cmark}{\textcolor{green!60!black}{\ensuremath{\checkmark}}}
\newcommand{\xmark}{\textcolor{red!70!black}{\ensuremath{\times}}}

\newcommand{\Weak}{\textcolor{red!70!black}{Weak}}              
\newcommand{\Limited}{\textcolor{red!50!orange}{Limited}}       
\newcommand{\Partial}{\textcolor{orange!80!black}{Partial}}     
\newcommand{\Moderate}{\textcolor{yellow!60!black}{Moderate}}   
\newcommand{\Strong}{\textcolor{green!60!black}{Strong}}        
\newcommand{\None}{\textcolor{gray}{None}}                      


\newcommand{\NaN}{\textnormal{\texttt{NaN}}}
\newcommand{\df}{\textnormal{\texttt{df}}}
\newcommand{\dtype}{\textnormal{\texttt{dtype}}}
%

\input{./helpers_root/dev_scripts_helpers/documentation/latex_abbrevs.sty}

\usepackage{listings}

\newcommand{\pythonstyle}{\lstset{ language=Python, basicstyle=\ttm, morekeywords={self}, 
keywordstyle=\ttb
\color{deepblue}
, emph={MyClass,__init__}, 
emphstyle=\ttb
\color{deepred}
, 
stringstyle=
\color{deepgreen}
, frame=tb, 
showstringspaces=false }}

\lstnewenvironment{python}[1][]{ \pythonstyle \lstset{#1} }{}


\newcommand{\pythoninline}[1]{{\pythonstyle\lstinline!#1!}}

\usepackage{listings}
\usepackage{xcolor}

\definecolor{codegreen}{rgb}{0,0.6,0}
\definecolor{codegray}{rgb}{0.5,0.5,0.5}
\definecolor{codepurple}{rgb}{0.58,0,0.82}
\definecolor{backcolour}{rgb}{0.95,0.95,0.92}
\definecolor{redbackground}{rgb}{1,0.9,0.9}
\definecolor{greenbackground}{rgb}{0.9,1,0.9}

\lstdefinestyle{mystyle}{ backgroundcolor=
\color{backcolour}
, commentstyle=
\color{codegreen}
, keywordstyle=
\color{magenta}
, numberstyle=\tiny
\color{codegray}
, stringstyle=
\color{codepurple}
, basicstyle=\ttfamily\footnotesize, breakatwhitespace=false, breaklines=true, captionpos=b,
keepspaces=true, numbers=left, numbersep=5pt, showspaces=false, showstringspaces=false,
showtabs=false, tabsize=2 }

\lstdefinestyle{redstyle}{ backgroundcolor=
\color{redbackground}
, commentstyle=
\color{codegreen}
, keywordstyle=
\color{magenta}
, numberstyle=\tiny
\color{codegray}
, stringstyle=
\color{codepurple}
, basicstyle=\ttfamily\small, breakatwhitespace=false, breaklines=true, captionpos=b,
keepspaces=true, showspaces=false, showstringspaces=false,
showtabs=false, tabsize=2 }

\lstdefinestyle{greenstyle}{ backgroundcolor=
\color{greenbackground}
, commentstyle=
\color{codegreen}
, keywordstyle=
\color{magenta}
, numberstyle=\tiny
\color{codegray}
, stringstyle=
\color{codepurple}
, basicstyle=\ttfamily\small, breakatwhitespace=false, breaklines=true, captionpos=b,
keepspaces=true, showspaces=false, showstringspaces=false,
showtabs=false, tabsize=2 }

\lstdefinestyle{codebgstyle}{ backgroundcolor=
\color{codebg}
, commentstyle=
\color{codegreen}
, keywordstyle=
\color{magenta}
, numberstyle=\tiny
\color{codegray}
, stringstyle=
\color{codepurple}
, basicstyle=\ttfamily\small, breakatwhitespace=false, breaklines=true, captionpos=b,
keepspaces=true, numbers=left, numbersep=5pt, showspaces=false, showstringspaces=false,
showtabs=false, tabsize=2 }

\lstset{style=mystyle}


%

\makeatletter
\newcommand{\printfnsymbol}[1]{%
\textsuperscript{\@fnsymbol{#1}}%
}
\makeatother

\makeatletter
\def\input@path{{papers/Causify_DataFlow_A_Causal_Simulator/}}
\makeatother

\begin{document}
  \title{DataFlow: a framework for high-performance machine learning stream computing}

  \author{Giacinto Paolo Saggese$^{*}$}
  \author{Paul Smith$^{*}$}
  \thanks{$^{*}$ Authors listed alphabetically.}

  \thanks{With contributions from Shayan Ghasemnezhad, Danil Iachmenev, Tamara
  Jordania, Sonaal Kant, Samarth KaPatel, Grigorii Pomazkin, Sameep Pote, Juraj Smeriga,
  Daniil Tikhomirov, Nina Trubacheva, and Vladimir Yakovenko, }

  \maketitle

  Version: \today\ \DTMcurrenttime\ UTC

  \setcounter{tocdepth}{1}
  \tableofcontents

  \section{Introduction}

DataFlow is a framework to build, test, and deploy high-performance streaming
computing systems based on machine learning and artificial intelligence.

The goal of DataFlow is to increase the productivity of data scientists by empowering
them to design and deploy systems with minimal or no intervention from data engineers
and devops support.

Guiding desiderata in the design of DataFlow include:
\begin{enumerate}
  \item Support rapid and flexible prototyping with the standard Python/Jupyter/data
    science tools

  \item Process both batch and streaming data in exactly the same way

  \item Avoid software rewrites in going from prototype to production

  \item Make it easy to replay stream events in testing and debugging

  \item Specify system parameters through config

  \item Scale gracefully to large data sets and dense compute
\end{enumerate}

These design principles are embodied in the many design features of DataFlow,
which include:
\begin{enumerate}
  \item \textbf{Computation as a direct acyclic graph}. DataFlow represents models
    as direct acyclic graphs (DAG), which is a natural form for dataflow and
    reactive models typically found in real-time systems. Procedural statements
    are also allowed inside nodes.

  \item \textbf{Time series processing}. All DataFlow components (such as
    data store, compute engine, deployment) handle time series processing in a
    native way. Each time series can be univariate or multivariate (e.g., panel
    data) represented in a data frame format.

  \item \textbf{Support for both batch and streaming modes}. The framework allows
    running a model both in batch and streaming mode, without any change in
    the model representation. The same compute graph can be executed feeding data
    in one shot or in chunks (as in historical/batch mode), or as data is
    generated (as in streaming mode). DataFlow guarantees that the model execution
    is the same independently on how data is fed, as long as the model is
    strictly causal. Testing frameworks are provided to compare batch/streaming
    results so that any causality issues may be detected early in the development
    process.

  \item \textbf{Precise handling of time}. All components automatically track the
    knowledge time of when the data is available at both their input and output.
    This allows one to easily catch future peeking bugs, where a system is non-causal
    and uses data available in the future.

  \item \textbf{Observability and debuggability}. Because of the ability to capture
    and replay the execution of any subset of nodes, it is possible to easily
    observe and debug the behavior of a complex system.

  \item \textbf{Tiling}. DataFlow's framework allows streaming data with different
    tiling styles (e.g., across time, across features, and both), to minimize
    the amount of working memory needed for a given computation, increasing the
    chances of caching computation.

  \item \textbf{Incremental computation and caching}. Because the dependencies
    between nodes are explicitly tracked by DataFlow, only nodes that see a
    change of inputs or in the implementation code need to be recomputed, while
    the redundant computation can be automatically cached.

  \item \textbf{Maximum parallelism}. Because the computation is expressed as a
    DAG, the DataFlow execution scheduler can extract the maximum amount of
    parallelism and execute multiple nodes in parallel in a distributed
    fashion, minimizing latency and maximizing throughput of a computation.

  \item \textbf{Automatic vectorization}. DataFlow DAG nodes can apply a computation
    to a cross-section of features relying on numpy and Pandas vectorization.

  \item \textbf{Support for train/prediction mode}. A DAG can be run in `fit' mode
    to learn some parameters, which are stored by the relevant DAG nodes, and then
    run in `predict` mode to use the learned parameters to make predictions.
    This mimics the Sklearn semantic. There is no limitation to the number of evaluation
    phases that can be created (e.g., train, validation, prediction, save state,
    load state). Many different learning styles are supported from different
    types of runners (e.g., in-sample-only, in-sample vs out-of-sample, rolling
    learning, cross-validation).

  \item \textbf{Model serialization}. A fit DAG can be serialized to disk and then
    materialized for prediction in production.

  \item \textbf{Configured through a hierarchical configuration}. Each parameter
    in a DataFlow system is controlled by a corresponding value in a configuration.
    In other words, the config space is homeomorphic with the space of DataFlow
    systems: each config corresponds to a unique DataFlow system, and vice versa,
    each DataFlow sytem is completely represented by a Config. A configuration
    is represented as a nested dictionary following the same structure of the DAG
    to make it easy to navigate its structure. This makes it easy to create an
    ensemble of DAGs sweeping a multitude of parameters to explore the design
    space.

  \item \textbf{Deployment and monitoring}. A DataFlow system can be deployed
    as a Docker container. Even the development system is run as a Docker
    container, supporting the development and testing of systems on both cloud
    (e.g., AWS) and local desktop. Airflow is used to schedule and monitor
    long-running DataFlow systems.

  \item \textbf{Python and Jupyter friendly}. The framework is completely implemented
    in high-performance Python. It supports natively `asyncio' to overlap
    computation and I/O. The same DAG can be run in a Jupyter notebook for research
    and experimentation or in a production script, without any change in code.

  \item \textbf{Python data science stack support}. Data science libraries (such
    as Pandas, numpy, scipy, sklearn) are supported natively to describe
    computation. The framework comes with a library of pre-built nodes for many
    ML/AI applications.
\end{enumerate}

\subsection{The fallacy of finite data in data science}

Traditional data science workflows operate under a fundamental assumption that
data is finite and complete. The canonical workflow reads a fixed dataset as
input, constructs a model through training procedures, and generates output
predictions. This paradigm conceptualizes data as a static, bounded entity of
known and finite size that can be loaded entirely into memory or processed in a
single pass.

This assumption represents a significant departure from real-world data
generation processes. Most production systems encounter \emph{unbounded} data
that arrives incrementally over time as a continuous stream. Time-series data
sources---including sensor measurements, financial transactions, and system
logs---generate observations continuously. Such data streams exhibit no natural
termination point and grow indefinitely over operational lifetimes.

The standard approach to reconciling unbounded data with finite-data processing
frameworks partitions the continuous data stream into discrete chunks
(typically referred to as ``batches'') and applies modeling and transformation
logic to each chunk independently. This discretization strategy, while
pragmatic, introduces several categories of technical challenges:

\begin{enumerate}
  \item \textbf{Batch boundary artifacts}. Discretizing continuous streams into
    finite batches creates artificial discontinuities at batch boundaries.
    Operations that require temporal context across these boundaries---such as
    rolling statistics, cumulative aggregations, or stateful
    transformations---may yield results that depend on the specific
    partitioning scheme employed. This dependence on batch structure can lead
    to discrepancies between batch-mode development results and streaming-mode
    production behavior, complicating model validation and deployment
    verification.

  \item \textbf{Causality violations}. Batch processing frameworks typically
    provide simultaneous access to all observations within a batch. This
    temporal indiscriminacy facilitates inadvertent causality violations, where
    computations at time $t$ access observations from time $t' > t$. Such
    violations manifest as data leakage during model development and result in
    systematic prediction failures when deployed in real-time environments
    where future observations are genuinely unavailable. Detection of these
    causality errors during development requires explicit validation
    infrastructure that many batch-oriented frameworks do not provide.

  \item \textbf{Development-production implementation divergence}. Batch
    processing frameworks often lack native support for streaming execution
    semantics. Consequently, models prototyped using batch-oriented tools
    (e.g., pandas, scikit-learn) frequently require substantial reimplementation
    for production deployment in streaming environments. This translation
    process introduces both development overhead and the risk of semantic
    discrepancies between research prototypes and production implementations,
    potentially leading to unexpected behavioral differences in deployed
    systems.

  \item \textbf{Limited reproducibility of production failures}. Production
    failures in real-time systems often depend on precise temporal ordering and
    timing of data arrivals. Batch-based development frameworks typically
    abstract away these temporal details, making faithful reproduction of
    production failures in development environments challenging. Without the
    ability to replay exact event sequences with accurate timing semantics,
    systematic debugging of production issues becomes significantly more
    difficult.
\end{enumerate}

DataFlow addresses these challenges through a unified computational model
that treats data as unbounded streams and enforces strict causality via
point-in-time idempotency guarantees (see Section~\ref{subsec:context-windows}).
This design unifies batch and streaming execution semantics within a single
framework, enabling models developed in batch mode to execute identically in
streaming production environments without code modification.

\subsection{Definition of stream computing}

In the computer science literature, several terms (such as event/data stream processing,
graph computing, dataflow computing, reactive computing) are used to describe what
in this paper we refer succintly to as ``stream computing".

By stream processing we refer to a programming paradigm where streams of data
(e.g., time series or dataframes) are the objects of computation. A series of operations
(aka kernel functions) are applied to each element in the stream.

Stream computing represents a paradigm shift from traditional batch processing
and imperative languages, emphasizing real-time data handling, adaptability,
and parallel processing, making it highly effective for modern data-intensive applications.

\subsection{Core principles of stream computing}

The core principles of stream computing are:

\begin{enumerate}
  \item \textbf{Node-based architecture}. In stream and dataflow programming,
    the code is structured as a network of nodes. Each node represents a computational
    operation or a data processing function. Nodes are connected by edges that
    represent data streams.

  \item \textbf{Data-driven and reactive execution}. Execution in dataflow languages
    is data-driven, meaning that a node will process data as soon as it
    becomes available. Unlike imperative languages where the order of operations
    is predefined, in dataflow languages, the flow of data determines the order
    of execution.

  \item \textbf{Automatic parallelism}. The Dataflow programming paradigm naturally
    lends itself to parallel execution. Since nodes operate independently, they
    can process different data elements simultaneously, exploiting concurrent
    processing capabilities of modern hardware.

  \item \textbf{Continuous data streams}. Streams represent a continuous flow of
    data rather than discrete batches. Nodes in the network continuously
    receive, process, and output data, making them ideal for real-time data processing.

  \item \textbf{State management}. Nodes can be stateful or stateless.
    Stateful nodes retain information about previously processed data, enabling
    complex operations like windowing, aggregation, or pattern detection over time.

  \item \textbf{Compute intensity}. Stream processing is characterized by a high
    number of arithmetic operations per I/O and memory reference (e.g., it can
    be 50:1), since the same kernel is applied to all records and a number of
    records can be processed simultaneously. Furthermore, data is produced
    once and read only a few times.

  \item \textbf{Dynamic adaptability}. The dataflow model can dynamically adapt
    to changes in the data stream (like fluctuations in volume or velocity),
    ensuring efficient processing under varying conditions.

  \item \textbf{Scalability}. The model scales well horizontally, meaning one
    can add more nodes (or resources) to handle increased data loads without
    major architectural changes.

  \item \textbf{Event-driven processing}. Many dataflow languages support event-driven
    models where specific events in the data stream can trigger particular computational
    pathways or nodes.
\end{enumerate}

\subsection{Applications of stream computing}

Stream computing is a natural solution in a wide range of industries and
scenarios. Table~\ref{table:stream_applications} presents representative
application domains and their corresponding use cases.

\begin{table}[H]
\caption{Applications of stream computing across industries}
\centering
\small
\begin{tabular}{|l|p{9cm}|}
\hline
\textbf{Application} & \textbf{Examples} \\
\hline
Generic machine learning &
  \begin{itemize}[leftmargin=*, nosep, after=\vspace{-\baselineskip}]
    \item Graph-based computations in modern deep learning architectures
  \end{itemize} \\
\hline
Financial Services &
  \begin{itemize}[leftmargin=*, nosep, after=\vspace{-\baselineskip}]
    \item Trading: Analyzing market data in real-time to make automated trading decisions
    \item Fraud Detection: Monitoring transactions as they occur to detect and prevent fraudulent activities
    \item Risk Management: Real-time assessment of financial risks based on current market conditions and ongoing transactions
  \end{itemize} \\
\hline
Internet of Things (IoT) &
  \begin{itemize}[leftmargin=*, nosep, after=\vspace{-\baselineskip}]
    \item Smart Homes: Processing data from various home devices for automation and monitoring
    \item Industrial IoT: Real-time monitoring and control of industrial equipment and processes
    \item Smart Cities: Integrating data from traffic, public services, and environmental sensors to optimize urban management
  \end{itemize} \\
\hline
Telecommunications &
  \begin{itemize}[leftmargin=*, nosep, after=\vspace{-\baselineskip}]
    \item Network Monitoring and Optimization: Analyzing traffic patterns to optimize network performance and detect anomalies
    \item Customer Experience Management: Real-time analysis of customer data to improve service and personalize offerings
  \end{itemize} \\
\hline
Healthcare &
  \begin{itemize}[leftmargin=*, nosep, after=\vspace{-\baselineskip}]
    \item Remote Patient Monitoring: Continuous monitoring of patient vitals for timely medical intervention
    \item Real-Time Health Data Analysis: Analyzing data streams from medical devices for immediate clinical insights
  \end{itemize} \\
\hline
Retail and E-Commerce &
  \begin{itemize}[leftmargin=*, nosep, after=\vspace{-\baselineskip}]
    \item Personalized Recommendations: Real-time analysis of customer behavior to offer personalized product recommendations
    \item Supply Chain Optimization: Streamlining logistics and inventory management based on real-time data
  \end{itemize} \\
\hline
Media and Entertainment &
  \begin{itemize}[leftmargin=*, nosep, after=\vspace{-\baselineskip}]
    \item Content Optimization: Real-time analysis of viewer preferences and behavior for content recommendations
    \item Live Event Analytics: Monitoring and analyzing data from live events for audience engagement and operational efficiency
  \end{itemize} \\
\hline
Transportation and Logistics &
  \begin{itemize}[leftmargin=*, nosep, after=\vspace{-\baselineskip}]
    \item Fleet Management: Tracking and managing vehicles in real time for optimal routing and scheduling
    \item Predictive Maintenance: Analyzing data from transportation systems to predict and prevent equipment failures
  \end{itemize} \\
\hline
Energy and Utilities &
  \begin{itemize}[leftmargin=*, nosep, after=\vspace{-\baselineskip}]
    \item Smart Grid Management: Balancing supply and demand in real-time and identifying grid anomalies
    \item Renewable Energy Optimization: Optimizing the output of renewable energy sources by analyzing environmental data streams
  \end{itemize} \\
\hline
Cybersecurity &
  \begin{itemize}[leftmargin=*, nosep, after=\vspace{-\baselineskip}]
    \item Intrusion Detection Systems: Real-time monitoring of network traffic to detect and respond to cyber threats
    \item Threat Intelligence: Analyzing global cyber threat data streams for proactive security measures
  \end{itemize} \\
\hline
Environmental Monitoring &
  \begin{itemize}[leftmargin=*, nosep, after=\vspace{-\baselineskip}]
    \item Climate and Weather Analysis: Processing data from environmental sensors for weather prediction and climate research
    \item Pollution Monitoring: Real-time tracking of air and water quality
  \end{itemize} \\
\hline
Gaming &
  \begin{itemize}[leftmargin=*, nosep, after=\vspace{-\baselineskip}]
    \item In-Game Analytics: Real-time analysis of player behavior for game optimization and personalized experiences
  \end{itemize} \\
\hline
Social Media Analytics &
  \begin{itemize}[leftmargin=*, nosep, after=\vspace{-\baselineskip}]
    \item Trend Analysis: Monitoring social media streams to identify and analyze trending topics and sentiments
  \end{itemize} \\
\hline
Emergency Response &
  \begin{itemize}[leftmargin=*, nosep, after=\vspace{-\baselineskip}]
    \item Disaster Monitoring and Management: Real-time data analysis for effective response during natural or man-made disasters
  \end{itemize} \\
\hline
\end{tabular}
\label{table:stream_applications}
\end{table}

\section{DataFlow at a Glance}

DataFlow is a computational framework designed to build, test, and deploy
artificial intelligence and machine learning models for \emph{unbounded}
tabular time series data. The framework addresses the unique challenges of
processing streaming data where temporal ordering, causality, and reproducibility
are essential requirements. DataFlow is particularly suited to domains
including financial markets, IoT sensor networks, supply chain monitoring,
and real-time analytics systems where time dependencies constitute a fundamental
aspect of the modeling problem.

\subsection{Core Abstractions}

The central data model in DataFlow is the \emph{stream dataframe}, a
time-indexed tabular structure with a potentially unbounded number of rows.
Unlike traditional batch-oriented dataframes, stream dataframes model data as
continuous sequences that arrive incrementally over time, reflecting the
reality of production systems where data generation is ongoing and
termination is not inherent to the process.

Computation in DataFlow is organized as a directed acyclic graph (DAG) of
\emph{computational nodes}. Each node receives a fixed number (possibly zero)
of stream dataframes as input and produces a fixed number of stream dataframes
as output. Nodes encapsulate transformations ranging from simple arithmetic
operations to complex machine learning models. The DAG structure makes data
dependencies explicit and enables the framework to exploit parallelism and
perform incremental computation.

\subsection{Tiling and Point-in-Time Idempotency}

A distinguishing principle of DataFlow is the requirement that computations
be \emph{tilable}---that is, their correctness must not depend on how input
data is partitioned along temporal or feature dimensions. Formally, a
computation is tilable if it can operate on contiguous windows (tiles) of the
input stream and produce outputs that are invariant to the tile boundaries,
provided that each tile contains sufficient temporal context.

This invariance is enforced through \emph{point-in-time idempotency}: for any
time point $t$, the output at $t$ depends only on a fixed-length temporal
window preceding $t$, known as the \emph{context window}. Once a tile includes
at least this context window length of historical data, the computed output
at the tile's right endpoint is guaranteed to match the output that would be
obtained from processing the entire historical stream up to that point.

Point-in-time idempotency provides a formal correctness criterion that unifies
batch and streaming execution. A model developed and validated on historical
data in batch mode can be deployed in streaming mode with the assurance that,
as long as causality is preserved, the outputs will be identical to those
produced in batch evaluation. This property is enforced by the framework and
can be automatically validated through tiling tests that partition data in
multiple ways and verify output consistency.

\subsection{Architectural Benefits}

The tiling principle and point-in-time idempotency requirement enable several
critical capabilities:

\begin{enumerate}
  \item \textbf{Unified batch and streaming execution.} The same model
    specification executes identically in batch mode (processing historical
    data in large chunks) and streaming mode (processing data as it arrives),
    eliminating the need for separate implementations and reducing
    development-production discrepancies.

  \item \textbf{Causality enforcement.} Because outputs at time $t$ depend only
    on data from times $s \leq t$, the framework prevents future-peeking
    errors that commonly arise in batch-oriented development. Violations of
    causality manifest as tile-dependent outputs and are automatically
    detectable through tiling validation tests.

  \item \textbf{Memory-bounded computation.} Tiles may be sized to fit within
    available memory resources while maintaining correctness, enabling
    processing of arbitrarily large historical datasets without requiring
    full in-memory representation.

  \item \textbf{Replay and debugging.} Real-time execution can be captured and
    replayed deterministically in development environments. The framework
    records input streams with precise knowledge timestamps, enabling
    systematic debugging of production failures by replaying exact sequences
    of events.

  \item \textbf{Parallelism and incremental computation.} The DAG structure
    makes data dependencies explicit. Nodes with no dependency relationship
    may execute in parallel, and nodes whose inputs and implementations have
    not changed may skip recomputation by retrieving cached results. The
    framework automatically exploits these opportunities for optimization.

  \item \textbf{Vectorization across features.} Stream dataframes support
    multi-level column indexing, enabling computations to be expressed in
    vectorized form across assets or features. This design leverages the
    computational efficiency of array-oriented libraries (e.g., NumPy, pandas)
    while maintaining compatibility with streaming semantics.

  \item \textbf{Model lifecycle management.} Nodes may be stateful, supporting
    distinct \texttt{fit} and \texttt{predict} phases analogous to
    scikit-learn conventions. Trained models may be serialized and deployed to
    production environments with assurance that evaluation semantics remain
    consistent between research and deployment.
\end{enumerate}

\subsection{Context Windows and Tile Composition}

A \emph{tile} is a contiguous temporal window of a stream dataframe whose
length equals or exceeds the node's context window requirement. For a node
with context window length $L$, any tile of length $\tau \geq L$ ending at
time $t$ contains sufficient information to compute the correct output at time
$t$. The context window of a DAG is determined by the context windows of its
constituent nodes and the dependencies encoded in the graph structure; the
maximal context window over all source-to-sink paths defines the DAG-level
context window.

An important consequence of point-in-time idempotency is that \emph{two tiles
suffice for correct mini-batch execution}: providing two consecutive tiles of
input data, each of length $\tau$, ensures that all outputs in the second
tile are computed correctly, even for time points near the tile boundary where
the context window spans both tiles. This property (formalized in
Theorem~\ref{thm:two-tiles-suffice}) underpins DataFlow's ability to process
historical data efficiently in batch mode while guaranteeing identical results
to streaming execution.

The framework's tile-based execution model, combined with explicit dependency
tracking and causality enforcement, addresses the fundamental tension between
the finite-data assumptions prevalent in data science tooling and the
unbounded-data reality of production systems. By treating data as unbounded
streams from the outset and imposing strict correctness criteria, DataFlow
enables models to be developed, validated, and deployed within a unified
computational model.


\section{Challenges in time series machine learning}

Conducting machine learning on streaming time series data introduces additional
challenges beyond those encountered with machine learning on static data. These
challenges include overfitting, feature engineering, model evaluation, data
pipeline engineering. These issues are compounded by the dynamic nature of
streaming data.

In the following we list several problems in time-series machine learning
and how \emph{DataFlow} solves these problems.

\subsection{Prototype vs Production}

Data scientists typically operate under the assumption that all data is readily
available in a well-organized data frame format. Consequently, they often
develop a prototype based on assumptions about the temporal alignment of the
data. This prototype is then transformed into a production system by rewriting
the model in a more sophisticated and precise framework. This process may
involve using different programming languages or even having different teams
handle the translation. However, this approach can lead to significant issues:
  \begin{itemize}
  \item Converting the prototype in production requires time and effort
  \item The translation process may reveal bugs in the prototype.
  \item Assumptions made during the prototype phase might not align with real-world
    conditions.
  \item Discrepancies between the two model can result in additional work to
    implement and maintain two separate models for comparison.
  \end{itemize}

DataFlow addresses this issue by modeling systems as directed acyclic graphs
(DAGs), which naturally align with the dataflow and reactive models commonly
used in real-time systems. Each node within the graph consists of procedural
statements, similar to how a data scientist would design a non-streaming
system.

DataFlow enables the execution of a model in both batch and streaming modes
without requiring any modifications to the model code. In batch Mode, the graph
can be executed by processing data all at once or in segments, suitable for
historical or batch processing. In streaming mode, the graph can also be
executed as data is presented to the model, supporting real-time data processing.

\subsection{Frequency of Model Operation}

The required frequency of a model's operation often becomes clear only after
deployment. Adjustments may be necessary to balance time and memory
requirements with latency and throughput, which require changing the production
system implementation, with further waste of engineering effort.

DataFlow enables the same model description to operate at various frequencies
by simply adjusting the size of the computation tile. This flexibility
eliminates the need for any model modifications and allows models to be
always run at the optimal frequency requested by the application.

\subsection{Non-stationarity time series}

While the assumption of stationarity is useful, it typically only strictly
holds in theoretical fields such as physics and mathematics. In practical,
real-world applications, this assumption is rarely valid. Data scientists often
refer to data drift as an anomaly to explain poor performance on out-of-sample
data. However, in reality, data drift is the standard rather than the
exception.

All DataFlow components, including the data store, compute engine, and
deployment, are designed to natively handle time series processing. Each time
series can be either univariate or multivariate (such as panel data) and is
represented in a data frame format. DataFlow addresses non-stationarity by
enabling models to learn and predict continuously over time. This is achieved
using a specified look-back period or a weighting scheme for samples. These
parameters are treated as hyperparameters of the system, which can be tuned
like any other hyperparameters.

\subsection{Non-causal Bugs}

A common and challenging problem occurs when data scientists make incorrect
assumptions about data timestamps. This issue is often called "future peeking"
because the model inadvertently uses future information. A model is developed,
validated, and fine-tuned based on these incorrect assumptions, which are only
identified as errors after the system is deployed in production. This happens
because the data scientist lacks an early, independent evaluation to identify
the presence of non-causality.

Figure~\ref{fig:future_peeking} illustrates a concrete example of this bug
pattern.

\begin{figure}[ht]
\centering

\textbf{Code Comparison:}

\vspace{0.2cm}

\begin{minipage}{0.48\textwidth}
\textbf{WRONG: Future peeking}
\begin{lstlisting}[language=Python, style=redstyle]
# Uses tomorrow's price!
df['signal'] = df['price'].shift(-1)

# At time t, this accesses
# price at time t+1
\end{lstlisting}
\end{minipage}
\hfill
\begin{minipage}{0.48\textwidth}
\textbf{CORRECT: Causal computation}
\begin{lstlisting}[language=Python, style=greenstyle]
# Uses yesterday's price
df['signal'] = df['price'].shift(1)

# At time t, this accesses
# price at time t-1
\end{lstlisting}
\end{minipage}

\vspace{0.5cm}

\textbf{Timeline Visualization:}

\vspace{0.2cm}

\begin{tikzpicture}[scale=1.1]
  \draw[thick,->] (0,0) -- (12,0) node[right] {\textbf{Time}};

  \foreach \i/\t in {2/$t_{-2}$, 4/$t_{-1}$, 6/$t_0$, 8/$t_1$, 10/$t_2$} {
    \draw[thick] (\i, -0.1) -- (\i, 0.1);
    \node[below] at (\i, -0.1) {\small \t};
  }

  \fill[green!20] (0,0.5) rectangle (6,1.5);
  \node[align=center] at (3, 1) {\textbf{Knowledge Time} \\ Data available at $t_0$};

  \fill[red!20] (6,0.5) rectangle (11,1.5);
  \node[align=center] at (8.5, 1) {\textbf{Event Time (Future)} \\ Data not yet available};

  \draw[ultra thick, blue] (6, 0) -- (6, 2.5);
  \node[above, blue, align=center] at (6, 2.5) {\textbf{Decision Point} \\ (Now: $t_0$)};

  \draw[->, ultra thick, red, dashed] (6, 1.8) -- (8, 1.8);
  \node[above, red, align=center] at (7, 2.1) {\textbf{Bug!} \\ Accessing $t_1$};

  \draw[->, ultra thick, green!60!black, dashed] (6, 0.3) -- (4, 0.3);
  \node[below, green!60!black, align=center] at (5, -0.4) {\textbf{Correct:} Access $t_{-1}$};

\end{tikzpicture}

\caption{Future-peeking bug example. The incorrect code uses \texttt{shift(-1)}, which accesses data from the future (time $t+1$ when making decision at time $t$). This violates causality because the future price is not available at decision time. The correct approach uses \texttt{shift(1)} to access historical data. In backtesting, both approaches may appear to work, but only the causal version is valid for production deployment.}
\label{fig:future_peeking}
\end{figure}
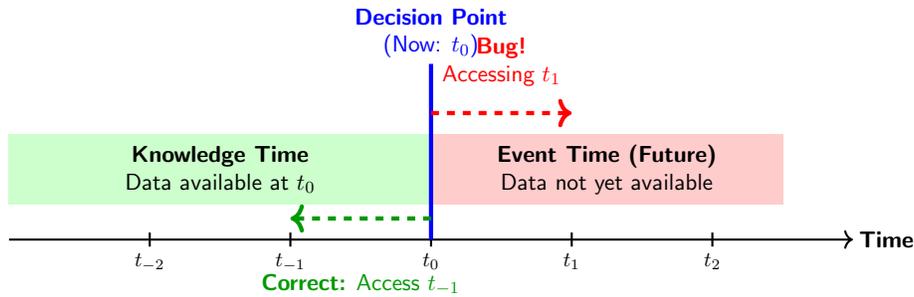

DataFlow offers precise time management. Each component automatically
monitors the time at which data becomes available at both its input and output.
This feature helps in easily identifying future peeking bugs, where a system
improperly uses data from the future, violating causality.
DataFlow ensures consistent model execution regardless of how data is fed,
provided the model adheres to strict causality. Testing frameworks are
available to compare batch and streaming results, enabling early detection of
any causality issues during the development process.

\subsection{Accurate Historical Simulation}

Implementing an accurate simulation of a system that processes time-series data
to evaluate its performance can be quite challenging. Ideally, the simulation
should replicate the exact setup that the system will use in production. For
example, to compute the profit-and-loss curve of a trading model based on
historical data, trades should be computed using only the information available
at those moments and should be simulated at the precise times they would have
occurred.

However, data scientists often create their own simplified simulation
frameworks alongside their prototypes. The various learning, validation, and
testing styles (such as in-sample-only and cross-validation) combined with
walkthrough simulations (like rolling window and expanding window) result in a
complex matrix of functionalities that need to be implemented, unit tested, and
maintained. 

DataFlow integrates these components once and for all into the framework, to
streamline the process and allow to running detailed simulation in the design phase.
DataFlow supports many different learning styles from different types of
runners (e.g., in-sample-only, in-sample vs out-of-sample, rolling learning,
cross-validation) together with 

\subsection{Debugging production systems}

The importance of comparing production systems with simulations is highlighted
by the following typical activities:

\begin{itemize}
  \item \textbf{Quality Assessment}: To evaluate the assumptions made during the
  design phase, ensuring that models perform consistently with both real-time and
  historical data. This process is often called "reconciliation between research
  and production models."
    
  \item \textbf{Model Evaluation}: To assess how models respond to changes in
    real-world conditions. For example, understanding the impact of data arriving
    one second later than expected.

  \item \textbf{Debugging}: Production systems occasionally fail and require offline
    debugging. To troubleshoot production models by extracting values at internal
    nodes to identify and resolve failures.
\end{itemize}

In many engineering setups, there is no systematic approach to conducting these
analyses. As a result, data scientists and engineers often rely on cumbersome
and time-consuming ad-hoc methods to compare models.

DataFlow solves the problem of observability and debuggability of models by
easily allowing to capture and replay the execution of any subset of nodes. In this way,
it is possible to easily observe and debug the behavior of a complex system.
This comes naturally from the fact that research and production systems are the
same, from the accurate timing semantic of the simulation kernel.

\subsection{Model performance}

Performance of research and production systems need to be tuned to accomplish
various tradeoff (e.g., fit in memory, maximize throughput, minimize latency).

DataFlow addresses the model performance and its tradeoffs with several techniques
including:

\begin{itemize}

    \item \textbf{Tiling}. DataFlow's framework allows streaming data with
    different tiling styles (e.g., across time, across features, and both), to
    minimize the amount of working memory needed for a given computation,
    increasing the chances of caching computation.

    \item \textbf{Incremental computation and caching}. Because the
    dependencies between nodes are explicitly tracked by DataFlow, only nodes
    that see a change of inputs or in the implementation code need to be
    recomputed, while the redundant computation can be automatically cached.

    \item \textbf{Maximum parallelism}. Because the computation is expressed as
    a DAG, the DataFlow execution scheduler can extract the maximum amount of
    parallelism and execute multiple nodes in parallel in a distributed
    fashion, minimizing latency and maximizing throughput of a computation.

    \item \textbf{Automatic vectorization}. DataFlow nodes can use all native
    vectorization approaches available in numpy and Pandas.
\end{itemize}

\subsection{Performing parameters analyis}

Tracking and sweeping parameter is a common challenge in machine learning
projects:
\begin{itemize}
    \item During the research phase, data scientists perform numerous
    simulations to explore the parameter space. It is crucial to
    systematically specify and track these parameter sweeps

    \item Once the model is finalized, the model parameters must be fixed and
    these parameters should be deployed alongside the production system
\end{itemize}

In a DataFlow system it is easy to generate variations of DAGs using a
declarative approach to facilitate the adjustment of multiple parameters to
comprehensively explore the design space.

\begin{itemize}
  \item Each parameter is governed by a specific value within a configuration.
    This implies that the configuration space is equivalent to the space of
    DataFlow systems: each configuration uniquely defines a DataFlow system,
    and each DataFlow system is completely described by a configuration

  \item A configuration is organized as a nested dictionary, reflecting the
    structure of the Directed Acyclic Graph (DAG). This organization enables
    straightforward navigation through its structure
\end{itemize}

\subsection{Challenges in time series MLOps}

The complexity of Machine Learning Operations (MLOps) arises from managing the
full lifecycle of ML models in production. This includes not just training and
evaluation, but also deployment, monitoring, and governance.

DataFlow provides solutions to MLOps challenges fully integrated in the
framework.

\begin{itemize}
  \item \textbf{Model Serialization}. Once a Directed Acyclic Graph (DAG) is
  fitted, it can be saved to disk. This serialized model can later be loaded
  and used for making predictions in a production environment

  \item \textbf{Deployment and Monitoring}. Any DataFlow system is deployable
  as a Docker container. This includes the development system, which also
  operates within a Docker container. This setup facilitates the development and
  testing of systems on both cloud platforms (such as AWS) and local machines.
  Airflow is natively utilized for scheduling and monitoring long-running
  DataFlow systems.
\end{itemize}

\section{Semantics}

\subsection{Nodes and DAGs}

\begin{definition}[Time index]
  We fix a totally ordered discrete time index set $(\mathbb{T},\leq)$, which for
  concreteness one may take as $\mathbb{Z}$ or any subset thereof.
\end{definition}

\begin{definition}[Time intervals]
  We write $[s,t] = \{ u \in \mathbb{T} : s \leq u \leq t \}$ for \emph{finite
  time intervals}.
\end{definition}

\begin{definition}[Dataframe as unit of computation]
The basic unit of computation of each node is a ``dataframe". Each node takes
multiple dataframes through its inputs, and emits one or more dataframes as
outputs.

In mathematical terms, a dataframe can be described as a two-dimensional (or
more, as described below) labeled data structure, similar to an array but with
more flexible features.

A dataframe $\df$ can be represented as:

\[
  \df = \left[
  \begin{array}{cccc}
    a_{11} & a_{12} & \cdots & a_{1n} \\
    a_{21} & a_{22} & \cdots & a_{2n} \\
    \vdots & \vdots & \ddots & \vdots \\
    a_{m1} & a_{m2} & \cdots & a_{mn} \\
  \end{array}
  \right]
\]

where:
\begin{itemize}
  \item $m$ is the number of rows (observations).

  \item $n$ is the number of columns (variables).

  \item $a_{ij}$ represents the element of the Dataframe in the $i$-th row and
    $j$-th column.
\end{itemize}
\end{definition}

\begin{remark}
Some characteristics of dataframes are:
\begin{enumerate}
  \item Labeled axes:
    \begin{itemize}
      \item Rows and columns are labeled, typically with strings, but labels can
        be of any hashable type.

      \item Rows are often referred to as indices and columns as column headers.
    \end{itemize}

  \item Heterogeneous data types:
    \begin{itemize}
      \item Each column $j$ can have a distinct data type, denoted as
        $\dtype_{j}$

      \item Common data types include integers, floats, strings, and datetime
        objects.
    \end{itemize}

  \item Mutable size:
    \begin{itemize}
      \item Rows and columns can be added or removed, meaning that the size of
        $\df$ is mutable.

      \item This adds to the flexibility as compared to fixed-size arrays.
    \end{itemize}

  \item Alignment and operations:
    \begin{itemize}
      \item Dataframes support alignment and arithmetic operations along rows and
        columns.

      \item Operations are often element-wise but can be customized with aggregation
        functions.
    \end{itemize}

  \item Missing data handling:
    \begin{itemize}
      \item Dataframes can represent missing data through $\NaN$ and $\None$ objects.

      \item Dataframes provide tools to handle, impute, or remove missing data.
    \end{itemize}

  \item Multidimensionality:
    \begin{itemize}
      \item Tensor-like objects are supported through row or column ``multi-indices".

      \item If time is the primary key, then multi-index columns can be used
        to support panel or higher-dimensional data at each timestamp.
    \end{itemize}
\end{enumerate}
\end{remark}

\begin{definition}[DataFlow data format]


As explained in XYZ, raw data from \verb|DataPull| is stored in a ``long
format'', where the data is conditioned on the asset (e.g., \verb|full_symbol|).
\verb|DataFlow| transforms this into a multi-index wide format where the index
is a timestamp, the outermost column index is the feature, and the innermost
column index is the asset.

\begin{figure}[ht]
\centering
\begin{tabular}{c c c}
\begin{tabular}{|l|l|r|}
\hline
\multicolumn{3}{|c|}{\textbf{Long Format (Database Style)}} \\
\hline
\textbf{timestamp} & \textbf{symbol} & \textbf{close} \\
\hline
t\_1 & ADA & 2.76 \\
t\_1 & AVAX & 39.5 \\
t\_2 & ADA & 2.78 \\
t\_2 & AVAX & 39.3 \\
\hline
\end{tabular}
&
\begin{tikzpicture}
  \draw[->, >=stealth, line width=1.5pt] (0,0) -- (1.5,0) node[midway, above] {\texttt{pivot}};
\end{tikzpicture}
&
\begin{tabular}{|l|r|r|}
\hline
\multicolumn{3}{|c|}{\textbf{Wide Multi-Index Format}} \\
\hline
 & \multicolumn{2}{c|}{\textbf{close}} \\
\cline{2-3}
\textbf{timestamp} & \textbf{ADA} & \textbf{AVAX} \\
\hline
t\_1 & 2.76 & 39.5 \\
t\_2 & 2.78 & 39.3 \\
\hline
\end{tabular}
\end{tabular}

\vspace{0.3cm}

The transformation is accomplished using pandas pivot operations:
\begin{lstlisting}[language=Python, basicstyle=\ttfamily\small]
# Transform from long to wide multi-index format
df_wide = df_long.pivot(
    index='timestamp',
    columns='full_symbol',
    values=['close', 'open', 'high', 'low']
)
\end{lstlisting}

\caption{Transformation from long format (database style) to wide multi-index format (DataFlow style). The wide format enables vectorized operations across assets.}
\label{fig:data_format_transformation}
\end{figure}

The reason for this convention is that typically features are computed in a uni-variate
fashion (e.g., asset by asset), and DataFlow can vectorize computation over
the assets by expressing operations in terms of the features. E.g., we can express
a feature as

\begin{lstlisting}[language=Python]
df["close", "open"].max() - df["high"]).shift(2)
\end{lstlisting}
\end{definition}

\begin{definition}[Stream dataframe]
Intuitively, a stream dataframe is a time-indexed sequence of rows that may be
arbitrarily long in the past and is typically extended online as new rows
arrive, i.e., a dataframe with a potentially unbounded number of rows.  

  Let $\mathcal{R}$ denote the set of possible dataframe rows (records).  A
  \emph{stream dataframe} is a partial function
  \[
    X : \mathbb{T}_{\leq t_{\max}} \to \mathcal{R},
  \]
  where $\mathbb{T}_{\leq t_{\max}} = \{ t \in \mathbb{T} : t \leq t_{\max} \}$
  for some $t_{\max} \in \mathbb{T}$.  The value $t_{\max}$ is the
  \emph{current time} of the stream dataframe.
\end{definition}

We may represent a stream dataframe up to time $t_0$ visually as in Table
\ref{table:stream_dataframe}.


\begin{table}[ht]
    \caption{Stream dataframe up to time $t_0$}
    \centering
    \begin{tabular}[b]{|c|c|c|c|c|}
        \hline
        $\cdots$ & $\cdots$ & $\cdots$ & $\cdots$ & $\cdots$ \\
        \hline
        $t_{-n}$ & & & & \\
        \hline
        $\cdots$ & $\cdots$ & $\cdots$ & $\cdots$ & $\cdots$ \\
        \hline
        $t_{-3}$ & & & & \\
        \hline
        $t_{-2}$ & & & & \\
        \hline
        $t_{-1}$ & & & & \\
        \hline
        $t_0$ & & & & \\
        \hline
    \end{tabular}
    \label{table:stream_dataframe}
\end{table}

\begin{definition}[Computational Node]
  A \emph{computational node} $N$ with $m$ inputs and $n$ outputs is specified
  by a family of functions
  \[
    F_{N,[s,t]} :
      \prod_{i=1}^m \mathcal{R}^{[s,t]}
      \;\longrightarrow\;
      \prod_{j=1}^n \mathcal{R}^{[s,t]},
  \]
  indexed by finite intervals $[s,t] \subseteq \mathbb{T}$, $s \leq t$.  Given
  $m$ input stream dataframes $X^{(1)},\dots,X^{(m)}$ whose domains contain
  $[s,t]$, the node produces $n$ output stream dataframes
  $Y^{(1)},\dots,Y^{(n)}$ on $[s,t]$ via
  \[
    (Y^{(1)}|_{[s,t]},\dots,Y^{(n)}|_{[s,t]})
    \;=\;
    F_{N,[s,t]}\Bigl(
      X^{(1)}|_{[s,t]},\dots,X^{(m)}|_{[s,t]}
    \Bigr).
  \]
\end{definition}

The number of inputs $m$ and outputs $n$ is node-specific; either may be zero.

As in the general paradigm of dataflow computing, DataFlow represents
computation as a directed graph. Nodes represent computation and edges the
flow of data from one computational node to another. In DataFlow, each node
accepts one or more (fixed number of) tables as input and emits one or more
(fixed number of) tables as output. Nodes may either be stateless or
stateful.
A computational \emph{node} receives as input a (per-node) predefined number
(which may be zero) of stream dataframes and emits as output a predefined
number of stream dataframes. The number of outputs may be zero and may differ
from the number of inputs.

A computation node has:
\begin{itemize}
  \item a fixed number of inputs

  \item a fixed number of outputs

  \item a unique node id (aka \verb|nid|)

  \item a (optional) state
\end{itemize}

Inputs and outputs to a computational nodes are tables, represented in the
current implementation as \verb|Pandas| dataframes. A node uses the inputs to compute
the output (e.g., using \verb|Pandas| and \verb|Sklearn| libraries). A node
can execute in multiple ``phases'', referred to through the corresponding methods
called on the DAG (e.g., \verb|fit|, \verb|predict|, \verb|save_state|, \verb|load_state|).

A node stores an output value for each output and method name.


\begin{remark}[Examples of Computational Node in Financial Trading Systems]

Examples of operations that may be performed by nodes include:

\begin{itemize}
  \item Loading data (e.g., market or alternative data)

  \item Resampling data bars (e.g., OHLCV data, tick data in finance)

  \item Computing rolling average (e.g., TWAP/VWAP, volatility of returns)

  \item Adjusting returns by volatility

  \item Applying FIR filters to signals

  \item Performing per-feature operations, each requiring multiple features

  \item Performing cross-sectional operations (e.g., factor residualization, Gaussian
    ranking)

  \item Learning/applying a machine learning model (e.g., using sklearn)

  \item Applying custom (user-written) functions to data
\end{itemize}

Further examples include nodes that maintain relevant trading state, or that interact
with an external environment:

\begin{itemize}
  \item Updating and processing current positions

  \item Performing portfolio optimization

  \item Generating trading orders

  \item Submitting orders to an API
\end{itemize}
\end{remark}

\begin{definition}[Causal Computation]
A distinguishing feature of DataFlow is how time series are handled. Within
each node, computation must be \emph{causal}.

To introduce this notion, we consider the case where the (row) index of the
tabular data is a time-based index. For convenience, we assume that the index
is sorted according to the natural time ordering. We note that while data
usually has multiple timestamps associated with it (e.g., event time,
timestamps for various stages of processing, a final ``knowledge timestamp" for
the system), for the purposes of DataFlow, a single notion of time is chosen as
primary key.
\end{definition}

\begin{definition}[Knowledge Time]
Next, we posit the existence of a ``simulation clock", against which the data
timestamps may be compared. In the case of real-time processing, the simulation
clock will coincide with the system clock. In the case of simulation, the
simulation clock is entirely independent of the system clock. In DataFlow, the
simulation clock has initial and terminal times (except in real-time
processing, where there is no terminal time) and advances according to a
schedule. If, at any point in simulation time, a DataFlow node computation only
depends upon data with timestamps earlier than the simulation time, then the
computation is said to be causal.
\end{definition}

\begin{remark}[Non-causal computations]
Of course, any real-time system must be causal (any computation it performs necessarily
only utilizes data available to it at the time of the computation). In many applications,
the correctness of a computation performed at a certain point in simulation
time is dependent upon having the complete set of data up to that point. While
DataFlow relegates that function to DataPull in the case of real-time pipelines,
violations of causal computation, either through human error or through data delays,
are detectable in DataFlow through its testing and replay framework.

Note that there are some time series processing methods that prima facie are
not causal. An example of this is fixed-interval Kalman smoothing, which, to
calculate the smoothed data point at a particular point in time, requires data
from the adjacent future time interval of fixed length. Provided the dependence
upon future data is bounded in time, such techniques may be included in a
causal framework through the proper choice of primary key timestamp. In the
case of Kalman smoothing, this choice of timestamp would have the apparent
effect of yielding smoothed data points with a delay equal to the
fixed-interval of time required for the smoothing.
\end{remark}

\begin{definition}[Incremental Computation]
While latency-sensitive real-time systems are expected to perform computation
incrementally, i.e., with every data update, an important observation to make
is that, in many cases, causal computation need not be performed incrementally.
For example, calculating point-to-point percentage change in a time series
(such as calculating returns from prices) is causal, but may be vectorized over
a batch. In practice, this means that simulation or batch computation need not
be executed in the same way that a real-time system is.
\end{definition}

\begin{definition}[DataFlow DAG]
  A \emph{DataFlow DAG} is a directed acyclic graph
  $G = (V,E)$ where:
  \begin{itemize}
    \item each vertex $v \in V$ is a computational node,
    \item each edge $e = (u,v) \in E$ connects an output stream of $u$ to an
          input stream of $v$.
  \end{itemize}
  A subset of incoming edges to $G$ is designated as \emph{graph inputs}, and a
  subset of outgoing edges as \emph{graph outputs}.
\end{definition}

Nodes are organized by the user into a directed acyclic graph (DAG).

%
%
\begin{figure}[!ht]
  \includegraphics[width=\linewidth]{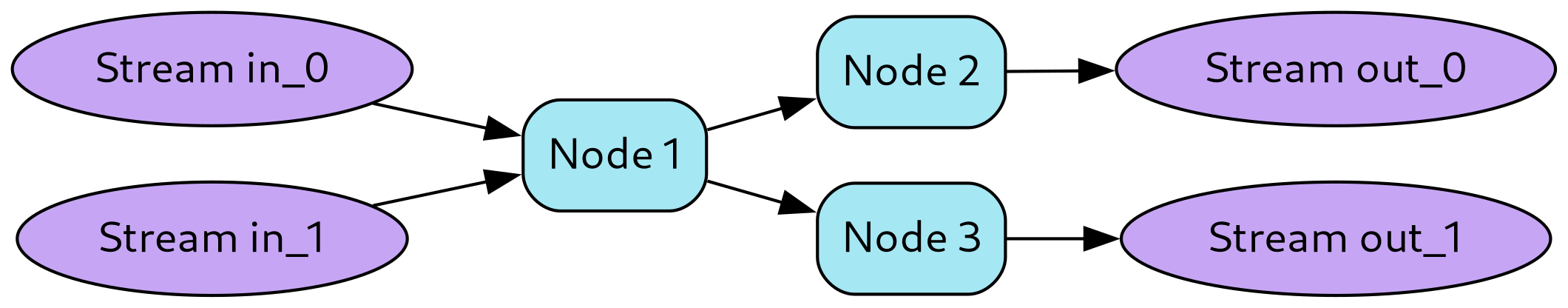}
\end{figure}

\begin{definition}[DataFlow DAG]
  A \emph{DataFlow DAG} is a directed acyclic graph
  $G = (V,E)$ where:
  \begin{itemize}
    \item each vertex $v \in V$ is a computational node,
    \item each edge $e = (u,v) \in E$ connects an output stream of $u$ to an
          input stream of $v$.
  \end{itemize}
  A subset of incoming edges to $G$ is designated as \emph{graph inputs}, and a
  subset of outgoing edges as \emph{graph outputs}.
\end{definition}

Nodes are organized by the user into a directed acyclic graph (DAG).

\subsection{Context windows and point-in-time idempotency}
\label{subsec:context-windows}

We next formalize the temporal consistency condition imposed on each node.

\begin{definition}[Context window and point-in-time idempotency]
  Let $N$ be a node with associated maps $F_{N,[s,t]}$.  Fix $L \in \mathbb{N}$.
  \begin{enumerate}
    \item We say that $L$ is a \emph{context window length} for $N$ if the
      following holds: for all $t \in \mathbb{T}$ and all intervals
      $[s_1,t]$, $[s_2,t]$ satisfying
      \[
        t - s_1 + 1 \;\geq\; L,
        \qquad
        t - s_2 + 1 \;\geq\; L,
      \]
      for all input stream dataframes $X^{(1)},\dots,X^{(m)}$ we have
      \[
        (Y^{(1)}_{(1)},\dots,Y^{(n)}_{(1)})
        =
        F_{N,[s_1,t]}(X^{(1)}|_{[s_1,t]},\dots,X^{(m)}|_{[s_1,t]}),
      \]
      \[
        (Y^{(1)}_{(2)},\dots,Y^{(n)}_{(2)})
        =
        F_{N,[s_2,t]}(X^{(1)}|_{[s_2,t]},\dots,X^{(m)}|_{[s_2,t]}),
      \]
      implies
      \[
        Y^{(j)}_{(1)}(t) \;=\; Y^{(j)}_{(2)}(t)
        \quad\text{for all } j \in \{1,\dots,n\}.
      \]
    \item If such an $L$ exists, we say that $N$ is \emph{point-in-time
      idempotent}.  The minimal such $L$ (if it exists) is called the
      \emph{minimal context window length} of $N$ and denoted $L_N$.
  \end{enumerate}
\end{definition}

Intuitively, once the incoming window is at least $L$ steps long, the
latest-in-time output at $t$ is independent of how far back in time the window
starts.  This is exactly the informal notion of \emph{point-in-time
idempotency}.

Figure~\ref{fig:pit_idempotency} illustrates this concept with a concrete example
using a moving average with context window $L=3$.

\begin{figure}[ht]
\centering
\begin{tikzpicture}[scale=0.8]
  \node[anchor=west] at (0, 4.5) {\textbf{Window length = 5}};

  \foreach \i/\val in {0/10, 1/12, 2/11, 3/13, 4/14} {
    \pgfmathsetmacro{\xpos}{\i * 1.2}
    \ifnum\i<2
      \draw[thick] (\xpos, 3) rectangle +(1, 0.8);
      \node at (\xpos + 0.5, 3.4) {$\val$};
      \node[below] at (\xpos + 0.5, 3) {\tiny $t_{-\pgfmathparse{int(4-\i)}\pgfmathresult}$};
    \else
      \draw[thick, fill=blue!20] (\xpos, 3) rectangle +(1, 0.8);
      \node at (\xpos + 0.5, 3.4) {$\val$};
      \node[below] at (\xpos + 0.5, 3) {\tiny $t_{-\pgfmathparse{int(4-\i)}\pgfmathresult}$};
    \fi
  }

  \draw[<->, blue, thick] (2.4, 2.5) -- (6, 2.5);
  \node[blue, below] at (4.2, 2.5) {\small Context window $L=3$};

  \draw[->, thick] (3, 1.8) -- (3, 1.2);
  \node[fill=orange!30, draw, thick] at (3, 0.6) {Output: $\frac{11+13+14}{3} = 12.67$};
  \node[below] at (3, 0.3) {$t_0$};

  \node[anchor=west] at (8, 4.5) {\textbf{Window length = 10}};

  \foreach \i/\val in {0/8, 1/9, 2/10, 3/11, 4/12, 5/10, 6/12, 7/11, 8/13, 9/14} {
    \pgfmathsetmacro{\xpos}{8 + \i * 0.7}
    \ifnum\i<7
      \draw[thin] (\xpos, 3) rectangle +(0.6, 0.8);
      \node[font=\tiny] at (\xpos + 0.3, 3.4) {$\val$};
    \else
      \draw[thick, fill=blue!20] (\xpos, 3) rectangle +(0.6, 0.8);
      \node[font=\tiny] at (\xpos + 0.3, 3.4) {$\val$};
    \fi
  }

  \draw[<->, blue, thick] (12.9, 2.5) -- (15, 2.5);
  \node[blue, below] at (13.95, 2.5) {\small Context $L=3$};

  \draw[->, thick] (13.5, 1.8) -- (13.5, 1.2);
  \node[fill=orange!30, draw, thick] at (13.5, 0.6) {Output: $\frac{11+13+14}{3} = 12.67$};
  \node[below] at (13.5, 0.3) {$t_0$};

  \node[font=\Large] at (7, 0.6) {$=$};

\end{tikzpicture}
\caption{Point-in-time idempotency demonstration. Despite different input window lengths (5 vs.\ 10 time steps), both computations produce identical output at $t_0$ because they depend only on the context window of length $L=3$ (shown in blue). The moving average uses only the last 3 values $\{11, 13, 14\}$ in both cases.}
\label{fig:pit_idempotency}
\end{figure}
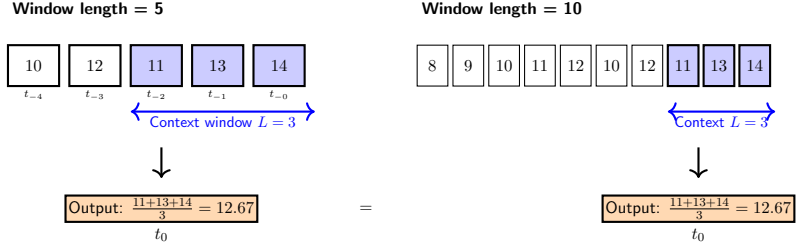

\begin{remark}[Tilability as a computational property]
  Point-in-time idempotency is closely related to the property of
  \emph{tilability}: a computation is tilable if its output does not depend on
  how input data is partitioned along the temporal axis, provided sufficient
  historical context is maintained. More precisely, for tiles $X_1$ on
  $[s_1,t_1]$ and $X_2$ on $[s_2,t_2]$ with $t_1 < s_2$, if we define the
  concatenation $X_1 \cup X_2$ as the stream covering $[s_1,t_2]$, then a
  computation $F$ satisfies tilability if
  \[
    F(X_1 \cup X_2)|_{[t_1-L+1,t_2]}
    =
    F(X_1)|_{[t_1-L+1,t_1]} \cup F(X_2)|_{[t_2-L+1,t_2]},
  \]
  where $L$ is the context window length and the restriction to output intervals
  ensures we only compare outputs that have sufficient context. Point-in-time
  idempotency guarantees that this equality holds at each time point
  independently, which in turn ensures tilability. This property enables a node
  to process data in arbitrary temporal chunks while producing consistent
  results, a fundamental requirement for unified batch-streaming execution.
\end{remark}

Another notable time series element of the DataFlow approach to computation is
the notion of a \emph{lookback period}. If computation at a certain point in
simulation time, say $t_{1}$, does not require timestamped data with
timestamps earlier than $t_{0}< t_{1}$, then the lookback period at time $t_{1}$
is $t_{1}- t_{0}> 0$. In many cases, this lookback period is independent of the
time $t_{1}$, in which case we may refer to the quantity as \emph{the} lookback
period. More generally, we define the lookback period to be the supremum of lookback
periods over all times $t_{1}$. The lookback period places an effective lower
bound on the amount of data required by a node to ensure computational
correctness and has implications for how a DataFlow graph may be executed. Certain
operations, e.g., exponentially weighted moving averages, have an effectively infinite
lookback period. Through careful state management and bookkeeping, such operations
may also be handled in DataFlow.

\begin{lemma}[Local dependence on trailing window]
  \label{lem:local-dependence}
  Let $N$ be a point-in-time idempotent node with context window length $L$.
  Then for each output index $j \in \{1,\dots,n\}$ there exists a function
  \[
    g_j :
      \prod_{i=1}^m \mathcal{R}^{L}
      \;\longrightarrow\;
      \mathcal{R}
  \]
  such that for any time $t \in \mathbb{T}$ and any interval $[s,t]$ with
  $t-s+1 \geq L$, if applying $N$ on $[s,t]$ yields outputs
  $(Y^{(1)},\dots,Y^{(n)})$, then
  \[
    Y^{(j)}(t)
    \;=\;
    g_j\Bigl(
      (X^{(1)}(t-L+1),\dots,X^{(1)}(t)),\dots,
      (X^{(m)}(t-L+1),\dots,X^{(m)}(t))
    \Bigr).
  \]
\end{lemma}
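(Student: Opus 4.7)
The plan is to extract $g_j$ directly from the minimal-window computation, using point-in-time idempotency to show that extending the window does not change the value at $t$. Fix $t \in \mathbb{T}$ and take $s_1 = t - L + 1$, so that $[s_1, t]$ has length exactly $L$. For any other interval $[s_2, t]$ with $t - s_2 + 1 \geq L$, both $[s_1, t]$ and $[s_2, t]$ satisfy the hypotheses of the context window definition. Applying point-in-time idempotency yields that the $j$-th output at $t$ obtained from $F_{N, [s_2, t]}$ coincides with the one obtained from $F_{N, [s_1, t]}$. Hence $Y^{(j)}(t)$ is determined entirely by the restrictions $X^{(i)}|_{[t-L+1, t]}$ of the inputs to the trailing window of length $L$.

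Next I would define the candidate function. Identifying each $X^{(i)}|_{[t-L+1, t]}$ with an element of $\mathcal{R}^L$ (via the bijection sending the record at time $t - L + 1 + k$ to the $k$-th coordinate, $0 \leq k \leq L-1$), I set
\[
g_j\bigl(\xi^{(1)}, \dots, \xi^{(m)}\bigr)
\;\defeq\;
\pi_{t} \circ F_{N, [t - L + 1, t]}\bigl(\xi^{(1)}, \dots, \xi^{(m)}\bigr)_{j},
\]
where $\pi_t$ is the projection onto the row indexed by $t$. This is well-defined on the minimal window because $F_{N, [t - L + 1, t]}$ is a single function, and by the previous paragraph its value at $t$ agrees with that of $F_{N, [s, t]}$ whenever $t - s + 1 \geq L$. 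Substituting back gives the claimed representation.

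The main obstacle is showing that $g_j$ does not depend on the choice of $t$, since the statement asks for a single function on $\prod_i \mathcal{R}^L$. This is the one place where the argument rests on a modeling convention rather than on formal hypotheses already made explicit: namely, that the family $\{F_{N, [s, t]}\}$ is time-translation invariant, so that $F_{N, [t - L + 1, t]}$, viewed as a map on $L$-tuples of records, is the same function for every $t$. Under this convention (implicit throughout the paper's definition of a node as a time-series operator), the $t$-parametrized functions constructed above coincide and define the desired $g_j$. In the plan I would state this invariance explicitly as a standing assumption on the node semantics, and then the lemma reduces to the two observations above: reduction to the minimal window via point-in-time idempotency, and identification of $g_j$ as the last-coordinate projection of the minimal-window computation.
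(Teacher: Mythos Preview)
Your approach is essentially the same as the paper's: define $g_j$ by evaluating the node on a window of exactly length $L$ and then use point-in-time idempotency to show that longer windows produce the same value at the right endpoint. The paper also chooses an arbitrary $t$, builds input streams on $[t-L+1,t]$ from the given $L$-tuples, and reads off $Y^{(j)}(t)$.

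Your explicit flagging of time-translation invariance is well taken. The paper's well-definedness argument considers an alternative interval $[s',t']$ of length $L$ encoding the same $L$-tuples, extends both intervals backward, and invokes point-in-time idempotency; but that definition only compares outputs at the \emph{same} endpoint $t$ for different left endpoints, not outputs at two different endpoints $t$ and $t'$. So the paper is tacitly relying on the same translation-invariance convention you isolate. Your proposal to make this a standing assumption on the node family $\{F_{N,[s,t]}\}$ is the cleaner resolution; the paper simply leaves it implicit.
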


\begin{proof}
  Fix $j$.  For any tuple
  \[
    \bigl(
      (r^{(1)}_1,\dots,r^{(1)}_L),\dots,
      (r^{(m)}_1,\dots,r^{(m)}_L)
    \bigr)
    \in \prod_{i=1}^m \mathcal{R}^{L},
  \]
  choose any $t \in \mathbb{T}$ and interval $[s,t]$ with $t-s+1=L$.  Define
  input streams on $[s,t]$ by
  \[
    X^{(i)}(u) = r^{(i)}_{u-s+1},\quad u \in [s,t],\; i=1,\dots,m.
  \]
  Apply $N$ on $[s,t]$ to obtain outputs
  $(Y^{(1)},\dots,Y^{(n)}) = F_{N,[s,t]}(X^{(1)},\dots,X^{(m)})$ and set
  \[
    g_j\bigl( (r^{(1)}_1,\dots,r^{(1)}_L),\dots,(r^{(m)}_1,\dots,r^{(m)}_L) \bigr)
    \;:=\;
    Y^{(j)}(t).
  \]

  We must show $g_j$ is well-defined.  Consider any other choice of interval
  $[s',t']$ with $t'-s'+1=L$ and streams $X'^{(i)}$ that encode the same
  $L$-tuples.  Extend the original and alternative intervals arbitrarily
  backwards to $[s'',t]$ and $[s''',t']$, with $s'' \leq s$ and $s''' \leq s'$,
  and extend the streams accordingly.  By point-in-time idempotency with window
  $L$, the output at time $t$ (resp.\ $t'$) is invariant under the choice of
  how far back the window starts, provided its length is at least $L$.  Thus
  every such construction yields the same $Y^{(j)}(t)$, so $g_j$ is
  well-defined.

  Now take arbitrary input streams $X^{(1)},\dots,X^{(m)}$ and an interval
  $[s,t]$ with $t-s+1 \geq L$.  Let
  \[
    r^{(i)}_k := X^{(i)}(t-L+k),\quad k=1,\dots,L,\; i=1,\dots,m,
  \]
  and construct streams as above on an interval of length $L$ ending at $t$.
  Point-in-time idempotency implies that $Y^{(j)}(t)$ is the same as if we had
  started the window at $t-L+1$, hence it equals $g_j$ applied to the trailing
  $L$ tuples, as claimed.
\end{proof}

Lemma~\ref{lem:local-dependence} formalizes the intuitive claim that, once the
context window is ``filled,'' the output at time $t$ depends only on the last
$L$ time points of each input.

\begin{remark}[Temporal tiling in practice]
  \label{rem:temporal-tiling-examples}
  The tilability property has direct implications for how dataframe operations
  may be partitioned in time. We illustrate with two examples:

  \textbf{Example 1: Stateless operations.}
  Consider a node implementing element-wise addition. Let
  \[
    X_1 =
    \begin{array}{c|cc}
      \text{timestamp} & a & b \\
      \hline
      t_1 & 10 & 20 \\
      t_2 & 15 & 25
    \end{array},
    \quad
    X_2 =
    \begin{array}{c|cc}
      \text{timestamp} & a & b \\
      \hline
      t_3 & 12 & 22 \\
      t_4 & 18 & 28
    \end{array}.
  \]
  The operation $F(X) = X + X$ has context window $L=1$ (no memory). Processing
  the full stream $X_1 \cup X_2$ yields the same result as processing each tile
  independently and concatenating:
  \[
    F(X_1 \cup X_2) = F(X_1) \cup F(X_2).
  \]

  \textbf{Example 2: Operations with memory.}
  Consider a differencing operation $F(X)_t = X_t - X_{t-1}$, which has context
  window $L=2$. To correctly process tile $X_2$ starting at $t_3$, we must
  include the final observation from the previous tile:
  \[
    \tilde{X}_2 =
    \begin{array}{c|cc}
      \text{timestamp} & a & b \\
      \hline
      t_2 & 15 & 25 \\
      t_3 & 12 & 22 \\
      t_4 & 18 & 28
    \end{array}.
  \]
  Applying $F$ to the extended tile $\tilde{X}_2$ and restricting output to
  $[t_3,t_4]$ produces the same result as if the entire stream had been
  processed at once. This requirement---extending tiles with sufficient
  context---is central to DataFlow's tiling mechanism.

  More generally, operations with finite memory (such as rolling averages,
  exponential moving averages with recursive formulations, or FIR filters) can
  be made tilable by ensuring that each tile includes at least $L$ preceding
  observations, where $L$ is the node's context window.
\end{remark}

\subsection{Stream dataframes and tiles}

We now formalize \emph{tiles}, which are contiguous windows of the stream
dataframe that are at least as long as the context window.

\begin{definition}[Tile]
  Let $X$ be a stream dataframe and let $L \in \mathbb{N}$.  A \emph{tile} of
  length $\tau \in \mathbb{N}$ ending at time $t$ is the restriction
  $X|_{[t-\tau+1,t]}$ for some $\tau \geq L$.  The interval $[t-\tau+1,t]$ is
  the \emph{temporal extent} of the tile and $t$ its \emph{right endpoint}.
\end{definition}

In Table \ref{table:tile} we depict a stream dataframe tile of length $4$
ending at time point $t_0$, with context window of length $3$.  We use blue
text coloring to indicate the indices of inputs that are included in the
context window (i.e., the last $L=3$ points).

\begin{table}[ht]
    \caption{A tile of length 4}
    \centering
    \begin{tabular}[b]{|c|c|c|c|c|}
        \hline
        $t_{-3}$ & & & & \\
        \hline
        $\textcolor{blue}{t_{-2}}$ & & & & \\
        \hline
        $\textcolor{blue}{t_{-1}}$ & & & & \\
        \hline
        $\textcolor{blue}{t_0}$ & & & & \\
        \hline
    \end{tabular}
    \label{table:tile}
\end{table}

\begin{definition}[Ideal pointwise semantics]
  Let $N$ be a point-in-time idempotent node with context window length $L$,
  and let $X^{(1)},\dots,X^{(m)}$ be input streams.  For any $t \in \mathbb{T}$
  and any $s \leq t$ with $t-s+1 \geq L$, let
  \[
    (Y^{(1)}_{[s,t]},\dots,Y^{(n)}_{[s,t]})
    =
    F_{N,[s,t]}(X^{(1)}|_{[s,t]},\dots,X^{(m)}|_{[s,t]}).
  \]
  By point-in-time idempotency, $Y^{(j)}_{[s,t]}(t)$ is independent of $s$, as
  long as $t-s+1 \geq L$.  We define the \emph{ideal output} of $N$ at time $t$
  by
  \[
    Y^{(j)}(t) := Y^{(j)}_{[s,t]}(t)
    \quad\text{for any } s \leq t-L+1.
  \]
\end{definition}

We now show that any tile whose length is at least the context window length
produces the correct (ideal) output at its right endpoint.

\begin{theorem}[Single-step tile correctness (streaming mode)]
  \label{thm:single-step-tile-correctness}
  Let $N$ be a point-in-time idempotent node with context window length $L$.
  Fix any tile length $\tau \geq L$, and let $X^{(1)},\dots,X^{(m)}$ be input
  streams.  For a time $t$ such that $[t-\tau+1,t]$ is contained in the domain
  of each input, define
  \[
    (Y^{(1)}_{[t-\tau+1,t]},\dots,Y^{(n)}_{[t-\tau+1,t]})
    =
    F_{N,[t-\tau+1,t]}(X^{(1)}|_{[t-\tau+1,t]},\dots,X^{(m)}|_{[t-\tau+1,t]}).
  \]
  Then for all $j \in \{1,\dots,n\}$,
  \[
    Y^{(j)}_{[t-\tau+1,t]}(t) \;=\; Y^{(j)}(t),
  \]
  i.e., the output at the right endpoint $t$ computed from the tile coincides
  with the ideal node output at $t$.
\end{theorem}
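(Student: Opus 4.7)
The plan is to observe that this statement is essentially a direct unpacking of two definitions: the definition of point-in-time idempotency (which is the hypothesis on $N$) and the definition of the ideal output $Y^{(j)}(t)$. The theorem simply checks that the tile $[t-\tau+1,t]$ has the required length to qualify as one of the intervals whose right-endpoint output equals the ideal output.

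Concretely, I would first note that $s := t - \tau + 1$ satisfies $t - s + 1 = \tau \geq L$, so the starting point $s$ of the tile lies at or before $t - L + 1$. This means $[s,t]$ is an admissible interval in the sense required by the definition of the ideal output: the ideal value $Y^{(j)}(t)$ was defined as $Y^{(j)}_{[s',t]}(t)$ for any $s' \leq t - L + 1$, where by point-in-time idempotency this quantity does not depend on the particular choice of $s'$. Applying this with $s' = s = t - \tau + 1$ yields
\[
Y^{(j)}_{[t-\tau+1,t]}(t) \;=\; Y^{(j)}(t)
\]
for each output index $j \in \{1,\dots,n\}$, which is exactly the conclusion.

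There is no real obstacle here; the only thing worth being careful about is the indexing convention $t - s + 1$ for the length of $[s,t]$, which must be reconciled with the inequality $\tau \geq L$ before invoking point-in-time idempotency. Once that bookkeeping is done, the theorem follows in a single line. The substantive content of the result is not in the proof but in its interpretation: it is the bridge between the abstract definitional property (idempotency with respect to window extension) and the operational statement that any sufficiently long tile produces the streaming-correct output at its right endpoint, which in turn sets up the two-tile result referenced in the introduction.
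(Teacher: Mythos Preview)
Your proposal is correct and follows essentially the same approach as the paper: both arguments set $s := t-\tau+1$, verify that $t-s+1 = \tau \geq L$ so that $s \leq t-L+1$, and then invoke the definition of the ideal output to conclude $Y^{(j)}(t) = Y^{(j)}_{[t-\tau+1,t]}(t)$. Your commentary on the interpretive role of the result is apt and matches the paper's positioning of the theorem as the bridge to the two-tile result.
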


\begin{proof}
  By definition of the ideal semantics, for any $s \leq t-L+1$ we have
  \[
    Y^{(j)}(t) = Y^{(j)}_{[s,t]}(t).
  \]
  Since $\tau \geq L$, we have $t-(t-\tau+1)+1 = \tau \geq L$, so
  $t-\tau+1 \leq t-L+1$.  Thus $s := t-\tau+1$ is admissible, and
  \[
    Y^{(j)}(t)
    \;=\;
    Y^{(j)}_{[t-\tau+1,t]}(t),
  \]
  which is the desired claim.
\end{proof}

Point-in-time idempotency therefore guarantees correctness of the $t_0$ output
for any tile of length at least the context window (e.g., length $3$ or $4$ in
Table~\ref{table:tile}).  We illustrate this in
Table~\ref{table:stream_processing}, where the text in blue indicates the
context window inputs, and the text in orange indicates the corresponding
point-in-time idempotent output.

\begin{table}[ht]
    \caption{Stream processing of input context window}
    \centering
    \begin{tabular}{c c c}
        \begin{tabular}[b]{|c|c|c|c|c|}
            \hline
            $t_{-3}$ & & & & \\
            \hline
            $\textcolor{blue}{t_{-2}}$ & & & & \\
            \hline
            $\textcolor{blue}{t_{-1}}$ & & & & \\
            \hline
            $\textcolor{blue}{t_0}$ & & & & \\
            \hline
        \end{tabular}
        &
        \begin{tabular}[b]{c}
            \\
            $\rightarrow$ \\
            \\
        \end{tabular}
        &
        \begin{tabular}[b]{|c|c|c|c|c|}
            \hline
            $\textcolor{orange}{t_0}$ & & & & \\
            \hline
        \end{tabular}
    \end{tabular}
    \label{table:stream_processing}
\end{table}

When operating in streaming mode, the context window ``slides'' as time
advances.  In Table \ref{table:time_t1}, we show how the context window
advances as the next time point $t_1$ arrives, producing a new point-in-time
idempotent output at $t_1$.

\begin{table}[ht]
    \caption{Stream processing at next time step}
    \centering
    \begin{tabular}{c c c}
        \begin{tabular}[b]{|c|c|c|c|c|}
            \hline
            $t_{-3}$ & & & & \\
            \hline
            $t_{-2}$ & & & & \\
            \hline
            $\textcolor{blue}{t_{-1}}$ & & & & \\
            \hline
            $\textcolor{blue}{t_0}$ & & & & \\
            \hline
            $\textcolor{blue}{t_{1}}$ & & & & \\
            \hline
        \end{tabular}
        &
        \begin{tabular}[b]{c}
            \\
            $\rightarrow$ \\
            \\
        \end{tabular}
        &
        \begin{tabular}[b]{|c|c|c|c|c|}
            \hline
            $t_0$ & & & & \\
            \hline
            $\textcolor{orange}{t_{1}}$ & & & & \\
            \hline
        \end{tabular}
    \end{tabular}
    \label{table:time_t1}
\end{table}

\subsection{Mini-batch execution and tile-level idempotency}

When it is important to generate point-in-time outputs for times in the past
(e.g., in backtesting), it is often more computationally efficient to operate
in \emph{mini-batch} mode, where outputs are generated for multiple time points
simultaneously.  We formalize this using consecutive tiles and show that two
tiles suffice to generate an idempotent output tile.

\begin{definition}[Two-tile input window]
  Fix a tile length $\tau \in \mathbb{N}$ and a time $t \in \mathbb{T}$.  The
  \emph{two-tile input window} ending at $t$ is
  \[
    I^{(2)}_t := [t-2\tau+1, t].
  \]
  The subintervals $[t-2\tau+1, t-\tau]$ and $[t-\tau+1, t]$ are called the
  \emph{first} and \emph{second} tiles, respectively.
\end{definition}

\begin{definition}[Mini-batch output tile]
  Let $N$ be a node, $\tau \in \mathbb{N}$ a tile length, and
  $X^{(1)},\dots,X^{(m)}$ input streams.  For a time $t$ such that
  $I^{(2)}_t = [t-2\tau+1,t]$ is contained in the domain of each input, define
  \[
    (Y^{(1)}_{I^{(2)}_t},\dots,Y^{(n)}_{I^{(2)}_t})
    =
    F_{N,I^{(2)}_t}(X^{(1)}|_{I^{(2)}_t},\dots,X^{(m)}|_{I^{(2)}_t}),
  \]
  and call the collection
  \[
    \bigl( Y^{(j)}_{I^{(2)}_t}(u) \bigr)_{u \in [t-\tau+1,t],\; j=1,\dots,n}
  \]
  the \emph{mini-batch output tile} corresponding to the second tile.
\end{definition}

\begin{theorem}[Two tiles suffice for tile-level idempotency]
  \label{thm:two-tiles-suffice}
  Let $N$ be a point-in-time idempotent node with context window length $L$,
  and let $\tau \geq L$.  Let $X^{(1)},\dots,X^{(m)}$ be input streams defined
  on a prefix $\mathbb{T}_{\leq t_{\max}}$, and fix $t \leq t_{\max}$ such that
  $[t-2\tau+1,t] \subseteq \mathbb{T}_{\leq t_{\max}}$.  Then for every
  $u \in [t-\tau+1,t]$ and every $j \in \{1,\dots,n\}$,
  \[
    Y^{(j)}_{I^{(2)}_t}(u) \;=\; Y^{(j)}(u),
  \]
  i.e., the mini-batch output tile over the second tile coincides with the
  ideal pointwise outputs.  Moreover, these outputs are invariant under
  extending the input window further into the past.
\end{theorem}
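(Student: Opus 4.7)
The plan is to reduce the claim, for each time $u$ in the second tile, to point-in-time idempotency at the right endpoint of a suitable sub-window of $I^{(2)}_t$. I would first observe that the PIT hypothesis as stated compares outputs only at the right endpoint of the processing interval, so to extract information about the output of $F_{N, I^{(2)}_t}$ at an interior point $u \in [t-\tau+1, t]$ of the two-tile window I need to invoke causal consistency of the node: the value at time $u$ produced on $[t-2\tau+1, t]$ equals the value at $u$ produced on the truncated window $[t-2\tau+1, u]$, i.e., rows strictly after $u$ do not alter the output at $u$. This is precisely the informal notion of causal computation introduced earlier in the paper, and it is really the only substantive ingredient needed beyond the context-window axiom itself.

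Given this causal reduction, the remainder is a length computation plus one application of the axiom, essentially extending the argument of Theorem~\ref{thm:single-step-tile-correctness} pointwise. For $u \in [t-\tau+1, t]$, the interval $[t-2\tau+1, u]$ has length $u - t + 2\tau$, which is minimized at $u = t - \tau + 1$ with value $\tau + 1$. Since $\tau \geq L$, this length is at least $L$, so $u$ is the right endpoint of a processing interval long enough for the context-window hypothesis to apply. Point-in-time idempotency (equivalently, the very definition of the ideal pointwise output) then gives $Y^{(j)}_{[t-2\tau+1, u]}(u) = Y^{(j)}(u)$. Chaining this equality with the causal-consistency equality from the first step yields $Y^{(j)}_{I^{(2)}_t}(u) = Y^{(j)}(u)$; since $u \in [t-\tau+1, t]$ and $j \in \{1, \dots, n\}$ are arbitrary, the entire mini-batch output tile on the second tile agrees with the ideal semantics.

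For the closing claim about invariance under extending the window further into the past, I would note that replacing $I^{(2)}_t$ by any larger window $[s', t]$ with $s' \leq t - 2\tau + 1$ only increases the length of $[s', u]$, so the context-window condition is a fortiori satisfied for every $u \in [t-\tau+1, t]$. The same two-step argument then produces $Y^{(j)}(u)$, which is defined intrinsically and independently of $s'$. The hard part, as flagged above, is the causal-reduction step; the stated PIT idempotency governs only right-endpoint outputs, so a causality property (or, equivalently, a pointwise strengthening of PIT) must be made explicit in order to pass from outputs of $F_{N, I^{(2)}_t}$ to outputs over the sub-windows $[t-2\tau+1, u]$. Apart from that, the proof reduces to a single interval-length bound and one invocation of a hypothesis we are allowed to assume.
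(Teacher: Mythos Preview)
Your proposal is correct and lands on the same conclusion by essentially the same mechanism, but the decomposition differs in a way worth noting. The paper's proof invokes Lemma~\ref{lem:local-dependence} to assert that the output at $u$ is a fixed function $g_j$ of the inputs on $J_u = [u-L+1,u]$, then checks the containment $J_u \subseteq I^{(2)}_t$ by an interval arithmetic computation. You instead truncate the processing window on the right---from $[t-2\tau+1,t]$ down to $[t-2\tau+1,u]$---so that $u$ becomes the right endpoint, and then apply the point-in-time idempotency hypothesis directly (your length bound $u - t + 2\tau \ge \tau + 1 \ge L$ plays the same role as the paper's containment check). Your route is slightly more elementary in that it bypasses the local-dependence lemma altogether.

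You are also right to flag the causal-reduction step as the substantive hidden ingredient: the PIT axiom, as stated, constrains only the output at the \emph{right endpoint} of a processing interval, so passing from $Y^{(j)}_{I^{(2)}_t}(u)$ to $Y^{(j)}_{[t-2\tau+1,u]}(u)$ genuinely requires that rows after $u$ do not affect the output at $u$. The paper's proof has exactly the same dependency---it applies the conclusion of Lemma~\ref{lem:local-dependence} at an \emph{interior} point $u$ of $I^{(2)}_t$, whereas the lemma is stated only for the right endpoint---but does not call attention to it. So your identification of this as ``the hard part'' is apt; both arguments need it, and you are simply more explicit about where it enters.
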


\begin{proof}
  Fix $u \in [t-\tau+1,t]$ and $j$.  By
  Lemma~\ref{lem:local-dependence}, there exists a function $g_j$ such that the
  value of the $j$-th output at time $u$ depends only on the restriction of the
  inputs to
  \[
    J_u := [u-L+1, u].
  \]
  We first check $J_u \subseteq I^{(2)}_t$.  The right endpoint satisfies
  $u \leq t$ by assumption.  For the left endpoint, we use
  $u \geq t-\tau+1$ and $L \leq \tau$:
  \[
    u - L + 1
    \;\geq\;
    (t-\tau+1) - L + 1
    \;=\;
    t - (\tau + L - 2)
    \;\geq\;
    t - (2\tau-2)
    \;=\;
    t-2\tau+2
    \;\geq\;
    t-2\tau+1.
  \]
  Thus $J_u \subseteq [t-2\tau+1,t] = I^{(2)}_t$.

  Consequently, when we apply $N$ to $I^{(2)}_t$, the value
  $Y^{(j)}_{I^{(2)}_t}(u)$ is determined entirely by the inputs on $J_u$ and
  therefore coincides with the value obtained by applying $N$ to any interval
  ending at $u$ and containing $J_u$ with length at least $L$.  By the
  definition of the ideal semantics, this value is exactly $Y^{(j)}(u)$.
  This proves the first part.

  For invariance under further extension, consider any $s' \leq t-2\tau+1$ and
  the extended window $[s',t]$.  The interval $J_u$ remains contained in
  $[s',t]$, and point-in-time idempotency with window $L$ ensures that any
  earlier history prior to $u-L+1$ does not affect the output at $u$.  Hence
  the value at $u$ computed from $[s',t]$ equals that computed from $I^{(2)}_t$,
  establishing the desired invariance.
\end{proof}

Theorem~\ref{thm:two-tiles-suffice} justifies the mini-batch schematic shown
in Figure~\ref{fig:two_tile_execution} and Table~\ref{table:minibatch}.

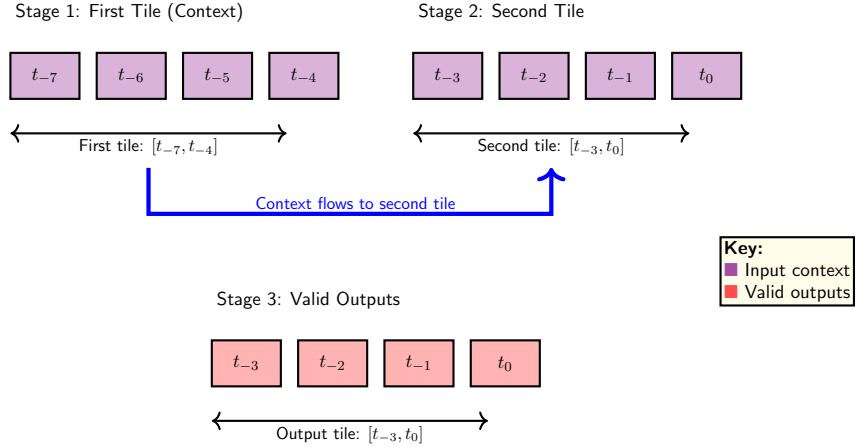
\begin{figure}[ht]
\centering
\begin{tikzpicture}[scale=0.9]
  \node[anchor=west, font=\bfseries] at (0, 5.5) {Stage 1: First Tile (Context)};
  \foreach \i/\t in {0/$t_{-7}$, 1/$t_{-6}$, 2/$t_{-5}$, 3/$t_{-4}$} {
    \pgfmathsetmacro{\xpos}{\i * 1.5}
    \draw[thick, fill=violet!30] (\xpos, 4) rectangle +(1.2, 0.8);
    \node at (\xpos + 0.6, 4.4) {\t};
  }
  \draw[<->, thick] (0, 3.4) -- (4.8, 3.4);
  \node[below] at (2.4, 3.4) {\small First tile: $[t_{-7}, t_{-4}]$};

  \node[anchor=west, font=\bfseries] at (7, 5.5) {Stage 2: Second Tile};
  \foreach \i/\t in {0/$t_{-3}$, 1/$t_{-2}$, 2/$t_{-1}$, 3/$t_0$} {
    \pgfmathsetmacro{\xpos}{7 + \i * 1.5}
    \draw[thick, fill=violet!30] (\xpos, 4) rectangle +(1.2, 0.8);
    \node at (\xpos + 0.6, 4.4) {\t};
  }
  \draw[<->, thick] (7, 3.4) -- (11.8, 3.4);
  \node[below] at (9.4, 3.4) {\small Second tile: $[t_{-3}, t_0]$};

  \draw[->, ultra thick, blue] (2.4, 2.8) -- (2.4, 2) -- (9.4, 2) -- (9.4, 2.8);
  \node[above, blue] at (6, 2) {\small Context flows to second tile};

  \node[anchor=west, font=\bfseries] at (3.5, 0.5) {Stage 3: Valid Outputs};
  \foreach \i/\t in {0/$t_{-3}$, 1/$t_{-2}$, 2/$t_{-1}$, 3/$t_0$} {
    \pgfmathsetmacro{\xpos}{3.5 + \i * 1.5}
    \draw[thick, fill=red!30] (\xpos, -1) rectangle +(1.2, 0.8);
    \node at (\xpos + 0.6, -0.6) {\t};
  }
  \draw[<->, thick] (3.5, -1.6) -- (8.3, -1.6);
  \node[below] at (5.9, -1.6) {\small Output tile: $[t_{-3}, t_0]$};

  \node[draw, thick, align=left, fill=yellow!10] at (13.5, 1) {
    \textbf{Key:} \\
    \textcolor{violet!70}{$\blacksquare$} Input context \\
    \textcolor{red!70}{$\blacksquare$} Valid outputs
  };

\end{tikzpicture}
\caption{Two-tile execution mechanism. The first tile provides context for boundary computations in the second tile. Outputs in the second tile (red) are valid because they have access to sufficient context from both tiles. For nodes with context window $L$, outputs at times like $t_{-3}$ depend on inputs spanning both tiles.}
\label{fig:two_tile_execution}
\end{figure}

In Figure~\ref{fig:two_tile_execution} and Table~\ref{table:minibatch}, we use
purple (violet) to indicate the input context required for tile-level idempotency,
and red to indicate the idempotent outputs associated with the tile-level context.

\begin{table}[ht]
    \caption{Temporal mini-batch processing}
    \centering
    \begin{tabular}{c c c}
        \begin{tabular}[b]{c}
            \begin{tabular}[b]{|c|c|c|c|c|}
                \hline
                $t_{-7}$ & & & & \\
                \hline
                $t_{-6}$ & & & & \\
                \hline
                $\textcolor{violet}{t_{-5}}$ & & & & \\
                \hline
                $\textcolor{violet}{t_{-4}}$ & & & & \\
                \hline
            \end{tabular} \\
            \begin{tabular}[b]{|c|c|c|c|c|}
                \hline
                $\textcolor{violet}{t_{-3}}$ & & & & \\
                \hline
                $\textcolor{violet}{t_{-2}}$ & & & & \\
                \hline
                $\textcolor{violet}{t_{-1}}$ & & & & \\
                \hline
                $\textcolor{violet}{t_0}$ & & & & \\
                \hline
            \end{tabular}
        \end{tabular}
        &
        \begin{tabular}[b]{c}
            \\
            $\rightarrow$ \\
            \\
            \\
            \\
        \end{tabular}
        &
        \begin{tabular}[b]{|c|c|c|c|c|}
            \hline
            $\textcolor{red}{t_{-3}}$ & & & & \\
            \hline
            $\textcolor{red}{t_{-2}}$ & & & & \\
            \hline
            $\textcolor{red}{t_{-1}}$ & & & & \\
            \hline
            $\textcolor{red}{t_0}$ & & & & \\
            \hline
        \end{tabular}
    \end{tabular}
    \label{table:minibatch}
\end{table}

Note that outputs $t_{-2}$ and $t_{-3}$ in Table~\ref{table:minibatch} have
context windows spanning both tiles, as predicted by the proof of
Theorem~\ref{thm:two-tiles-suffice}.

\begin{remark}[Detecting incorrect tiled computations]
  \label{rem:tiling-validation}
  The tilability property provides a practical method for validating the
  correctness of DataFlow implementations. Given a node (or DAG) purported to
  have context window length $L$, one may perform the following validation:
  \begin{enumerate}
    \item Execute the node on the full historical stream to obtain reference
      outputs $Y^{\text{ref}}$.
    \item Partition the same historical data into tiles of varying lengths
      $\tau_1, \tau_2, \dots \geq L$ with different temporal boundaries.
    \item Execute the node on each tiled partition (using two-tile windows as
      per Theorem~\ref{thm:two-tiles-suffice}) to obtain outputs
      $Y^{(\tau_1)}, Y^{(\tau_2)}, \dots$
    \item Verify that $Y^{(\tau_k)} = Y^{\text{ref}}$ for all $k$ up to
      numerical precision.
  \end{enumerate}
  Discrepancies between tiled and untiled outputs indicate one of three failure
  modes: (1) the stated context window $L$ is insufficient; (2) the node
  implementation inadvertently accesses future data (violates causality); or
  (3) the node exhibits nondeterministic behavior or untracked dependencies.

  This validation protocol is particularly valuable for detecting
  \emph{future-peeking bugs}, where a computation inadvertently uses
  information from time $t' > t$ when producing output at time $t$. Such errors
  are notoriously difficult to detect in batch-mode development but manifest
  immediately as tiling inconsistencies. DataFlow's testing framework automates
  this validation, enabling early detection of causality violations during the
  development cycle.
\end{remark}

\begin{remark}[Tile size vs.\ DAG context window]
  Tile size $\tau$ is a user-selected parameter whose optimal value depends on
  use case, data requirements, and computational resources.  For example,
  backtesting on a single machine may suggest a tile size dictated by memory
  constraints, while a near real-time system whose input frequency exceeds the
  decision-making frequency may suggest a small tile size.  Latency-critical
  real-time applications suggest the minimum tile size, i.e., the DAG-level
  context window length.  Crucially, $\tau$ is tunable and separate from the
  computational logic: the same DataFlow DAG may be used for both
  latency-critical real-time applications and large-scale historical backtests,
  with correctness guaranteed whenever $\tau$ is chosen to be at least the DAG
  context window.
\end{remark}

\subsection{Columnar grouping}

Up to this point we have discussed tiling purely in terms of time indexing.
DataFlow also supports columnar tiling, which allows the separation of
computationally independent columns into separate tiles. This form of
independence is easiest to interpret when operating with tiles with multilevel
column indexing.

Formally, let $\mathcal{U}$ denote a finite index set of \emph{entities}
(e.g., customers), and let $\mathcal{F}$ denote a finite index set of
\emph{features} (e.g., ``time spent'', ``dollars spent'').  The set of
columns is then the Cartesian product
\[
  \mathcal{C} := \mathcal{U} \times \mathcal{F},
\]
and a stream dataframe row at time $t$ may be written as
\[
  X_t = \bigl( X_t(u,f) \bigr)_{(u,f)\in\mathcal{C}}.
\]
A \emph{columnar grouping} is a partition
\[
  \mathcal{P} = \{ C_1,\dots,C_K \}
  \quad\text{of } \mathcal{C},
\]
where each $C_k \subseteq \mathcal{C}$ is a group of columns that we may tile
and process jointly.

\begin{definition}[Column-restricted time tile]
  Let $[s,t]$ be a time interval and let $C \subseteq \mathcal{C}$ be a column
  group.  The \emph{column-restricted time tile} of $X$ over $[s,t]$ and $C$ is
  the restriction
  \[
    X|_{[s,t]\times C}
    :=
    \bigl( X_u(c) \bigr)_{u \in [s,t],\, c \in C}.
  \]
\end{definition}

For example, suppose there is a column index level called
``customer'', and to each customer there are associated fields
``time spent'' and ``dollars spent''. We depict such a streaming dataframe in
Table \ref{table:multicol}.

\begin{table}[ht]
    \caption{Streaming dataframe with columnar grouping}
    \centering
    \begin{tabular}[b]{|c|c|c|c|c|}
        \hline
        \multirow{2}{*}{} &
          \multicolumn{2}{c}{A} &
          \multicolumn{2}{c|}{B} \\
          & time & \$ & time & \$ \\
        \hline
        $\cdots$ & $\cdots$ & $\cdots$ & $\cdots$ & $\cdots$ \\
        \hline
        $t_{-n}$ & & & & \\
        \hline
        $\cdots$ & $\cdots$ & $\cdots$ & $\cdots$ & $\cdots$ \\
        \hline
        $t_{-3}$ & & & & \\
        \hline
        $t_{-2}$ & & & & \\
        \hline
        $t_{-1}$ & & & & \\
        \hline
        $t_0$ & & & & \\
        \hline
    \end{tabular}
    \label{table:multicol}
\end{table}

In this example, $\mathcal{U} = \{A,B\}$ and
$\mathcal{F} = \{\text{time}, \$\}$, so
$\mathcal{C} = \{(A,\text{time}),(A,\$),(B,\text{time}),(B,\$)\}$.
Table~\ref{table:multicol} depicts the restriction of the stream dataframe to
these columns over time.

Suppose that one wishes to aggregate over a sliding time window the
``time spent'' column across all users, and that, separately, one wishes to
identify the user who spends the greatest dollar amount in a given time window.
In this contrived example, the ``time'' and ``dollar'' computations are
independent, which allows one to tile according to column group (in addition to
time).

Formally, let $\mathcal{U} = \{A,B,\dots\}$ and define feature-specific column
groups
\[
  C_{\text{time}} := \mathcal{U} \times \{\text{time}\},
  \qquad
  C_{\$} := \mathcal{U} \times \{\$\}.
\]
Let $[t-3,t_0]$ be a time interval of length $4$.  Consider two functionals:
\begin{align*}
  G_{\text{time}} &: X|_{[t-3,t_0]\times C_{\text{time}}}
    \;\longmapsto\; \text{(aggregate time spent over users and time)}, \\
  G_{\$} &: X|_{[t-3,t_0]\times C_{\$}}
    \;\longmapsto\; \text{(argmax user by dollars spent over window)}.
\end{align*}
We say that these functionals are \emph{column-disjoint} because
$G_{\text{time}}$ depends only on $C_{\text{time}}$ and $G_{\$}$ depends only
on $C_{\$}$.

\begin{definition}[Column-separable computation]
  Let $\mathcal{P} = \{ C_1,\dots,C_K \}$ be a partition of $\mathcal{C}$.
  A node-level map on a time interval,
  \[
    F_{[s,t]} : \mathcal{R}^{[s,t]\times\mathcal{C}}
    \;\longrightarrow\;
    \mathcal{O},
  \]
  is \emph{column-separable with respect to $\mathcal{P}$} if there exist maps
  \[
    F^{(k)}_{[s,t]} : \mathcal{R}^{[s,t]\times C_k}
      \;\longrightarrow\; \mathcal{O}_k,
      \quad k=1,\dots,K,
  \]
  and a combining map $\Phi : \mathcal{O}_1 \times \cdots \times \mathcal{O}_K
  \to \mathcal{O}$ such that for all inputs $X$ on $[s,t]\times\mathcal{C}$,
  \[
    F_{[s,t]}(X)
    =
    \Phi\Bigl(
      F^{(1)}_{[s,t]}(X|_{[s,t]\times C_1}),\dots,
      F^{(K)}_{[s,t]}(X|_{[s,t]\times C_K})
    \Bigr).
  \]
\end{definition}

In this terminology, the ``time spent'' aggregation and ``dollars spent''
argmax are column-separable with respect to
$\mathcal{P} = \{C_{\text{time}}, C_{\$}\}$, with $K=2$ and two independent
subcomputations $F^{(1)}$ and $F^{(2)}$.

\begin{proposition}[Parallel column-group evaluation]
  \label{prop:column-parallel}
  Let $\mathcal{P} = \{ C_1,\dots,C_K \}$ be a partition of $\mathcal{C}$ and
  let $F_{[s,t]}$ be column-separable with respect to $\mathcal{P}$.  Then for
  any time interval $[s,t]$ and any stream dataframe $X$ on
  $[s,t]\times\mathcal{C}$, the value $F_{[s,t]}(X)$ can be computed by
  evaluating $F^{(k)}_{[s,t]}$ independently on each column-restricted tile
  $X|_{[s,t]\times C_k}$ and combining the results via $\Phi$.
\end{proposition}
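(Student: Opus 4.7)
The plan is to prove this as a direct unfolding of the definition of column-separability, with the only substantive observation being that the partition property $C_i \cap C_j = \emptyset$ for $i \neq j$ ensures that the subcomputations $F^{(k)}_{[s,t]}$ have pairwise disjoint input supports, and therefore can be evaluated in any order, including concurrently.

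First I would fix an arbitrary stream dataframe $X$ on $[s,t] \times \mathcal{C}$ and simply invoke the definition of column-separability to write
\[
  F_{[s,t]}(X)
  =
  \Phi\Bigl(
    F^{(1)}_{[s,t]}(X|_{[s,t]\times C_1}),\dots,
    F^{(K)}_{[s,t]}(X|_{[s,t]\times C_K})
  \Bigr).
\]
This already exhibits $F_{[s,t]}(X)$ as a combination, via $\Phi$, of $K$ values each of which is computed by applying $F^{(k)}_{[s,t]}$ to a column-restricted tile. The only content to verify is that these $K$ evaluations are genuinely independent computations in the sense that no two of them share any input data.

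Next I would observe that since $\mathcal{P}$ is a partition of $\mathcal{C}$, we have $C_i \cap C_j = \emptyset$ for $i \neq j$. Therefore the restrictions $X|_{[s,t]\times C_k}$ for $k=1,\dots,K$ have pairwise disjoint column supports and consequently reference disjoint entries of $X$. Since $F^{(k)}_{[s,t]}$ depends only on $X|_{[s,t]\times C_k}$ by hypothesis, the evaluation of $F^{(k)}_{[s,t]}$ does not access any cell read by $F^{(k')}_{[s,t]}$ for $k' \neq k$. It follows that the $K$ evaluations may be scheduled in any order or in parallel without affecting correctness, and their outputs may then be fed into $\Phi$.

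I do not anticipate a real obstacle: this proposition is essentially a restatement of the definition of column-separability, augmented by the elementary partition property. The only point worth emphasizing in the write-up is that the claim concerns \emph{how} the value may be computed, not merely what it equals, so the proof's substance lies in pointing out the data-independence of the subcomputations rather than in any nontrivial computation. I would keep the proof to a short paragraph consisting of the displayed equation above together with the disjointness observation.
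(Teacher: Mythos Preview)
Your proposal is correct and mirrors the paper's own proof almost exactly: both unfold the definition of column-separability to obtain the displayed factorization through $\Phi$, then observe that each $F^{(k)}_{[s,t]}$ depends only on its own column-restricted tile, so the subcomputations are independent. Your explicit invocation of the partition disjointness $C_i \cap C_j = \emptyset$ is a slight elaboration over the paper's terser ``each argument of $\Phi$ depends only on the restriction of $X$ to the corresponding group $C_k$,'' but the substance is identical.
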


\begin{proof}
  This is immediate from the definition of column-separability: by assumption,
  for all $X$,
  \[
    F_{[s,t]}(X)
    =
    \Phi\Bigl(
      F^{(1)}_{[s,t]}(X|_{[s,t]\times C_1}),\dots,
      F^{(K)}_{[s,t]}(X|_{[s,t]\times C_K})
    \Bigr),
  \]
  and each argument of $\Phi$ depends only on the restriction of $X$ to the
  corresponding group $C_k$.  Thus the $K$ subcomputations can be carried out
  independently and in parallel on the respective column-restricted tiles.
\end{proof}

\begin{remark}[Cross-sectional tiling and computational independence]
  \label{rem:cross-sectional-tiling}
  Proposition~\ref{prop:column-parallel} establishes that computations may be
  partitioned not only along the temporal axis (as in temporal tiling) but also
  along the feature/entity axis when column-separability holds. This enables
  \emph{cross-sectional tiling}, where the dataframe is partitioned into
  column-disjoint subsets that may be processed independently.

  Cross-sectional tiling provides several benefits:
  \begin{itemize}
    \item \textbf{Horizontal scalability.} When processing thousands of
      entities (e.g., financial instruments, customers, sensors), column groups
      may be distributed across multiple machines, enabling data-parallel
      execution that scales linearly with the number of entities.

    \item \textbf{Memory efficiency.} Large dataframes with many columns may
      exceed available memory. Column-restricted tiles allow processing subsets
      that fit in memory, with results combined via the map $\Phi$.

    \item \textbf{Heterogeneous processing.} Different column groups may be
      assigned to processors with different characteristics (e.g., CPU vs.\ GPU)
      based on their computational requirements.
  \end{itemize}

  Importantly, temporal and cross-sectional tiling may be \emph{composed}: a
  dataframe may be partitioned along both time and column dimensions
  simultaneously, yielding a two-dimensional tiling that exploits independence
  along both axes. The resulting tiles are characterized by a time interval
  $[s,t]$ and a column group $C_k$, and correctness follows from the conjunction
  of temporal point-in-time idempotency and column-separability.

  Not all operations are column-separable. Cross-sectional operations (e.g.,
  normalizing features to have zero mean across entities at each time point, or
  computing entity rankings) require access to all columns at once and cannot be
  column-tiled. Such operations impose constraints on the column-grouping
  strategy and may require materialization of full cross-sections at designated
  synchronization points in the DAG.
\end{remark}

An example of the two such tiles of context window length $4$ that would be
generated ending at $t_0$ are as in Table \ref{table:multicol_tiling}.  The
left tile corresponds to the group $C_{\text{time}}$, and the right to
$C_{\$}$.

\begin{table}[ht]
    \caption{Tiles with column groups}
    \centering
    \begin{tabular}{c c}
        \begin{tabular}[b]{|c|c|c|}
            \hline
            \multirow{2}{*}{} &
              \multicolumn{2}{c|}{time} \\
              & A & B \\
            \hline
            $t_{-3}$ & & \\
            \hline
            $t_{-2}$ & & \\
            \hline
            $t_{-1}$ & & \\
            \hline
            $t_0$ & & \\
            \hline
        \end{tabular}
        &
        \begin{tabular}[b]{|c|c|c|}
            \hline
            \multirow{2}{*}{} &
              \multicolumn{2}{c|}{\$} \\
              & A & B \\
            \hline
            $t_{-3}$ & & \\
            \hline
            $t_{-2}$ & & \\
            \hline
            $t_{-1}$ & & \\
            \hline
            $t_0$ & & \\
            \hline
        \end{tabular}
    \end{tabular}
    \label{table:multicol_tiling}
\end{table}

Consider an alternative scenario where, instead of aggregating features
cross-sectionally across customers, one instead wished to perform a
per-customer data operation. Such a scenario would permit an alternative
column grouping into tiles as shown in Table \ref{table:multicol_tiling2}.

Formally, suppose we have a node whose computation is \emph{per-customer
separable}: there exists a map
\[
  H_{[s,t]} : \mathcal{R}^{[s,t]\times\mathcal{F}} \longrightarrow \mathcal{O}
\]
such that for each customer $u \in \mathcal{U}$ and any input $X$,
\[
  \text{(output for customer $u$ over $[s,t]$)}
  \;=\;
  H_{[s,t]}\bigl( X|_{[s,t]\times(\{u\}\times\mathcal{F})} \bigr).
\]
Define column groups
\[
  C_u := \{u\} \times \mathcal{F},\quad u \in \mathcal{U},
\]
so that $\mathcal{P}_{\text{cust}} := \{ C_u : u \in \mathcal{U} \}$ is a
partition of $\mathcal{C}$ by customer.  In this case, all groups share the
same computation $H_{[s,t]}$ up to the label $u$, so the resulting tiles are
\emph{semantically identical} in the sense that each tile is processed by the
same function.

An example of the resulting column grouping for $\mathcal{U} = \{A,B\}$ and
$\mathcal{F} = \{\text{time},\$\}$ is shown in
Table~\ref{table:multicol_tiling2}.

\begin{table}[ht]
    \caption{Tiles with alternative column groups}
    \centering
    \begin{tabular}{c c}
        \begin{tabular}[b]{|c|c|c|}
            \hline
            \multirow{2}{*}{} &
              \multicolumn{2}{c|}{A} \\
              & time & \$ \\
            \hline
            $t_{-3}$ & & \\
            \hline
            $t_{-2}$ & & \\
            \hline
            $t_{-1}$ & & \\
            \hline
            $t_0$ & & \\
            \hline
        \end{tabular}
        &
        \begin{tabular}[b]{|c|c|c|}
            \hline
            \multirow{2}{*}{} &
              \multicolumn{2}{c|}{B} \\
              & time & \$ \\
            \hline
            $t_{-3}$ & & \\
            \hline
            $t_{-2}$ & & \\
            \hline
            $t_{-1}$ & & \\
            \hline
            $t_0$ & & \\
            \hline
        \end{tabular}
    \end{tabular}
    \label{table:multicol_tiling2}
\end{table}

\begin{definition}[Semantically identical vs.\ different tiles]
  Let $\mathcal{P} = \{ C_1,\dots,C_K \}$ be a column partition and let
  $F^{(k)}_{[s,t]}$ denote the node computation restricted to tile
  $[s,t]\times C_k$ (in the sense of column-separability).
  \begin{itemize}
    \item We say that tiles $C_k$ and $C_\ell$ are \emph{semantically
      identical} if there exists a relabeling (e.g., of customers) under which
      $F^{(k)}_{[s,t]}$ and $F^{(\ell)}_{[s,t]}$ coincide for all inputs.
    \item Otherwise we say that the tiles are \emph{semantically different}.
  \end{itemize}
\end{definition}

Note that the tiling in Table \ref{table:multicol_tiling} produces semantically
different tiles (one tile is processed by $G_{\text{time}}$, the other by
$G_{\$}$), whereas the tiling in Table \ref{table:multicol_tiling2} produces
semantically identical tiles (each tile is processed by the same per-customer
map $H_{[s,t]}$). Both cases support parallelism, but in different ways. The
former lends itself to parallelism across multiple DAG nodes (or distinct
subcomputations), whereas the latter lends itself to intra-node parallelism
(e.g., parallelizing the same computation across many customers).

\begin{remark}[Working with DataFlow structures]
A user can work with DataFlow at 4 levels of abstraction:

\begin{enumerate}
  \item Pandas long-format (non multi-index) dataframes and for-loops
    \begin{itemize}
      \item We can do a group-by or filter by \verb|full_symbol|

      \item Apply the transformation on each resulting dataframe

      \item Merge the data back into a single dataframe with the long-format
    \end{itemize}

  \item Pandas multiindex dataframes
    \begin{itemize}
      \item The data is in the DataFlow native format

      \item We can apply the transformation in a vectorized way

      \item This approach is best for performance and with compatibility with DataFlow
        point of view

      \item An alternative approach is to express multi-index transformations in
        terms of approach 1 (i.e., single asset transformations and then
        concatenation). This approach is functionally equivalent to a multi-index
        transformation, but typically slow and memory inefficient
    \end{itemize}

  \item DataFlow nodes
    \begin{itemize}
      \item A DataFlow node implements certain transformations on dataframes according
        to the DataFlow convention and interfaces

      \item Nodes operate on the multi-index representation by typically calling
        functions from level 2 above
    \end{itemize}

  \item DAG
    \begin{itemize}
      \item A series of transformations in terms of DataFlow nodes
    \end{itemize}
\end{enumerate}

\end{remark}

\subsection{Closure properties and common operators}

The point-in-time idempotency property composes well under standard dataframe
operations. The following lemmas establish context window requirements for
commonly used transformations.

\begin{lemma}[Stateless and FIR operators]
  \label{lem:fir}
  A pointwise operator $Y_t = f(X_t)$ is point-in-time idempotent with context
  window $w=1$. Any finite impulse response (FIR) operator with horizon $h$ is
  point-in-time idempotent with context window $w=h$.
\end{lemma}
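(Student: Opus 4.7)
The plan is to split the proof into the two claims and verify the context-window condition from the definition directly, since both operators have a transparent trailing-window dependency.

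For the pointwise case, I would start by observing that the node map on any interval $[s,t]$ is given explicitly by $F_{N,[s,t]}(X|_{[s,t]})(u) = f(X(u))$ for every $u \in [s,t]$. Taking any two intervals $[s_1,t]$ and $[s_2,t]$ with $t-s_i+1 \geq 1$, both contain the point $t$, so
\[
  F_{N,[s_1,t]}(X|_{[s_1,t]})(t) \;=\; f(X(t)) \;=\; F_{N,[s_2,t]}(X|_{[s_2,t]})(t).
\]
This is precisely the condition in the definition of point-in-time idempotency with $L = 1$, so $w = 1$ is a valid context window length.

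For the FIR claim, I would first fix a formal model: an FIR operator with horizon $h$ is one for which there exists a fixed function $\phi : \mathcal{R}^h \to \mathcal{R}$ such that on any interval $[s,t]$ with $t - s + 1 \geq h$,
\[
  F_{N,[s,t]}(X|_{[s,t]})(t) \;=\; \phi\bigl( X(t-h+1),\dots,X(t) \bigr).
\]
This covers the standard linear form $Y_t = \sum_{k=0}^{h-1} c_k X_{t-k}$ as well as nonlinear FIR kernels. Then for any two intervals $[s_1,t]$ and $[s_2,t]$ with $t - s_i + 1 \geq h$, both contain the trailing window $[t-h+1,t]$, so the restrictions of each input stream agree on that window. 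Substituting into the formula above shows that the outputs at $t$ coincide, establishing that $w = h$ is a context window length in the sense of the definition.

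I do not expect a real obstacle here; the only subtlety is pinning down what is meant by ``FIR operator with horizon $h$'', since different authors use $h$ to denote either the number of taps or the maximal lag. Once that convention is fixed as above (so that $h$ is the number of points the kernel reads), both claims reduce to one line of rewriting. If desired, one could append a short remark appealing to Lemma~\ref{lem:local-dependence} to exhibit the associated trailing-window function $g$ explicitly, but this is not required for the stated assertion.
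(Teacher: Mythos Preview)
Your proposal is correct and follows essentially the same approach as the paper: in both cases you observe that the output at time $t$ depends only on a fixed trailing window (of length $1$ for the pointwise operator, of length $h$ for the FIR operator), which immediately verifies the point-in-time idempotency condition. The paper's version is much terser but structurally identical, and your added care in pinning down the FIR convention (number of taps vs.\ maximal lag) is a reasonable clarification.
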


\begin{proof}
  A pointwise operator depends only on the current time point, so $L=1$
  suffices. An FIR operator with horizon $h$ computes
  \[
    Y_t = g(X_{t-h+1}, X_{t-h+2}, \dots, X_t)
  \]
  for some function $g$, so the output at time $t$ depends only on the last $h$
  inputs, establishing $L=h$.
\end{proof}

\begin{lemma}[Column-group-separable nodes]
  \label{lem:col-groups}
  Let the columns decompose into disjoint groups $C_1,\dots,C_m$ and suppose
  the node acts independently on each group with contexts $w_1,\dots,w_m$.
  Then the node is point-in-time idempotent with $w = \max_j w_j$.
\end{lemma}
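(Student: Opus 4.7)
The plan is to reduce the statement directly to the point-in-time idempotency of each column-group sub-node, using the column-separability structure made available by the hypothesis.

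First I would unpack the hypothesis as a concrete decomposition: by column-separability, there exist maps $F^{(k)}_{[s,t]}$ acting only on $X|_{[s,t]\times C_k}$ and a combining map $\Phi$ such that
\[
  F_{N,[s,t]}(X) \;=\; \Phi\bigl( F^{(1)}_{[s,t]}(X|_{[s,t]\times C_1}),\ldots, F^{(m)}_{[s,t]}(X|_{[s,t]\times C_m}) \bigr),
\]
where each $F^{(k)}$ is itself point-in-time idempotent with context window $w_k$. Because each $F^{(k)}$ produces a stream on $[s,t]$ and the total node likewise produces a stream on $[s,t]$, I would read $\Phi$ as acting pointwise in time, assembling per-time-slice outputs from the per-group outputs (this matches the paper's running usage, where ``acts independently on each group'' means the column-indexed outputs for group $C_k$ depend only on the inputs restricted to $C_k$).

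Next, set $w := \max_j w_j$ and fix two intervals $[s_1,t]$, $[s_2,t]$ with $t-s_i+1 \geq w \geq w_k$ for every $k$. For each $k$, apply the point-in-time idempotency of $F^{(k)}$ with its own context window $w_k$: the value at the right endpoint $t$ of $F^{(k)}_{[s_1,t]}(X|_{[s_1,t]\times C_k})$ equals that of $F^{(k)}_{[s_2,t]}(X|_{[s_2,t]\times C_k})$. Since $\Phi$ at time $t$ is a function of these per-group values at time $t$, the assembled output of the full node at $t$ is the same on both intervals. This is the definition of point-in-time idempotency for $N$ with context window $w$, proving the claim.

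The main obstacle is mostly bookkeeping: one must be careful to interpret the combining map $\Phi$ as acting slice-by-slice in time so that agreement at time $t$ of the per-group outputs implies agreement at time $t$ of the combined output; this is consistent with the column-separability definition given earlier but is worth stating explicitly. As a minor closing remark, I would note that $w = \max_j w_j$ is tight in general: an adversarial construction placing the only nontrivial behavior in the group attaining the maximum shows no smaller $L$ can serve as a context window for $N$, so $L_N = \max_j L_{N^{(k)}}$ when each $w_j$ is minimal.
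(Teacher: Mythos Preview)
Your proof is correct and follows essentially the same approach as the paper: both argue that the output at time $t$ for each column group depends only on the inputs to that group over a window of length $w_j$, so the union of these windows gives $[t-\max_j w_j+1,t]$. Your version is more formal---you explicitly verify the definition of point-in-time idempotency against two arbitrary intervals $[s_1,t]$, $[s_2,t]$ and route through the column-separability decomposition with $\Phi$---whereas the paper's proof is a two-sentence sketch that simply asserts the dependence structure. Your caveat about $\Phi$ acting pointwise in time is well-taken: the general column-separability definition does not force this, but the lemma's hypothesis (``acts independently on each group'') is read by the paper as the stronger statement that the output columns also decompose and each output group depends only on its input group, which makes the pointwise assembly automatic. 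Your closing remark on tightness is a nice addition not present in the paper.
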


\begin{proof}
  For each group $C_j$, the output at time $t$ restricted to columns in $C_j$
  depends only on inputs in $C_j$ over the window $[t-w_j+1,t]$. Since the
  computations are independent across groups, the overall output at time $t$
  depends on the union of these windows, which is
  $[t-\max_j w_j+1,t]$. Thus $w = \max_j w_j$ suffices.
\end{proof}

\begin{lemma}[Joins with temporal alignment]
  \label{lem:join-delta}
  Suppose two input streams are joined on time with left-closed, right-exact
  alignment and bounded misalignment $\Delta \in \mathbb{N}$ (e.g.,
  last-observation-carried-forward up to $\Delta$ time steps). If the input
  nodes are point-in-time idempotent with context windows $w_1$ and $w_2$,
  then the join node is point-in-time idempotent with
  $w = \max(w_1, w_2) + \Delta$.
\end{lemma}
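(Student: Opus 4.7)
The plan is to reduce the claim to Lemma~\ref{lem:local-dependence} by tracking, backwards through the join, the trailing window of raw inputs on which the joined output at time $t$ actually depends. First, I would formalize the alignment semantics: a left-closed, right-exact join with bounded misalignment $\Delta$ means that the joined output at time $t$ is a function only of the values of the two input streams at some times $u_1(t), u_2(t) \in [t-\Delta, t]$ (for last-observation-carried-forward, $u_i(t)$ is the latest time in that interval at which stream $i$ has an available observation). Thus the join itself contributes no state beyond the $\Delta$-lookback; everything else is a pointwise combination.

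Next, I would apply Lemma~\ref{lem:local-dependence} to each upstream point-in-time idempotent node. For any $u \in [t-\Delta, t]$, the output of input node $i$ at time $u$ is determined by the function $g_j^{(i)}$ evaluated on the raw input trailing window $[u - w_i + 1, u]$. Taking the worst case $u = t-\Delta$, the earliest raw input needed is at time $t - \Delta - w_i + 1$. Taking the union across the two input streams gives an overall trailing window
\[
  W_t \;=\; [\, t - \Delta - \max(w_1, w_2) + 1,\; t\,],
\]
of length $w = \max(w_1, w_2) + \Delta$, and the join output at $t$ is a deterministic function of the raw inputs on $W_t$.

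The final step is to verify the point-in-time idempotency condition directly. Fix two intervals $[s_1,t]$ and $[s_2,t]$ with $t - s_i + 1 \geq w$. Then both contain $W_t$, and by the dependence structure established above the join output at $t$ computed on either interval is the same function of the common restriction to $W_t$, hence the two outputs at $t$ agree. This yields $w = \max(w_1,w_2) + \Delta$ as a context window, completing the proof.

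The main obstacle is formalizing the ``bounded misalignment $\Delta$'' assumption cleanly enough that the trailing-window dependence becomes a one-line consequence, since without it the claim would fail (unbounded LOCF requires an infinite context window). Once the lookup range $[t-\Delta,t]$ is pinned down, the rest is an additive combination: the $\Delta$ absorbs the join-level misalignment and $\max(w_1,w_2)$ absorbs the upstream context, and no cross-term arises because the join is pointwise in the aligned outputs.
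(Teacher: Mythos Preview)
Your proposal is correct and follows essentially the same approach as the paper: both arguments compute the trailing window by adding the join's $\Delta$-lookback to each upstream node's context window $w_i$ and then take the maximum over the two streams. Your version is more carefully spelled out (explicitly invoking Lemma~\ref{lem:local-dependence} and directly verifying the idempotency definition), but the paper's proof is just a terse sketch of the same backward-chaining argument.
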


\begin{proof}
  The join output at time $t$ may depend on input values from either stream up
  to $\Delta$ time steps earlier (due to the alignment tolerance). For each
  input stream $i$, we require access to $[t-w_i+1,t]$ to compute its
  contribution. Accounting for the misalignment, we need access to
  $[t-w_i-\Delta+1,t]$. Taking the maximum over both streams yields
  $w = \max(w_1, w_2) + \Delta$.
\end{proof}

\subsection{Approximating infinite impulse response operations}

While finite impulse response (FIR) operations naturally satisfy point-in-time
idempotency with bounded context windows, infinite impulse response (IIR)
operations (such as exponential moving averages) technically require access to
the entire history. In practice, IIR operations can be approximated by FIR
operations with bounded truncation error.

\begin{theorem}[FIR approximation error for EWMA]
  \label{thm:ewma}
  Let $y_{t+1} = (1-\lambda)x_t + \lambda y_t$ with $0 < \lambda < 1$ define
  an exponentially weighted moving average (EWMA). Let $\tilde{y}_t^{(h)}$ be
  the truncated FIR approximation using the last $h$ inputs:
  \[
    \tilde{y}_t^{(h)}
    = (1-\lambda) \sum_{j=0}^{h-1} \lambda^j x_{t-j}.
  \]
  Then the approximation error satisfies
  \[
    \bigl|y_t - \tilde{y}_t^{(h)}\bigr|
    \;\le\;
    \lambda^h \cdot \max_{s \le t} |y_s|.
  \]
  Hence choosing $h \ge \frac{\ln(1/\varepsilon)}{\ln(1/\lambda)}$ ensures
  error $\le \varepsilon$ relative to the maximum historical value.
\end{theorem}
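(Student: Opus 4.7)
The plan is to unroll the EWMA recursion for exactly $h$ steps, read off the FIR truncation as the explicit partial sum, and identify the remainder as a single geometrically decaying term that is bounded by the supremum of the EWMA history.

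First I would iterate the recursion $y_{t} = (1-\lambda) x_{t-1} + \lambda y_{t-1}$ (equivalently the stated form with index shifted). A single step gives $y_t = (1-\lambda) x_{t-1} + \lambda y_{t-1}$; substituting the recursion for $y_{t-1}$ and continuing inductively for $h$ steps yields
\[
  y_t
  = (1-\lambda) \sum_{j=0}^{h-1} \lambda^{j} x_{t-1-j} + \lambda^{h} y_{t-h}.
\]
A short induction on $h$ (base case $h=1$ is the recursion itself; inductive step substitutes once more and collects one additional term into the sum while multiplying the residual by another $\lambda$) makes this rigorous. Up to the paper's indexing convention for $\tilde{y}_t^{(h)}$, the partial sum is exactly the truncated FIR approximation $\tilde{y}_t^{(h)}$, so that
\[
  y_t - \tilde{y}_t^{(h)} = \lambda^{h}\, y_{t-h}.
\]

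Taking absolute values immediately gives $|y_t - \tilde{y}_t^{(h)}| = \lambda^{h} |y_{t-h}| \le \lambda^{h} \max_{s \le t} |y_s|$, which is the claimed bound. The second assertion follows by monotonicity of $\log$: requiring $\lambda^{h} \le \varepsilon$ is equivalent to $h \ln(1/\lambda) \ge \ln(1/\varepsilon)$, i.e.\ $h \ge \ln(1/\varepsilon)/\ln(1/\lambda)$; since $0 < \lambda < 1$ we have $\ln(1/\lambda) > 0$ so the division is valid.

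The main obstacle is bookkeeping rather than mathematical depth: one must verify that the summation index in the unrolled recursion matches the definition of $\tilde{y}_t^{(h)}$ given in the statement (there is an implicit index convention about whether the window uses $x_t, \dots, x_{t-h+1}$ or $x_{t-1}, \dots, x_{t-h}$), and one must be explicit that $\max_{s \le t} |y_s|$ is finite, which holds whenever the input stream is bounded since the EWMA is a convex combination that is non-expanding in the sup norm. No appeal to earlier lemmas is required; the argument is self-contained given the recursion. Consequently, choosing $h = \lceil \ln(1/\varepsilon)/\ln(1/\lambda) \rceil$ produces an FIR operator of horizon $h$ which, by Lemma~\ref{lem:fir}, is point-in-time idempotent with context window $h$ and approximates the IIR EWMA within tolerance $\varepsilon$ relative to $\max_{s \le t} |y_s|$.
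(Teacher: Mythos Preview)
Your argument is correct, and it takes a genuinely different route from the paper's. The paper expands the EWMA as an infinite series $y_t = (1-\lambda)\sum_{j\ge 0}\lambda^j x_{t-j}$, bounds the tail $\sum_{j\ge h}$ by $\lambda^h \max_s |x_s|$, and then converts $\max|x_s|$ to $\max|y_s|$ via the (actually dubious) assertion ``$|x_s|\le |y_s|$ for EWMA.'' You instead unroll the recursion only $h$ times, obtaining the exact remainder $\lambda^h y_{t-h}$, from which the bound $\lambda^h\max_{s\le t}|y_s|$ is immediate since $t-h\le t$.

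Your approach is the cleaner one: it avoids any convergence hypothesis on the infinite series, never needs the questionable comparison between $|x_s|$ and $|y_s|$, and in fact delivers an identity rather than an inequality for the remainder. The paper's approach is the textbook IIR-tail-truncation computation and has the minor advantage of making explicit which inputs $x_{t-j}$ are being discarded, but it pays for this with an unjustified last step. You are also right to flag the off-by-one mismatch between the stated $\tilde{y}_t^{(h)}=(1-\lambda)\sum_{j=0}^{h-1}\lambda^j x_{t-j}$ and what actually falls out of the recursion $y_{t+1}=(1-\lambda)x_t+\lambda y_t$, namely $(1-\lambda)\sum_{j=0}^{h-1}\lambda^j x_{t-1-j}$; the paper's own infinite-series expansion carries the same inconsistency, so this is a defect of the theorem statement rather than of your proof.
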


\begin{proof}
  The exact EWMA can be written as
  \[
    y_t = (1-\lambda) \sum_{j=0}^{\infty} \lambda^j x_{t-j}.
  \]
  The truncation error is
  \[
    |y_t - \tilde{y}_t^{(h)}|
    = \left|(1-\lambda) \sum_{j=h}^{\infty} \lambda^j x_{t-j}\right|
    \le (1-\lambda) \sum_{j=h}^{\infty} \lambda^j \max_{s\le t}|x_s|.
  \]
  The geometric series sums to
  \[
    (1-\lambda) \sum_{j=h}^{\infty} \lambda^j
    = (1-\lambda) \cdot \frac{\lambda^h}{1-\lambda}
    = \lambda^h.
  \]
  Thus $|y_t - \tilde{y}_t^{(h)}| \le \lambda^h \max_{s\le t}|x_s|$. Since
  $|x_s| \le |y_s|$ for EWMA, the bound follows. Solving
  $\lambda^h = \varepsilon$ gives $h = \frac{\ln(1/\varepsilon)}{\ln(1/\lambda)}$.
\end{proof}

\begin{remark}[Approximate point-in-time idempotency]
  Theorem~\ref{thm:ewma} justifies treating IIR operations as approximately
  point-in-time idempotent by selecting a truncation horizon $h$ that achieves
  a target error tolerance $\varepsilon$. For example, with $\lambda = 0.9$ and
  $\varepsilon = 10^{-6}$, we require $h \ge \frac{\ln(10^6)}{\ln(10/9)} \approx 131$
  time steps. In practice, context windows of 100--200 observations suffice for
  common EWMA applications in financial time series.
\end{remark}

\subsection{Caching and reconciliation guarantees}

The deterministic nature of point-in-time idempotent computations enables
automatic caching and ensures consistency between research and production
executions.

\begin{theorem}[Cache correctness]
  \label{thm:cache}
  Let $N$ be a deterministic node that is point-in-time idempotent with context
  window $w$. Suppose the node's output depends only on (i) the last $w$ input
  rows, (ii) a configuration parameter $c$, and (iii) an implementation hash
  $h$. Then memoizing outputs by key $(X_{[t-w+1,t]}, c, h)$ reproduces
  exactly the output at time $t$ across different executions.
\end{theorem}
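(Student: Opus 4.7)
The plan is to reduce this essentially to Lemma~\ref{lem:local-dependence} plus a bookkeeping argument about what the memoization key encodes. The core observation is that point-in-time idempotency with context window $w$ already guarantees that the value $Y^{(j)}(t)$ is produced by a well-defined function of the trailing $w$-window of inputs, and determinism together with the configuration/implementation-hash conventions ensures that this function is the \emph{same} function across executions whenever $(c,h)$ agree.

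First I would invoke Lemma~\ref{lem:local-dependence} to obtain, for each output index $j$, a map $g_j^{c,h}$ such that whenever the node is executed with configuration $c$ and implementation hash $h$, the output at time $t$ satisfies
\[
  Y^{(j)}(t) \;=\; g_j^{c,h}\bigl( X^{(1)}|_{[t-w+1,t]},\dots,X^{(m)}|_{[t-w+1,t]} \bigr).
\]
I would annotate $g_j$ with the pair $(c,h)$ to emphasize that the lemma is applied to a specific instantiation of the node, since different $(c,h)$ correspond to different node specifications and thus, in general, to different extracted trailing-window functions.

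Next I would argue that the triple $(X_{[t-w+1,t]},c,h)$ uniquely determines $Y^{(j)}(t)$. Given two executions $E_1$ and $E_2$ that produce keys agreeing on all three components, the inputs on the relevant trailing window coincide by hypothesis (i), the configuration parameter entering $g_j$ is the same by (ii), and the implementation hash $h$ agreeing means that both executions evaluate literally the same code, so the extracted function $g_j^{c,h}$ is identical in both executions. Determinism then ensures that evaluating this common function on the identical trailing $w$-window yields bitwise identical outputs.

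Finally I would conclude the memoization guarantee: if execution $E_1$ stores $Y^{(j)}(t)$ under key $(X_{[t-w+1,t]},c,h)$ and execution $E_2$ later issues a lookup with a matching key, the cached value is precisely what $E_2$ would have produced by fresh evaluation, by the identity just established. I expect the main obstacle to be purely bookkeeping rather than mathematical: one must be careful that the informal notions ``implementation hash'' and ``configuration parameter'' in the statement are interpreted strictly enough that $h$ uniquely determines the code path and $c$ uniquely determines all free parameters of $g_j$, since otherwise two executions could collide on a key yet evaluate different functions. I would therefore state explicitly the assumption that $h$ is injective over the space of implementations the framework admits, which is the only non-trivial hypothesis hiding in the theorem.
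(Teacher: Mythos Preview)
Your proposal is correct and follows essentially the same line as the paper's proof: reduce to the fact that point-in-time idempotency makes the output a function of the trailing $w$-window (the paper cites the definition directly, you cite Lemma~\ref{lem:local-dependence}), then use determinism plus the $(c,h)$ components of the key to conclude that identical keys yield identical outputs. Your version is more careful about the role of $(c,h)$ and explicitly flags the injectivity assumption on $h$, which the paper's short proof leaves implicit.
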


\begin{proof}
  By point-in-time idempotency (Definition in subsection~\ref{subsec:context-windows}),
  the output at time $t$ is a function $F$ of $X_{[t-w+1,t]}$ alone, independent
  of how far back the input window extends. Since the node is deterministic and
  depends only on $(X_{[t-w+1,t]}, c, h)$, any two executions with identical
  values for this triple must produce identical outputs. Hence caching by this
  key is sound.
\end{proof}

\begin{theorem}[Research--production reconciliation]
  \label{thm:recon}
  Assume: (i) all nodes in a DAG $G$ are point-in-time idempotent with correct
  context windows; (ii) node outputs are cached by input window, configuration,
  and code hash; and (iii) mini-batch execution uses tile size $L \ge w(G)$.
  Then research (batch or mini-batch) and production (streaming) executions
  produce identical outputs on overlapping time indices.
\end{theorem}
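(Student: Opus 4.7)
The plan is to reduce the DAG-level claim to the node-level guarantees already established and then chain them via a topological induction. Concretely, fix an arbitrary time $t$ lying in the overlap of the research and production output ranges, and an arbitrary graph output stream; we want to show the value emitted at $t$ is the same under both execution modes.

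First I would set up the induction backbone. Topologically sort the nodes $v_1,\dots,v_V$ of $G$ and define the inductive hypothesis: for every node $v_k$ and every time $t$ at which the node has enough available context under the chosen execution, the value of each output stream of $v_k$ at time $t$ agrees between the two executions and equals the ideal output $Y^{(j)}_{v_k}(t)$ from the definition of ideal pointwise semantics. The base case is the source nodes, whose outputs depend only on external data that is, by assumption, identical across research and production on the overlapping index. For the inductive step, assume the hypothesis for all strict ancestors of $v_k$; then on any time index $u$ in the relevant window the input streams feeding $v_k$ are pointwise identical across executions, so by Lemma~\ref{lem:local-dependence} the output of $v_k$ at $t$ is determined by the last $L_{v_k}$ rows of those inputs, and hence matches the ideal output in both modes.

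Next I would handle the two execution-mode cases separately and plug them into the inductive step. In streaming mode the node processes a sliding window of length at least $L_{v_k}$, so Theorem~\ref{thm:single-step-tile-correctness} gives agreement with the ideal value at each new $t$. In mini-batch mode, the tile size condition $L \geq w(G) \geq \max_k L_{v_k}$ guarantees, via Theorem~\ref{thm:two-tiles-suffice}, that every time point in the second tile of any two-tile window attains the ideal output. The DAG-level context window $w(G)$ must accumulate along source-to-sink paths: a node $v_k$ whose ancestors require their own context to be filled first cannot produce ideal outputs until the earliest usable time has advanced by the sum of context lengths along the longest incoming path. Taking $w(G)$ to be this path-maximum (as stated when $w(G)$ is introduced earlier) makes the inductive step sound; this is the one bookkeeping point that deserves care.

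Finally I would fold in caching via Theorem~\ref{thm:cache}. For each node, the cache key $(X_{[t-w+1,t]}, c, h)$ is identical across executions precisely because, by the inductive hypothesis, the upstream inputs agree on the relevant $w$-window; determinism and identical code hash $h$ and configuration $c$ then force identical cached outputs, so substituting a cache hit for a fresh computation cannot break agreement. Completing the induction over all nodes yields equality of every graph output at every overlapping $t$. The main obstacle, as noted, is the careful propagation of context windows through the DAG so that the hypothesis ``enough context is available'' is simultaneously true in streaming and mini-batch mode; once $w(G)$ is interpreted as the maximal accumulated context along source-to-sink paths and the tile size respects $L \geq w(G)$, the reduction to the already-proved single-node theorems is routine.
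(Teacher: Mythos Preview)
Your proposal is correct, but it takes a noticeably different route from the paper. The paper's proof is a three-sentence argument: it treats the whole DAG as a single composite point-in-time idempotent node with context window $w(G)$, invokes Theorem~\ref{thm:two-tiles-suffice} once at that level to conclude mini-batch output equals the ideal (hence streaming) output on the second tile, and then invokes Theorem~\ref{thm:cache} to say caching cannot perturb this. There is no explicit induction over the DAG; the node-level theorem is applied wholesale to the composite object.

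Your approach instead unfolds this via topological induction, propagating the ``ideal output'' property node by node and invoking Lemma~\ref{lem:local-dependence}, Theorem~\ref{thm:single-step-tile-correctness}, and Theorem~\ref{thm:two-tiles-suffice} at each step. This is more explicit and arguably more self-contained, since the paper's short proof tacitly relies on the DAG behaving as a point-in-time idempotent node with window $w(G)$---a fact established only later in Proposition~\ref{prop:graph-window} and the graph-level two-tile theorem. The trade-off: your argument is longer and must manage the bookkeeping you flagged (accumulated context along paths so that internal nodes see filled windows), while the paper's argument is terse but leans on results proved afterward. Substantively they coincide; your version could be read as supplying the missing justification for the paper's black-box step.
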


\begin{proof}
  By Theorem~\ref{thm:two-tiles-suffice}, mini-batch execution with
  $L \ge w(G)$ produces outputs identical to streaming execution at every time
  point in the output tile. By Theorem~\ref{thm:cache}, cached computations are
  deterministic and match direct execution. Combining these results, any
  research execution using mini-batch tiling and caching produces the same
  outputs as production streaming execution, provided all nodes are correctly
  specified with their context windows.
\end{proof}

\begin{remark}[Practical implications for model validation]
  Theorem~\ref{thm:recon} provides a formal guarantee that models developed and
  validated in batch mode (using historical data) will produce identical
  predictions when deployed in streaming mode (processing real-time data),
  provided the model specification is correct. This eliminates a major source
  of discrepancy between research prototypes and production systems. In
  practice, automated testing frameworks verify this property by executing the
  same model in both batch and streaming modes on identical input data and
  asserting bitwise equality of outputs.
\end{remark}

\subsection{Knowledge time and causality enforcement}

In real-time systems, data arrives with inherent delays between event time (when
an event occurs) and knowledge time (when the system becomes aware of the event).
DataFlow enforces causality by tracking knowledge time and preventing access to
future data.

\begin{definition}[Knowledge time and causal execution]
  Each row of a stream dataframe has a \emph{knowledge time} $k(u)$ representing
  the wall-clock time at which the system becomes aware of the observation at
  logical time $u$. An execution is \emph{causal} if decisions at wall-clock
  step $s$ depend only on rows with $k(\cdot) \le s$.
\end{definition}

\begin{definition}[Embargo]
  An \emph{embargo} of $\Delta \in \mathbb{N}$ time steps emits the output for
  logical time $t$ no earlier than wall-clock time $t + \Delta$. This provides
  a safety margin to account for late-arriving data.
\end{definition}

\begin{theorem}[Causal safety with embargo]
  \label{thm:embargo}
  If all nodes are point-in-time idempotent and the system enforces causality
  with embargo $\Delta$ such that all late arrivals satisfy
  $k(u) \le u + \Delta$, then the embargoed execution is observationally
  equivalent to the ideal execution with no late data.
\end{theorem}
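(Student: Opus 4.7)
The plan is to show that the embargo delay of $\Delta$ is exactly long enough to convert knowledge-time availability into logical-time completeness over every context window, so that the embargoed pipeline has, at emission time, exactly the same input information as the ideal pipeline.

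First I would fix an arbitrary logical time $t$ for which the system emits an output, and invoke Lemma~\ref{lem:local-dependence}: under point-in-time idempotency with context window $L$, the ideal output $Y^{(j)}(t)$ depends only on the trailing window $X|_{[t-L+1,t]}$, regardless of what else is present or absent further in the past. Thus to prove observational equivalence at logical time $t$, it suffices to show that the embargoed execution sees exactly the same trailing $L$-window when it actually computes and emits the output for $t$.

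Next I would use the embargo definition to pin down the wall-clock time of emission. The output for logical time $t$ is released no earlier than wall-clock time $s := t + \Delta$. By the late-arrival assumption $k(u) \le u + \Delta$, every logical row $u \le t$ satisfies $k(u) \le u + \Delta \le t + \Delta = s$, so at wall-clock time $s$ all rows with logical time in $[t-L+1, t] \subseteq \{u : u \le t\}$ are already known to the system. Causality, on the other hand, forbids the computation at wall-clock step $s$ from using rows with $k(\cdot) > s$, i.e.\ it cannot be contaminated by rows whose knowledge time has not yet arrived. Combining the two, the embargoed execution at wall-clock time $s$ has access to precisely the subset of input rows whose logical timestamps $u$ satisfy $k(u) \le s$, and this subset is guaranteed to contain the entire ideal context window $[t-L+1,t]$.

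Finally I would close the argument by applying Lemma~\ref{lem:local-dependence} once more: since the trailing $L$-window seen by the embargoed pipeline agrees row-by-row with the one seen by the ideal pipeline, and since $g_j$ depends only on that window, the two outputs coincide for every output index $j$ and every logical time $t$; this is what observational equivalence means. I would also remark that any rows with $u > t$ which may happen to have $k(u) \le s$ are irrelevant, because causal execution at emission time for logical time $t$ is only permitted to consult the $L$-window ending at $t$. The main obstacle I anticipate is not a deep calculation but a notational one: stating cleanly what ``observationally equivalent'' means (equality of every emitted output across the two executions, indexed by logical time) and handling the difference between the knowledge-time filter $\{u : k(u)\le s\}$ and the logical-time filter $\{u : u \le t\}$ without conflating the two clocks; once this bookkeeping is set up, the inclusion $[t-L+1,t] \subseteq \{u : k(u)\le t+\Delta\}$ does all the work.
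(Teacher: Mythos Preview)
Your proposal is correct and follows essentially the same route as the paper: both arguments observe that the embargo guarantees all rows with logical time $u \le t$ have arrived by wall-clock time $t+\Delta$, and then invoke point-in-time idempotency to conclude the output at $t$ depends only on the context window $[t-L+1,t]$, which is therefore fully available. Your version is somewhat more explicit---you cite Lemma~\ref{lem:local-dependence} directly and handle the edge case of early-arriving future rows---whereas the paper's proof is a terse paragraph, but the logical skeleton is identical.
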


\begin{proof}
  By assumption, all data with logical time up to $t$ arrives by wall-clock
  time $t + \Delta$. The embargo ensures that outputs for logical time $t$ are
  not emitted before wall-clock time $t + \Delta$. Thus, at emission time, all
  relevant input data is available, and the computation proceeds identically to
  the ideal case where data arrives instantaneously at logical time. Point-in-time
  idempotency guarantees that the output at $t$ depends only on the context
  window $[t-w+1,t]$, independent of when this data actually arrived (as long
  as it arrived before output emission).
\end{proof}

\begin{proposition}[Detection of future-peeking]
  \label{prop:detection}
  If a node uses data from time $t+\delta$ (where $\delta > 0$) when producing
  outputs at time $t$, then there exists an arrival schedule where streaming
  and two-tile mini-batch execution (with $L \ge w(G)$) produce different
  outputs, thus detecting the causality violation.
\end{proposition}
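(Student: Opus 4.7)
The plan is to exhibit two input streams that are indistinguishable to any causal streaming execution at time $t$ but that can be distinguished by a mini-batch run, which forces the two execution modes to disagree on at least one of them. First I would unpack the hypothesis that the node ``uses data from time $t+\delta$'' in an operational sense: it must mean that there exist two stream dataframes $X$ and $X'$ agreeing on all times $u \ne t+\delta$ (and both defined on some sufficiently long interval containing $[t-w(G)+1,\,t+\delta]$) such that
\[
  F_{N,[t-w(G)+1,\,t+\delta]}(X)_t
  \;\ne\;
  F_{N,[t-w(G)+1,\,t+\delta]}(X')_t.
\]
If no such pair existed, the output at $t$ would be independent of the slot at $t+\delta$, contradicting the assumption that the bug has semantic effect.

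Next I would fix the arrival schedule $k(u) = u$ (or $k(u) = u + \Delta$ for an embargo $\Delta < \delta$). In streaming mode, the output for logical time $t$ must be emitted before $X_{t+\delta}$ is observable, so by causality of the streaming runtime the streaming outputs on $X$ and $X'$ at time $t$ coincide: $Y^{\text{stream}}_t(X) = Y^{\text{stream}}_t(X')$. Then I would pick a tile length $\tau \ge \max(w(G),\, \delta+1)$ and instantiate the two-tile window $I^{(2)} = [t+\delta-2\tau+1,\, t+\delta]$. The second tile $[t+\delta-\tau+1,\, t+\delta]$ contains $t$, so by Theorem~\ref{thm:two-tiles-suffice} the output at $t$ lies in the valid output region of the mini-batch; because $t+\delta \in I^{(2)}$, the node implementation has in-batch access to $X_{t+\delta}$, and the mini-batch outputs on $X$ and $X'$ at $t$ are exactly the two values from Step 1, which differ: $Y^{\text{batch}}_t(X) \ne Y^{\text{batch}}_t(X')$.

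To conclude, I would observe that if both streaming--mini-batch equalities $Y^{\text{stream}}_t(X) = Y^{\text{batch}}_t(X)$ and $Y^{\text{stream}}_t(X') = Y^{\text{batch}}_t(X')$ held, then chaining through $Y^{\text{stream}}_t(X) = Y^{\text{stream}}_t(X')$ would force $Y^{\text{batch}}_t(X) = Y^{\text{batch}}_t(X')$, contradicting the previous paragraph. Hence at least one of the two arrival schedules (data stream $X$ or data stream $X'$) is the required witness on which streaming and two-tile mini-batch execution disagree at $t$.

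The main obstacle is the first step: pinning down a clean operational reading of ``uses data from time $t+\delta$''. A purely syntactic read of a future cell that never influences the output would not yield any observable discrepancy, and the proposition would be vacuous; the statement must be interpreted as genuine \emph{semantic} dependence, i.e., the existence of differential inputs as above. Once that is granted, the remainder is a short combination of the causality of any streaming runtime with the two-tile correctness guarantee of Theorem~\ref{thm:two-tiles-suffice}.
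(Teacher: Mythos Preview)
Your argument is correct and shares the paper's core idea---arrange for the datum at $t+\delta$ to be absent when the streaming runtime emits its time-$t$ output but present in the mini-batch window---yet you execute it with considerably more care. The paper's proof simply asserts that because streaming ``cannot access the unavailable future data'' while batch can, ``the streaming and batch outputs will differ''; it never spells out what the streaming implementation actually computes in the absence of $X_{t+\delta}$, nor why that value must differ from the batch value. Your two-input differential construction (pick $X,X'$ agreeing off $t+\delta$, observe streaming collapses them while batch separates them, then pigeonhole) is exactly what is needed to close that gap, and your explicit discussion of the semantic-versus-syntactic reading of ``uses'' is a point the paper leaves entirely implicit.

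One small quibble: your appeal to Theorem~\ref{thm:two-tiles-suffice} is misplaced. That theorem asserts \emph{correctness} of the second-tile outputs under the hypothesis of point-in-time idempotency, which is precisely what the offending node violates. All you actually need there is the \emph{definition} of the mini-batch output tile (outputs are reported for every $u$ in the second tile), together with your choice $\tau \ge \delta+1$ to ensure both that $t$ lies in the second tile and that $t+\delta$ lies in the input window $I^{(2)}$. Drop the theorem citation and the step stands on its own.
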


\begin{proof}
  Consider an arrival schedule where data at time $t+\delta$ arrives late, after
  the streaming system has already produced the output at time $t$. In streaming
  mode, the node cannot access the unavailable future data. In mini-batch mode
  with post-facto data, the node has access to all data including $t+\delta$.
  If the node's output at $t$ depends on data from $t+\delta$, the streaming
  and batch outputs will differ, revealing the future-peeking violation.
\end{proof}

\begin{remark}[Probabilistic detection via randomized tiling]
  If future-peeking violations occur with frequency $\rho > 0$ over random tile
  placements, then $n$ independent tiling tests detect at least one violation
  with probability $1 - (1-\rho)^n$. This provides a practical method for
  validating causal correctness: partition historical data into tiles with
  random boundaries, execute in both batch and streaming modes, and verify
  output equality. Discrepancies indicate either causality violations or
  incorrect context window specifications.
\end{remark}

\subsection{Batch and streaming execution modes}
\label{subsec:batch-streaming-modes}

The tilability property established in the preceding sections enables DataFlow
to execute the same computational specification in fundamentally different
operational modes while guaranteeing output consistency. This section
formalizes the relationship between batch and streaming execution and
characterizes their performance tradeoffs.

\begin{definition}[Batch execution mode]
  In \emph{batch mode}, a node (or DAG) processes historical data over an
  interval $[t_{\min}, t_{\max}]$ by partitioning it into $K$ tiles of
  length $\tau \geq L$:
  \[
    [t_{\min}, t_{\max}]
    = [t_{\min}, t_1] \cup [t_1+1, t_2] \cup \cdots \cup [t_{K-1}+1, t_{\max}],
  \]
  where each tile satisfies $t_k - t_{k-1} \geq \tau$ and tiles may overlap to
  provide sufficient context windows. Execution proceeds by applying the
  computational node(s) to each tile sequentially or in parallel, producing
  outputs that are subsequently concatenated.
\end{definition}

\begin{definition}[Streaming execution mode]
  In \emph{streaming mode}, data arrives incrementally over time. At each time
  $t$, the node receives input data on the window $[t-L+1, t]$ (or possibly a
  longer prefix) and produces output at time $t$. The execution advances the
  time index as new observations become available, processing tiles of minimal
  size $\tau = L$ (the context window).
\end{definition}

\begin{proposition}[Batch-streaming output equivalence]
  \label{prop:batch-streaming-equiv}
  Let $N$ be a point-in-time idempotent node with context window $L$, and let
  $X^{(1)},\dots,X^{(m)}$ be input streams defined on $[t_{\min}, t_{\max}]$.
  Then for any time $t \in [t_{\min}+L-1, t_{\max}]$:
  \begin{enumerate}
    \item The output $Y(t)$ computed in streaming mode (using tiles of length
      $\tau = L$) coincides with the output computed in batch mode (using any
      tiling with $\tau \geq L$).
    \item Both outputs coincide with the ideal pointwise output defined in
      Section~\ref{subsec:context-windows}.
  \end{enumerate}
\end{proposition}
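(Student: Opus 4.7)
The plan is to show that in both execution modes, the output at any valid time $t$ coincides with the ideal pointwise output $Y^{(j)}(t)$ defined in Section~\ref{subsec:context-windows}; the equivalence in part~(1) then follows immediately from transitivity of equality applied to part~(2). This reduces the proposition to invoking the two tile-correctness results already established in the excerpt, namely Theorem~\ref{thm:single-step-tile-correctness} and Theorem~\ref{thm:two-tiles-suffice}.

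I would first dispatch the streaming case. By definition of streaming mode, the tile used at time $t$ is $[t-L+1,t]$ and has length exactly $\tau = L$; since $t \geq t_{\min}+L-1$, this tile lies within the domain of each input stream. Applying Theorem~\ref{thm:single-step-tile-correctness} with $\tau = L$ directly yields $Y^{(j)}_{[t-L+1,t]}(t) = Y^{(j)}(t)$ for each $j \in \{1,\dots,n\}$.

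For the batch case, I would fix any tiling of $[t_{\min}, t_{\max}]$ with $\tau \geq L$ and analyze the tile $[t_{k-1}+1, t_k]$ containing $t$. If $k \geq 2$, the preceding tile is available as context, so the two-tile window $I^{(2)}_{t_k}$ of length $2\tau \geq 2L$ contains $t$ in its second tile; Theorem~\ref{thm:two-tiles-suffice} then gives $Y^{(j)}_{I^{(2)}_{t_k}}(u) = Y^{(j)}(u)$ for every $u \in [t_{k-1}+1, t_k]$, and in particular at $u = t$. If $k = 1$, no preceding tile exists, but the hypothesis $t \geq t_{\min}+L-1$ guarantees that the window $[t_{\min}, t]$ has length at least $L$; a direct application of Theorem~\ref{thm:single-step-tile-correctness} with tile length $t-t_{\min}+1 \geq L$ and right endpoint $t$ again recovers the ideal output.

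The main obstacle I anticipate is purely bookkeeping around the definition of ``batch mode'' when tiles are allowed to overlap for context: one must verify that whichever window the implementation actually feeds into $F_N$ at time $t$ has length at least $L$ and contains the trailing $L$ inputs relative to $t$. Once this is checked, point-in-time idempotency, via Lemma~\ref{lem:local-dependence}, makes the output at $t$ insensitive to both the precise starting point of the window and to how far back the input is extended (as also noted in the invariance clause of Theorem~\ref{thm:two-tiles-suffice}). Hence both streaming and batch modes collapse to $Y^{(j)}(t)$, completing the proof.
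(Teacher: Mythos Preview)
Your proposal is correct and follows essentially the same route as the paper: both reduce the claim to the ideal pointwise output via Theorem~\ref{thm:single-step-tile-correctness} for the streaming side and Theorem~\ref{thm:two-tiles-suffice} for the batch side, then conclude by transitivity. Your version is in fact slightly more careful than the paper's, since you explicitly handle the edge case where $t$ lies in the first tile (no predecessor available) by falling back on Theorem~\ref{thm:single-step-tile-correctness}; the paper's proof leaves this implicit.
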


\begin{proof}
  This follows directly from Theorems~\ref{thm:single-step-tile-correctness}
  and~\ref{thm:two-tiles-suffice}. In streaming mode, at time $t$ the node
  receives input on $[t-L+1,t]$ and produces output $Y_{\text{stream}}(t)$.  By
  Theorem~\ref{thm:single-step-tile-correctness}, this equals the ideal output
  $Y(t)$.

  In batch mode, the time point $t$ lies within some tile or at its boundary.
  By Theorem~\ref{thm:two-tiles-suffice}, as long as the tile (and its
  predecessor, if necessary) provides at least $L$ preceding observations, the
  output at $t$ equals the ideal output $Y(t)$. Hence
  $Y_{\text{stream}}(t) = Y(t) = Y_{\text{batch}}(t)$.
\end{proof}

Proposition~\ref{prop:batch-streaming-equiv} establishes the central guarantee
of DataFlow: a model developed and validated using batch execution on
historical data will produce \emph{identical outputs} when deployed in
streaming mode, provided the model is correctly specified (point-in-time
idempotent with known context window $L$) and causality is preserved.

\begin{remark}[Operational tradeoffs between batch and streaming modes]
  \label{rem:batch-streaming-tradeoffs}
  While batch and streaming modes produce equivalent outputs, they exhibit
  distinct performance characteristics:

  \textbf{Batch mode advantages:}
  \begin{itemize}
    \item \emph{Throughput.} Processing large tiles amortizes scheduling
      overhead and enables aggressive vectorization across time, increasing
      data throughput.
    \item \emph{Parallelism.} Independent tiles may be executed concurrently
      across multiple processors or machines.
    \item \emph{Optimization.} Compilers and runtime systems can apply
      optimizations (loop fusion, memory layout transformations) over large
      contiguous data blocks.
  \end{itemize}

  \textbf{Streaming mode advantages:}
  \begin{itemize}
    \item \emph{Latency.} Minimal tile sizes ($\tau = L$) reduce time-to-output
      for individual predictions, critical for low-latency applications.
    \item \emph{Memory footprint.} Processing small tiles reduces working
      memory requirements, enabling execution on resource-constrained devices.
    \item \emph{Causality validation.} Operating with real-time data arrival
      patterns exposes future-peeking errors that may be masked in batch mode.
  \end{itemize}

  \textbf{Hybrid execution:}
  DataFlow supports intermediate tile sizes $L \leq \tau < (t_{\max} - t_{\min})$,
  enabling exploration of the throughput-latency-memory tradeoff space. For
  example, near-real-time systems may batch observations arriving within short
  intervals (e.g., one second) to exploit vectorization while maintaining
  acceptable latency. The optimal tile size $\tau^*$ depends on application
  requirements, data characteristics, and computational resources, and is a
  tunable parameter independent of the model specification.
\end{remark}

\begin{remark}[Benefits of unified batch-streaming semantics]
  The equivalence established in Proposition~\ref{prop:batch-streaming-equiv}
  addresses several challenges in time series machine learning:
  \begin{enumerate}
    \item \textbf{Development-production consistency.} Researchers develop and
      validate models using batch execution on historical data (for speed), then
      deploy the identical model specification in streaming mode (for real-time
      predictions), eliminating the need for model reimplementation and the
      associated risk of semantic discrepancies.

    \item \textbf{Debugging production failures.} When a production streaming
      system exhibits unexpected behavior, the exact sequence of inputs can be
      captured and replayed in batch mode (with arbitrarily small tile sizes to
      match streaming execution precisely), enabling systematic debugging in
      controlled environments.

    \item \textbf{Flexible reprocessing.} Historical periods may be reprocessed
      with updated models using efficient batch execution, while forward-going
      predictions continue in streaming mode, without requiring separate
      execution engines.

    \item \textbf{Testing and validation.} The same model specification can be
      subjected to both backtesting (batch execution over historical periods)
      and paper trading (streaming execution with simulated real-time data
      arrival), providing comprehensive validation across execution modalities.
  \end{enumerate}
  These capabilities stem directly from the tilability property and
  point-in-time idempotency requirements imposed on DataFlow computations.
\end{remark}

  \section{DAGs}

\subsection{Nodes}

Nodes represent computation that is point-in-time idempotent on tiles of a
streaming dataframe. Tiles have a primary index representing time. In practice,
multiple time points are associated with an element of data, such as event
time, knowledge time, processing time, etc. One notion of time is chosen as
primary, while other notions may remain available as features.

Formally, recall that a (multi-column) stream dataframe is a map
\[
  X : \mathbb{T}_{\le t_{\max}} \times \mathcal{C} \to \mathcal{V},
\]
where $\mathbb{T}_{\le t_{\max}} \subseteq \mathbb{T}$ is a discrete time
prefix, $\mathcal{C}$ is a finite column index set, and $\mathcal{V}$ is a
value space.  We distinguish a \emph{primary time} component $t \in \mathbb{T}$
and may include any secondary time notions (event time, knowledge time,
processing time, etc.) as columns in $\mathcal{C}$.

A tile of length $\tau$ ending at time $t$ is a restriction
$X|_{[t-\tau+1,t]\times \mathcal{C}}$.  As in previous sections, each node is
equipped with a context window length $L \in \mathbb{N}$ and is required to be
point-in-time idempotent with respect to this primary time index.

\begin{definition}[Admissible node function]
  Fix a context window length $L \in \mathbb{N}$.  A (stateless) node function
  on tiles is a family of maps
  \[
    F_{[s,t]} :
      \bigl(\mathcal{V}^{[s,t]\times\mathcal{C}}\bigr)^m
      \longrightarrow
      \bigl(\mathcal{V}^{[s,t]\times\mathcal{C}'}\bigr)^n,
      \quad s \le t,
  \]
  where $m$ is the number of input dataframes, $n$ is the number of output
  dataframes, and $\mathcal{C}'$ is the output column index set.  We say that
  $F$ is \emph{admissible} as a DataFlow node if:
  \begin{enumerate}
    \item (Prefix restriction) $F$ is defined for every finite time interval
      $[s,t]$ contained in the domains of the inputs.
    \item (Point-in-time idempotency) There exists $L$ such that for all
      $t \in \mathbb{T}$, all intervals $[s_1,t]$, $[s_2,t]$ with
      $t-s_1+1 \ge L$, $t-s_2+1 \ge L$, and all inputs
      $X^{(1)},\dots,X^{(m)}$,
      the outputs at time $t$ coincide:
      \[
        F_{[s_1,t]}(X^{(1)}|_{[s_1,t]\times\mathcal{C}},\dots)
        \Big|_{\{t\}\times\mathcal{C}'}
        =
        F_{[s_2,t]}(X^{(1)}|_{[s_2,t]\times\mathcal{C}},\dots)
        \Big|_{\{t\}\times\mathcal{C}'}.
      \]
  \end{enumerate}
  Any such $F$ induces a pointwise, primary-time semantics that is independent
  of the length of the input window beyond $L$.
\end{definition}

A key feature of DataFlow is that any user-defined function that is
point-in-time idempotent is admissible in a computational node. This
flexibility makes it easy to convert pre-existing code into a DataFlow
pipeline. It also enables users to write and deploy pipelines using the
computational abstractions natural for the problem at hand. In other words,
user-defined functions need not be decomposed into computational primitives in
order to be executed in DataFlow.

Formally, let $\mathsf{UF}$ be the class of user-defined functions
\[
  f : \bigl(\mathcal{V}^{[s,t]\times\mathcal{C}}\bigr)^m
      \longrightarrow
      \bigl(\mathcal{V}^{[s,t]\times\mathcal{C}'}\bigr)^n
\]
that satisfy the point-in-time idempotency condition for some context window
length $L_f$.  Then every $f \in \mathsf{UF}$ defines an admissible node in the
sense above, with node context window $L_f$.

Nodes may have state and may call out to external resources. For example, a
machine learning node may operate in a ``fit'' mode, where it learns from tiles
that are processed and stores a trained model upon completion. When operating
in a ``predict'' mode, the node loads into memory or calls out via an API the
trained model and emits the output of the model as applied to the tiles seen
during prediction.

We can formalize such behavior as a \emph{stateful} node.

\begin{definition}[Stateful node with modes]
  A stateful node is given by:
  \begin{itemize}
    \item a state space $\mathcal{S}$,
    \item a set of modes $\mathcal{M}$ (e.g.\ $\mathcal{M} = \{\mathrm{fit},
          \mathrm{predict}\}$),
    \item for each mode $m \in \mathcal{M}$ and time interval $[s,t]$,
      a transition map
      \[
        F^{(m)}_{[s,t]} :
          \bigl(\mathcal{V}^{[s,t]\times\mathcal{C}}\bigr)^m
          \times \mathcal{S}
          \longrightarrow
          \bigl(\mathcal{V}^{[s,t]\times\mathcal{C}'}\bigr)^n
          \times \mathcal{S},
      \]
      mapping inputs and current state to outputs and next state.
  \end{itemize}
  We say that $F^{(m)}$ is \emph{point-in-time idempotent in the data
  arguments} if, for each fixed state $s \in \mathcal{S}$, the map
  $(X^{(1)},\dots,X^{(m)}) \mapsto F^{(m)}_{[s,t]}(X^{(1)},\dots,X^{(m)},s)$ is
  an admissible node in the sense above, with some context window length
  $L_{m}$ independent of $s$.
\end{definition}

In ``fit'' mode, the node applies $F^{(\mathrm{fit})}$, updating its internal
state (e.g.\ model parameters).  In ``predict'' mode, the node applies
$F^{(\mathrm{predict})}$ using the frozen state produced during fitting.  As
long as both transition maps are point-in-time idempotent in the data
arguments, the node remains admissible in DataFlow.

Among the simplest nodes are the single-source single-ouput (SISO) nodes, which
accept one streaming dataframe as input and emit one streaming dataframe as
output.

\begin{definition}[SISO, source, and sink nodes]
  Let a node have $m$ input streams and $n$ output streams.
  \begin{itemize}
    \item The node is \emph{SISO} if $m = n = 1$.
    \item The node is a \emph{source} if $m = 0$ and $n \ge 1$.
    \item The node is a \emph{sink} if $m \ge 1$ and $n = 0$.
  \end{itemize}
\end{definition}

Source nodes are those that do not receive input from another node; examples
include those that call out to external databases to generate output and those
that generate synthetic data for testing or verification purposes. Sink nodes
are those that do not emit output; examples include nodes that serialize
results to disk and those that simply call out to an external service. An
example of a node with multiple inputs is a join node, and an example of a node
with multiple outputs is a splitter node that splits columns of its input into
multiple sets across multiple streaming dataframes.

\subsection{DAGs}

In DataFlow, computational nodes are organized into a directed acyclic graph
(DAG). In this sense, DataFlow may be interpreted as a dataflow programming
paradigm, as the data (streaming dataframes) flows between nodes (computational
operations). The DAG structure supports parallelism at the coarsest level. In
particular, if neither node $A$ nor node $B$ is an ancestor of the other, then
the two nodes may perform computation in parallel.

Formally, a DataFlow DAG is a directed acyclic graph
$G = (V,E)$ whose vertices $v \in V$ are nodes (as above) and whose edges
$(u,v) \in E$ connect outputs of $u$ to inputs of $v$.  We write
$u \prec v$ if there exists a directed path from $u$ to $v$; this induces a
partial order on $V$.  Two nodes $A,B \in V$ are \emph{concurrent} if neither
$A \prec B$ nor $B \prec A$ holds, in which case they may be scheduled in
parallel in any topological ordering.

A simple DAG with 5 nodes, including one source and one sink, is depicted in
Figure \ref{fig:simple_dag}. Note that nodes $C_1$ and $C_2$ may be computed in
parallel, as they have no dependency relationship.

%
%
%
%
\begin{figure}[!ht]
  \includegraphics[width=\linewidth]{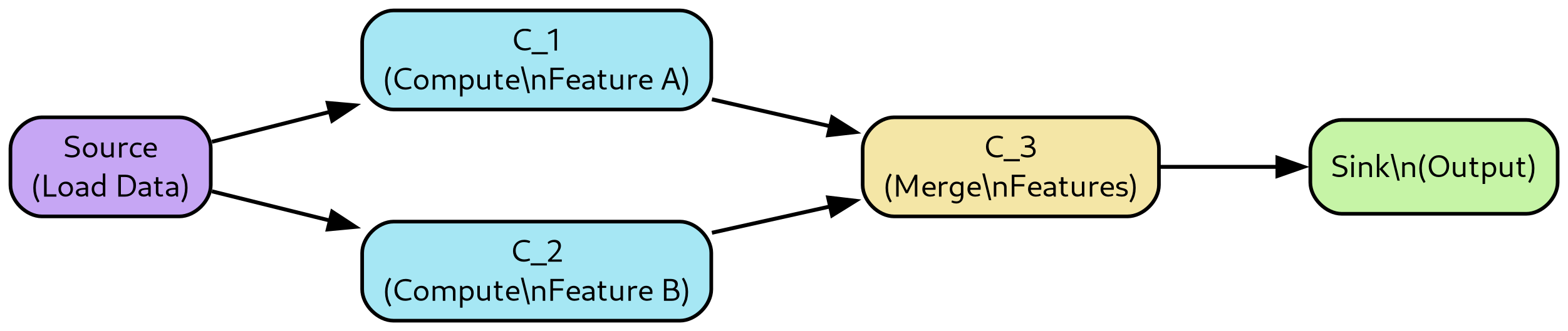}
  \caption{Simple DAG with concurrent execution. Nodes $C_1$ and $C_2$ can execute in parallel since neither depends on the other.}
  \label{fig:simple_dag}
\end{figure}

To determine the context window requirement of a DAG, we first formalize the
context window of a path.

Let each node $v \in V$ be point-in-time idempotent with a (minimal) context
window length $w_v \in \mathbb{N}$.  Consider a path
$p_{i_0,i_1,\dots,i_n} = (v_{i_0},v_{i_1},\dots,v_{i_n})$ from a source node
$v_{i_0}$ to a sink node $v_{i_n}$.

\begin{definition}[Path context window]
  The \emph{context window} $w(p_{i_0,\dots,i_n})$ of a path
  $p_{i_0,\dots,i_n}$ is defined as the minimal integer $L$ such that the
  outputs at time $t$ of $v_{i_n}$ depend only on the inputs at the source
  node $v_{i_0}$ on the time interval $[t-L+1,t]$, for all $t$.  Equivalently:
  if two executions of the DAG agree on all source-input stream values on
  $[t-L+1,t]$, then they produce identical outputs at time $t$ at the sink
  along that path.
\end{definition}

The following proposition shows that the formula used in DataFlow indeed
computes this quantity.

\begin{proposition}[Closed form for path context window]
  \label{prop:path-window}
  Let $p_{i_0,i_1,\dots,i_n}$ be a source-to-sink path in $G$, and let
  $w_{i_k}$ be the context window of node $v_{i_k}$ for $k=0,\dots,n$.  Then
  the path context window satisfies
  \[
    w(p_{i_0, i_1, \ldots, i_n})
    = 1 + \sum_{k = 0}^n (w_{i_k} - 1)
    = 1 - n + \sum_{k = 0}^n w_{i_k}.
  \]
\end{proposition}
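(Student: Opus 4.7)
The plan is to induct on the path length $n$, propagating the context-window requirement backward through the path one node at a time, using Lemma~\ref{lem:local-dependence} to justify each propagation step.

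\textbf{Base case ($n=0$).} A single-node path has context window $w_{i_0}$ by the definition of node context window, which equals $1 + (w_{i_0} - 1)$, matching the formula.

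\textbf{Inductive step.} Assume the formula holds for source-to-sink paths of length $n-1$. Given a length-$n$ path $p = (v_{i_0}, v_{i_1}, \dots, v_{i_n})$, consider its suffix $p' = (v_{i_1}, \dots, v_{i_n})$, which I treat as a path whose ``source'' is $v_{i_1}$. By the inductive hypothesis,
\[
  w(p') = W' := 1 + \sum_{k=1}^{n}(w_{i_k} - 1),
\]
so the output of $v_{i_n}$ at time $t$ is fully determined by the inputs of $v_{i_1}$ along the edge from $v_{i_0}$ on the interval $[t - W' + 1,\, t]$. These inputs coincide with the outputs of $v_{i_0}$ on that same interval. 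By point-in-time idempotency of $v_{i_0}$ with window $w_{i_0}$ (Lemma~\ref{lem:local-dependence}), each output at time $u$ depends only on the inputs of $v_{i_0}$ over $[u - w_{i_0} + 1,\, u]$. Taking the union over $u \in [t - W' + 1,\, t]$ gives the interval $[t - W' - w_{i_0} + 2,\, t]$, whose length is
\[
  W' + w_{i_0} - 1 \;=\; 1 + \sum_{k=0}^{n}(w_{i_k} - 1).
\]
This yields the upper bound on $w(p)$.

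For the matching lower bound, I would argue that no shorter window suffices. Using the minimality of each $w_{i_k}$, there exist, at each stage, pairs of inputs differing only at the oldest point of the context window whose outputs at the end of the window differ. Chaining these witnesses along the path, one constructs two input streams for $v_{i_0}$ that agree on $[t - W + 2,\, t]$ (where $W$ denotes the formula's value) but disagree at $t - W + 1$, and whose propagation yields distinct sink outputs at $t$.

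\textbf{Main obstacle.} The upper bound is a clean interval-union calculation wrapped in a trivial induction. The tightness (lower bound) is the delicate step: it requires that the per-node sensitivity witnesses compose along the path without cancellation, i.e., that a genuine dependence introduced at the earliest input of $v_{i_0}$ actually reaches the sink rather than being absorbed by a downstream transformation. For natural classes of nodes (linear filters, FIR operators, differencing, rolling aggregates) this propagation is immediate, but in full generality it requires either a nondegeneracy assumption on the node family or a careful end-to-end construction; this is the step most likely to conceal subtle hypotheses.
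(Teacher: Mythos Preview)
Your argument is essentially the paper's: induction on path length, with the inductive step an interval-union calculation giving the recurrence $w(p) = w(p') + w_{\text{new node}} - 1$. The only difference is cosmetic: the paper peels off the \emph{last} node and inducts on the prefix $p' = (v_{i_0},\dots,v_{i_{n-1}})$, whereas you peel off the \emph{first} node and induct on the suffix $p' = (v_{i_1},\dots,v_{i_n})$. Both decompositions yield the same one-line calculation.

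Your treatment of the lower bound is in fact more careful than the paper's. The paper's inductive step only shows that the output at time $t$ depends on source inputs within an interval of the stated length, i.e.\ the upper bound $w(p) \le 1 + \sum_k (w_{i_k}-1)$, and then simply asserts equality. Your observation that tightness requires the per-node sensitivity witnesses to compose without cancellation is correct: a downstream node that discards information (e.g.\ a constant map, which has minimal window $1$) would make the path window strictly smaller than the formula predicts. The paper leaves this nondegeneracy assumption implicit; your flagging of it is a genuine improvement in rigor, not a gap in your proof.
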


\begin{proof}
  We argue inductively on the length $n$ of the path.

  \emph{Base case $n=0$.}  The path consists of a single node $v_{i_0}$ with
  context window $w_{i_0}$.  By definition of context window, the outputs at
  time $t$ depend only on inputs on $[t-w_{i_0}+1,t]$, and this dependence is
  minimal.  Thus $w(p_{i_0}) = w_{i_0}$, which agrees with the formula
  $1 + (w_{i_0}-1)$.

  \emph{Inductive step.}  Suppose the formula holds for all paths of length
  $n-1$, and consider a path
  $p_{i_0,\dots,i_{n-1},i_n}$.  Let $p'$ denote its prefix
  $(v_{i_0},\dots,v_{i_{n-1}})$, and write $w(p')$ for the context window of
  $p'$.  By the inductive hypothesis,
  \[
    w(p') = 1 + \sum_{k=0}^{n-1} (w_{i_k}-1).
  \]

  Consider the output of $v_{i_n}$ at time $t$.  This depends on the outputs of
  $v_{i_{n-1}}$ on the time interval $[t-w_{i_n}+1,t]$.  For each
  $u \in [t-w_{i_n}+1,t]$, the output of $v_{i_{n-1}}$ at time $u$ depends only
  on the inputs at $v_{i_0}$ on the interval
  \[
    [u - w(p') + 1,\,u],
  \]
  by definition of $w(p')$.  The earliest time in this union, as $u$ ranges
  over $[t-w_{i_n}+1,t]$, is obtained by taking $u = t - w_{i_n} + 1$ and then
  its left endpoint:
  \[
    t - w_{i_n} + 1 - w(p') + 1
    = t - \bigl( w_{i_n} + w(p') - 2 \bigr).
  \]
  Hence the outputs of $v_{i_n}$ at time $t$ depend only on inputs at $v_{i_0}$
  on the interval
  \[
    \Bigl[ t - \bigl( w_{i_n} + w(p') - 2 \bigr),\, t \Bigr],
  \]
  which has length
  \[
    1 + \bigl( w_{i_n} + w(p') - 2 \bigr)
    = w_{i_n} + w(p') - 1.
  \]
  Thus
  \[
    w(p_{i_0,\dots,i_n})
    = w_{i_n} + w(p') - 1.
  \]
  Substituting the inductive expression for $w(p')$ gives
  \[
    w(p_{i_0,\dots,i_n})
    = w_{i_n} + \Bigl( 1 + \sum_{k=0}^{n-1} (w_{i_k}-1) \Bigr) - 1
    = 1 + \sum_{k=0}^n (w_{i_k}-1),
  \]
  as desired.
\end{proof}

Thus the context window $w$ of path $p_{i_0, i_1, \ldots, i_n}$ is given by
\begin{align*}
    w(p_{i_0, i_1, \ldots, i_n}) &= 1 + \sum_{k = 0}^n (w_{i_k} - 1) \\
    &= 1 - n + \sum_{k = 0}^n w_{i_k}.
\end{align*}
Note that the context window of a path of nodes each of context window $1$ is
itself $1$. On the other hand, if $p$ is a path of $n$ context window-$2$
nodes, then the context window of $p$ is $n + 1$, in accordance with the
formula.

This is due to the fact that, in proceeding backward along the path, the
context window requirement for the current node is the union of the context
window requirements of the output node; at each step, an additional time point
enters into the context window requirement.

The following worked example illustrates the path context window calculation
with concrete values.

\begin{figure}[ht]
\centering
\textbf{Example: Path Context Window Calculation}

\vspace{0.3cm}

Consider a linear path $v_1 \to v_2 \to v_3$ with context windows $w_1=2$, $w_2=3$, $w_3=2$.

\vspace{0.2cm}

\begin{minipage}{0.45\textwidth}
\centering
\begin{tabular}{|l|c|c|}
\hline
\textbf{Node} & \textbf{Context} & \textbf{Cumulative} \\
\hline
$v_1$ & $w_1 = 2$ & $2$ \\
\hline
$v_2$ & $w_2 = 3$ & $2 + (3-1) = 4$ \\
\hline
$v_3$ & $w_3 = 2$ & $4 + (2-1) = 5$ \\
\hline
\end{tabular}
\end{minipage}
\hfill
\begin{minipage}{0.5\textwidth}
\begin{tikzpicture}[scale=0.9]
  \node[draw, circle, minimum size=1cm] (v1) at (0, 0) {$v_1$};
  \node[draw, circle, minimum size=1cm] (v2) at (2.5, 0) {$v_2$};
  \node[draw, circle, minimum size=1cm] (v3) at (5, 0) {$v_3$};

  \draw[->, thick] (v1) -- (v2);
  \draw[->, thick] (v2) -- (v3);

  \node[above, font=\small] at (0, 0.7) {$w=2$};
  \node[above, font=\small] at (2.5, 0.7) {$w=3$};
  \node[above, font=\small] at (5, 0.7) {$w=2$};
\end{tikzpicture}
\end{minipage}

\vspace{0.4cm}

\textbf{Timeline showing data dependencies:}

\vspace{0.2cm}

\begin{tikzpicture}[scale=1.0]
  \draw[thick,->] (0,0) -- (12,0) node[right] {time};

  \foreach \i/\t in {0/$t_{-4}$, 2/$t_{-3}$, 4/$t_{-2}$, 6/$t_{-1}$, 8/$t_0$} {
    \draw[thick] (\i, -0.1) -- (\i, 0.1);
    \node[below] at (\i, -0.2) {\small \t};
  }

  \fill[red!20] (0, 0.5) rectangle (4, 1.2);
  \node at (2, 0.85) {\small $v_1$ needs 2 steps};

  \fill[orange!20] (0, 1.4) rectangle (6, 2.1);
  \node at (3, 1.75) {\small $v_2$ needs 4 steps total};

  \fill[green!20] (0, 2.3) rectangle (8, 3.0);
  \node at (4, 2.65) {\small $v_3$ needs 5 steps total};

  \draw[ultra thick, blue] (8, 0) -- (8, 3.2);
  \node[above, blue] at (8, 3.2) {Output};

\end{tikzpicture}

\vspace{0.2cm}

The path context window is $w(v_1 \to v_2 \to v_3) = 1 + (2-1) + (3-1) + (2-1) = 5$.
To produce output at time $t_0$, the path requires input data back to time $t_{-4}$.

\end{figure}

Figure~\ref{fig:linear_chain_context} illustrates this accumulation with a
concrete example of a three-node linear chain.

%
%
%
%
%
%
\begin{figure}[!ht]
  \includegraphics[width=\linewidth]{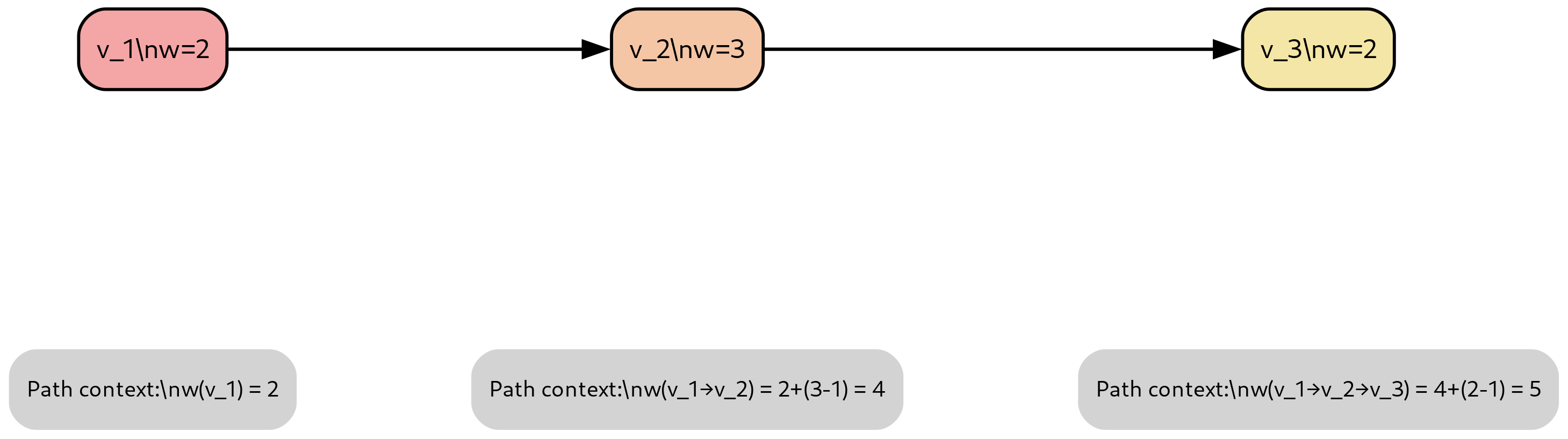}
  \caption{Linear chain showing context window accumulation. Each node has its local context window $w$, and the path context window accumulates according to the formula $w(p) = 1 + \sum (w_i - 1)$.}
  \label{fig:linear_chain_context}
\end{figure}

The context window $w$ of a DAG $G$ is the maximum of the context windows of
all paths in $G$ connecting a source node to a sink node:
\begin{equation}
    w(G) = \max_{p \textrm{ a source-to-sink path in } G} w(p).
\end{equation}

\begin{proposition}[Graph-level context window]
  \label{prop:graph-window}
  Suppose every node $v \in V$ in a DAG $G$ is point-in-time idempotent with
  context window $w_v$.  Then:
  \begin{enumerate}
    \item For each source-to-sink path $p$ and each time $t$, the outputs at
      the sink node along $p$ at time $t$ depend only on the values of the
      source inputs on $[t-w(p)+1,t]$.
    \item The graph context window $w(G)$ is the minimal integer $L$ such that
      for every sink node and time $t$, the sink outputs at $t$ depend only on
      the source inputs on $[t-L+1,t]$.
  \end{enumerate}
\end{proposition}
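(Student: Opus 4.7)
The plan is to reduce both claims to Proposition~\ref{prop:path-window} (the closed-form path context window) by tracing dependencies path-by-path through the DAG. Part~(1) is essentially a restatement of Proposition~\ref{prop:path-window}: fix a source-to-sink path $p = (v_{i_0},\dots,v_{i_n})$ and apply the pointwise idempotency of each $v_{i_k}$ along $p$ inductively, exactly as in the inductive step of that proposition's proof. The induction shows that the sink output along $p$ at time $t$ is determined by the values of the source stream at $v_{i_0}$ on $[t-w(p)+1,t]$, and nothing earlier.

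For the upper bound in part~(2), I would fix a sink $v_{\mathrm{sink}}$ and unfold its dependencies backwards through $G$. The output at $v_{\mathrm{sink}}$ at time $t$ depends on the outputs of its immediate predecessors on an interval of length $w_{v_{\mathrm{sink}}}$; each of those depends on its predecessors' outputs on a correspondingly enlarged interval, and so on, until the walk terminates at source nodes. The set of walks from sources to $v_{\mathrm{sink}}$ is precisely the set of source-to-sink paths, and the interval of source inputs that each path contributes is $[t-w(p)+1,t]$ by part~(1). The total dependence is the union of these intervals over all such $p$, which is contained in $[t - w(G) + 1, t]$ since $w(p) \le w(G)$ for every $p$. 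Hence $L = w(G)$ is an admissible graph-level context window.

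For the lower bound, I would show that no $L < w(G)$ can work. By definition of $w(G)$ there is a path $p^*$ with $w(p^*) = w(G)$. By the minimality clause in the definition of the path context window, there exist two source-input configurations for the source of $p^*$ that agree on $[t-w(p^*)+2,t]$ but disagree on the endpoint $t-w(p^*)+1$, and that produce different sink outputs along $p^*$. Holding the inputs at all other sources fixed, this same pair of configurations disagrees only on source values outside $[t-L+1,t]$ for any $L < w(G)$, yet still changes the sink output; this contradicts $L$ being an admissible window. Combining the two bounds gives $w(G)$ as the minimal $L$.

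The main obstacle is the lower bound: one must guarantee that varying source inputs along $p^*$ genuinely changes the sink output, rather than being cancelled by contributions flowing in from other paths that share the same sink. I would handle this by appealing to the definition of context window as a statement about \emph{arbitrary} input streams: the minimality of $w(p^*)$ promises the existence of a pair of configurations witnessing dependence at the extreme endpoint, and since we are free to hold inputs at all other sources constant, no cancellation is possible from those fixed contributions. Once this point is pinned down, the rest of the argument is a straightforward union-of-intervals bookkeeping exercise identical in spirit to the induction in Proposition~\ref{prop:path-window}.
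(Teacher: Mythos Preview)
Your approach is essentially the paper's: part~(1) is invoked as a direct consequence of Proposition~\ref{prop:path-window}, and part~(2) is argued by sufficiency (the union of path windows is contained in $[t-w(G)+1,t]$) together with necessity (a maximizing path $p^*$ witnesses that no smaller $L$ works). You are in fact more careful than the paper in isolating the cancellation obstacle for the lower bound; the paper's one-line sketch of necessity does not address it either, so your proposal is at least as complete as the original.
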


\begin{proof}
  (1) is a direct consequence of Proposition~\ref{prop:path-window}, applied to
  each path separately.  (2) follows because any influence from sources to a
  given sink must travel along some source-to-sink path.  The largest window
  required among those paths is thus both sufficient (no path requires more
  than $w(G)$ time steps) and necessary (for any smaller $L<w(G)$, there exists
  a path whose sink output at time $t$ can be changed by altering source
  inputs earlier than $t-L+1$).
\end{proof}

Recall that the graph context window size is a lower bound for the minimum tile
size. In many practical cases, conservative upper bounds for the graph context
window size may be produced, which already may suffice for tiling purposes.
Whether the estimated bounds suffice will depend upon the data, computational
resources, and operational mode.

Formally, let $\tau$ be a chosen tile length.  We require
\[
  \tau \;\ge\; w(G)
\]
to ensure that each tile contains at least as much history as the worst-case
graph context window.  Under this condition, we can view the entire DAG
restricted to its sources and sinks as a single (composite) node with context
window $w(G)$.

Figure~\ref{fig:financial_pipeline} illustrates a realistic financial trading
pipeline implemented as a DataFlow DAG, demonstrating the complexity of
production systems and the importance of context window management.

\begin{figure}[!ht]
  \includegraphics[width=\linewidth]{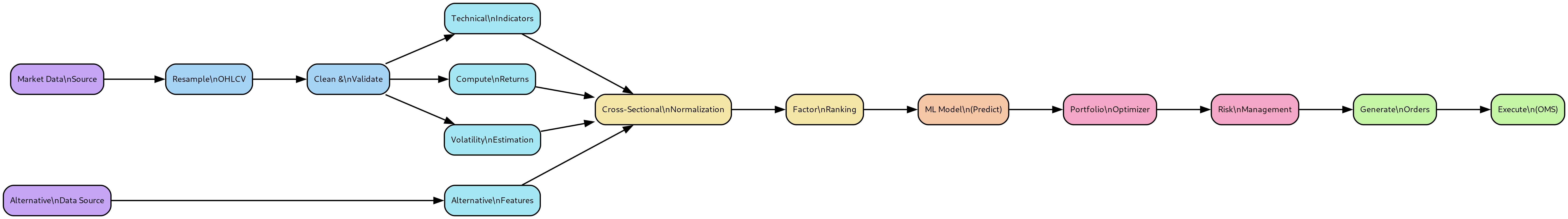}
  \caption{Complex financial trading pipeline DAG. The system processes market data through feature engineering, machine learning prediction, portfolio optimization, and order execution. Cross-sectional nodes (Normalize, Rank) require access to all assets simultaneously, constraining spatial tiling strategies. The critical path determines minimum system latency.}
  \label{fig:financial_pipeline}
\end{figure}

In any case, once the tile size is determined, achieving tile-level idempotency
is straightforward: two input tiles suffice to achieve tile-level idempotency
at the graph level (see Table \ref{table:minibatch} for a pictoral
representation). This is a consequence of how we have defined the graph context
window and minimum tile size.

\begin{theorem}[Two tiles suffice at the graph level]
  Suppose every node in $G$ is point-in-time idempotent with context window
  $w_v$, and let $w(G)$ be the resulting graph context window.  Let $\tau$ be a
  tile length satisfying $\tau \ge w(G)$.  Then for any time $t$, if we provide
  the DAG with two consecutive tiles of source inputs covering
  $[t-2\tau+1,t]$, the outputs produced at all sink nodes on the second tile
  $[t-\tau+1,t]$ coincide with the ideal streaming outputs at those times, and
  they are invariant under any further extension of the input window into the
  past.
\end{theorem}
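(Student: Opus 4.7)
The plan is to lift the single-node Theorem~\ref{thm:two-tiles-suffice} to the DAG level by treating the whole graph (restricted to its sources and sinks) as a single composite operator with context window $w(G)$, and then instantiate the two-tile argument exactly as before. Concretely, I would first invoke Proposition~\ref{prop:graph-window}(2), which tells us that for every sink node and every time $u$, the sink output at $u$ depends only on the source inputs on the trailing window $[u-w(G)+1,u]$. This is the graph-level analogue of Lemma~\ref{lem:local-dependence}, and it is precisely the property that the node-level argument of Theorem~\ref{thm:two-tiles-suffice} used.

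Next I would fix a sink node, a time $u \in [t-\tau+1,t]$ in the second tile, and verify the containment $[u-w(G)+1,u] \subseteq I^{(2)}_t = [t-2\tau+1,t]$. The right endpoint is immediate from $u \le t$. For the left endpoint, using $u \ge t-\tau+1$ and $w(G) \le \tau$, a single chain of inequalities gives $u - w(G) + 1 \ge (t-\tau+1) - \tau + 1 = t - 2\tau + 2 \ge t-2\tau+1$. This is the same arithmetic step that appears in the node-level proof, with $L$ replaced by $w(G)$. Consequently, whatever source values lie outside $I^{(2)}_t$ cannot affect the sink output at $u$, so the value computed from the two-tile input window matches the value that would be obtained from any longer prefix, and in particular matches the ideal streaming output at $u$.

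Finally, I would record the invariance under further extension of the input window into the past. Given any $s' \le t-2\tau+1$ and the extended source window $[s',t]$, the required dependence window $[u-w(G)+1,u]$ still lies inside $[s',t]$, so by Proposition~\ref{prop:graph-window}(2) the sink output at $u$ is unchanged. Iterating $u$ over all times in the second tile and over all sink nodes delivers the statement.

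The only nontrivial step I expect is really the conceptual one in paragraph one: justifying that we may treat the DAG as a single operator with context window $w(G)$ at each sink. But this has already been established as Proposition~\ref{prop:graph-window}, which was in turn built from Proposition~\ref{prop:path-window} by taking the max over source-to-sink paths, so the present theorem is essentially a clean corollary. Once that reduction is in hand, the remainder is identical in structure to Theorem~\ref{thm:two-tiles-suffice}, with $L$ replaced by $w(G)$ throughout.
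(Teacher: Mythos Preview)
Your proposal is correct and follows essentially the same approach as the paper: fix a sink and a time $u$ in the second tile, use the graph-level context-window statement (Proposition~\ref{prop:graph-window}) to reduce the dependence of the sink output at $u$ to the trailing window $[u-w(G)+1,u]$, and then verify by the same arithmetic that this window lies inside $[t-2\tau+1,t]$. The only cosmetic difference is that the paper argues per source-to-sink path (via part (1) of the proposition, bounding $w(p)\le w(G)$) before taking the union, whereas you invoke part (2) directly to treat the whole DAG as a single operator with window $w(G)$; both routes are equivalent.
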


\begin{proof}
  Fix a sink node, a time $u \in [t-\tau+1,t]$, and consider any source-to-sink
  path $p$ that reaches this sink.  By Proposition~\ref{prop:graph-window}, the
  output at time $u$ along $p$ depends only on the source inputs on
  $[u-w(p)+1,u]$.  Since $w(p) \le w(G)$ and $\tau \ge w(G)$, we have
  \[
    [u-w(G)+1,u] \subseteq [u-\tau+1,u]
    \subseteq [t-2\tau+1,t],
  \]
  so $[u-w(p)+1,u] \subseteq [t-2\tau+1,t]$.  Thus the two-tile window
  $[t-2\tau+1,t]$ contains all source-input times that can influence the output
  at $u$ along $p$.

  Since this holds for every source-to-sink path to the sink, the output at
  time $u$ at the sink is determined entirely by the restriction of the sources
  to the interval $[t-2\tau+1,t]$, and hence coincides with the ideal streaming
  output (which is defined as the limit over longer and longer prefixes).  The
  invariance under further extension into the past follows from the same
  argument: any additional history prior to $t-2\tau+1$ lies strictly before
  $u-w(G)+1$ and thus cannot affect the output at time $u$.
\end{proof}

\begin{proposition}[Necessity of $\tau \ge w(G)$]
  \label{prop:tau-necessity}
  If $\tau < w(G)$, there exist DAGs and input sequences for which two-tile
  computation on the output tile $I = [t-\tau+1,t]$ disagrees with streaming
  execution.
\end{proposition}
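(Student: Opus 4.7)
The plan is to use the minimality of $w(G)$ (from Proposition~\ref{prop:graph-window}(2)) to produce two source-input sequences whose streaming sink outputs at some time $t_0$ differ, then place the tile boundaries so that the essential disagreement between the two inputs sits strictly outside the two-tile window while $t_0$ itself lies inside the second tile. In the two-tile execution this forces identical outputs under both inputs, while streaming forces distinct ones, yielding the required discrepancy.

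Concretely, I would first take $G$ to be a single source-to-sink path whose path context window realizes $w(G)$; such a path exists by the definition $w(G) = \max_p w(p)$ together with Proposition~\ref{prop:path-window}. Because $w(G)$ is the \emph{minimal} integer for which every sink output at every time $t$ depends only on the source restricted to $[t - w(G) + 1, t]$, the smaller value $w(G) - 1$ fails to be a valid graph context window: there exist source inputs $X, X'$ and a time $t_0$ with $X$ and $X'$ agreeing on $[t_0 - w(G) + 2, t_0]$ but producing distinct streaming sink outputs at $t_0$. Since the dependency of the sink output at $t_0$ is confined to $[t_0 - w(G) + 1, t_0]$, the essential disagreement sits at the single leftmost index $s^{*} := t_0 - w(G) + 1$. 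I would then pick the second-tile right endpoint $t$ so that $t_0 \in [t - \tau + 1, t]$ and $s^{*} < t - 2\tau + 1$ hold simultaneously; a short arithmetic check gives the feasibility range $t_0 + 2\tau - w(G) < t \leq t_0 + \tau - 1$, which admits an integer $t$ for the bulk case $\tau \leq w(G) - 2$. For such a $t$, the restrictions $X|_{I^{(2)}_{t}}$ and $X'|_{I^{(2)}_{t}}$ coincide, so the two-tile computation returns the same output at $t_0$ under both inputs while streaming returns distinct ones; at least one of the two streaming executions therefore disagrees with its two-tile counterpart, yielding the desired counterexample.

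The main obstacle is the extreme boundary $\tau = w(G) - 1$, where the two-tile window of length $2w(G) - 2$ happens to exactly cover the dependency window of the leftmost second-tile output, leaving no room to place $s^{*}$ outside $I^{(2)}_{t}$ via the single-path DAG above. I expect to handle this case either by taking $G$ to be a DAG with two parallel source-to-sink paths each of context window $w(G)$ whose combined constraints admit no single $t$ covering both critical disagreements, or by sharpening the witness step to use the pointwise minimality of context windows along an interior node of the path (via Proposition~\ref{prop:path-window}), which supplies one additional unit of slack for placing the witness disagreement. Apart from this off-by-one bookkeeping, the argument is a clean ``witness from minimality'' construction combined with a time-translation to align the witness with the tile geometry; no further routine computation should be required.
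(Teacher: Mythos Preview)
Your abstract route via the minimality of $w(G)$ is correct on the range $\tau \le w(G)-2$, but the boundary case $\tau = w(G)-1$ that worries you is a red herring given the quantifier structure of the proposition. The statement reads ``there exist DAGs and input sequences'': you are handed $\tau$ and get to choose $G$. Simply build your single-path $G$ with $w(G) \ge \tau + 2$ (any linear chain of enough context-window-$2$ nodes will do), and your placement arithmetic for $t$ already goes through with no boundary case left over. The speculative fixes via parallel paths or interior-node slack are then unnecessary.

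The paper takes a more concrete route and avoids the minimality machinery entirely. It fixes a linear chain $G$ with $w(G)$ comfortably larger than $\tau$ and uses a single explicit input: the all-zero stream with a unit spike at time $t - w(G) + 1$. With $w(G)$ chosen so that this spike lies strictly to the left of the two-tile window $[t-2\tau+1,t]$, the two-tile computation sees only zeros and outputs zero at $t$, while streaming propagates the spike through the chain to a nonzero sink output at $t$. This replaces your two-input comparison (``at least one of $X,X'$ disagrees with its two-tile counterpart'') with a direct single-input witness; the spike is precisely a hand-built realization of the minimality witness you invoke abstractly.

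Both approaches work once $G$ is chosen with adequate slack. Your minimality argument is what one would want if the claim were universally quantified over all $G$ with $w(G) > \tau$ --- but that stronger claim is genuinely delicate at $\tau = w(G)-1$: for a single path built from pure shifts, the two-tile window of length $2w(G)-2$ exactly covers the source-dependency interval of every second-tile output, and two tiles \emph{do} suffice for that particular $G$. So your instinct that the boundary would require a different construction was correct; the proposition as stated simply does not force you to confront it.
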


\begin{proof}
  We construct a counterexample. Let $G$ be a linear chain of nodes
  $v_1 \to v_2 \to \cdots \to v_n$, each with context window $w_i \ge 2$,
  chosen such that $w(G) = 1 + \sum_{i=1}^n (w_i - 1) > \tau$.

  Consider an input stream where all values are zero except for a single value
  of $1$ at time $t_0 = t - w(G) + 1$. In streaming execution with full
  history, this input propagates through the chain and affects the output at
  time $t$. Specifically, by the definition of the graph context window, the
  output at time $t$ depends on inputs back to time $t - w(G) + 1 = t_0$.

  Now consider two-tile execution with tile length $\tau < w(G)$. The two tiles
  cover $[t-2\tau+1, t]$. Since $2\tau < 2w(G)$, we have
  $t-2\tau+1 > t-2w(G)+1$. In particular, if we choose the input spike at
  $t_0 = t-w(G)+1$, then $t_0 > t-2\tau+1$ for $\tau < w(G)$ sufficiently
  small. However, the two-tile window $[t-2\tau+1,t]$ does not extend back to
  $t_0$, so the spike is not visible to the two-tile computation. The two-tile
  output at time $t$ will be zero, while the streaming output is nonzero,
  demonstrating the disagreement.

  More formally, choose $\tau = \lfloor w(G)/2 \rfloor$ and set the spike at
  $t_0 = t - w(G) + 1$. Then $t-2\tau+1 = t - 2\lfloor w(G)/2 \rfloor + 1
  \ge t - w(G) + 2 > t_0$, so the spike falls outside the two-tile window,
  establishing the claim.
\end{proof}

\begin{remark}[Practical implications of tile size selection]
  Proposition~\ref{prop:tau-necessity} establishes that $\tau \ge w(G)$ is not
  merely sufficient but also necessary for correctness. In practice, this means
  that the tile size must be chosen based on the DAG's context window. Choosing
  $\tau < w(G)$ leads to subtle errors where outputs appear correct for most
  time points but silently fail when context dependencies span tile boundaries.
  DataFlow's testing framework detects such violations by comparing tiled and
  untiled execution across multiple random tile boundaries.
\end{remark}

\subsection{Separation of DAG topology and DAG configuration}

In DataFlow, the DAG topology is separated from the configuration of the
nodes of the DAG, which we refer to as ``DAG configuration''. For example, a
machine learning node that fits and utilizes an autoregressive model may expose
hyperparameters such as the order of the model (i.e., the number of lags to
include), the technique used to learn parameters, and the model training
period. A more pedestrian use case, common in practice, is to allow the user to
define admissible node-level input and output column names of dataframes.

We formalize this separation as follows.

\begin{definition}[DAG topology and configuration]
  A \emph{DAG topology} is a directed acyclic graph $G = (V,E)$ together with,
  for each vertex $v \in V$,
  \begin{itemize}
    \item its input and output arities $(m_v,n_v)$, and
    \item its abstract node type (e.g.\ SISO, join, splitter, ML node), which
          fixes the input/output schema and the admissibility constraints
          (such as point-in-time idempotency and schema compatibility).
  \end{itemize}
  A \emph{DAG configuration} is a map
  \[
    \kappa : V \longrightarrow \mathsf{Conf},
  \]
  assigning to each node $v$ a configuration object $\kappa(v)$ that
  instantiates its type (e.g.\ concrete hyperparameters, column names, or
  pointers to user-defined functions).
\end{definition}

A more abstract case is where the selection of computational operations is
specified in the DAG configuration itself. For example, a computational node
$N$ may wrap a user-defined function conforming to some constraints on input
and output schema. The wrapping may be implemented so that a pointer to the
user-defined function is specified in the node configuration, thereby allowing
the user to supply any function $f$ which conforms to the constraints upon DAG
configuration (or reconfiguration).

Formally, let $\mathsf{UF}_v$ denote the set of all user-defined functions
admissible at node $v$ (e.g.\ those that satisfy the node’s schema constraints
and point-in-time idempotency).  Then a node configuration may include a choice
$f_v \in \mathsf{UF}_v$, and the overall DAG configuration is an element of the
product space
\[
  \prod_{v \in V} \bigl( \mathsf{Conf}_v \times \mathsf{UF}_v \bigr),
\]
where $\mathsf{Conf}_v$ captures other node-specific parameters (such as lag
order or training period for a model).

\begin{remark}[Topology vs.\ configuration and context window]
  The DAG topology $(V,E)$ encodes data dependencies and hence determines the
  way in which node-level context windows $\{w_v\}_{v\in V}$ combine into the
  graph-level context window $w(G)$.  The DAG configuration $\kappa$ selects
  concrete admissible functions at each node but does not change the topology.
  As long as the configured functions preserve the same node-level context
  windows, the graph context window $w(G)$ remains unchanged when the DAG is
  reconfigured.
\end{remark}

Associated with each node is a user-defined temporal minimum \emph{context
window}, which is used to impose a consistency requirement upon the
computational node. Namely, provided the data window exceeds the context
window, the latest-in-time output of the node must not depend upon the length
of window provided. In other words, for the latest output, the node acts in an
idempotent way over any data window of the streaming dataframe that includes
the context window. We call this property \emph{point-in-time idempotency}.

\subsection{Performance bounds and scheduling}

The explicit DAG structure in DataFlow enables analysis of performance bounds
and scheduling opportunities. This section establishes fundamental limits on
latency and parallelism.

\begin{theorem}[Critical-path latency lower bound]
  \label{thm:crit-path}
  Let $G = (V,E)$ be a DAG and let each node $v \in V$ have service time
  $\tau(v) > 0$ (the time required to process a single tile). The minimum
  possible latency to produce an output at a sink node is lower bounded by
  \[
    L^* = \max_{p \text{ source-to-sink path in } G} \sum_{v \in p} \tau(v),
  \]
  the weighted length of the critical path in $G$.
\end{theorem}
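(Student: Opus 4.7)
The plan is to exhibit, for each source-to-sink path, a chain of causal dependencies that any valid schedule must respect, and then take the maximum over paths. Concretely, I would fix an arbitrary schedule that produces an output at a designated sink, and for each node $v$ let $s(v)$ and $f(v) = s(v) + \tau(v)$ denote its start and finish times under that schedule. Without loss of generality, I would normalize so that the earliest possible start time at any source node is $0$; the latency is then the finish time at the sink.

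The first step is the precedence inequality: for any edge $(u,v) \in E$, the computation at $v$ cannot begin before its input from $u$ is available, so $s(v) \ge f(u) = s(u) + \tau(u)$. This is a direct consequence of the dataflow semantics formalized earlier: $v$ consumes an output stream of $u$, and that stream is only populated once $u$ has executed on the relevant tile. Crucially, point-in-time idempotency does not help here, because the dependency is at the level of tile production, not at the level of the numerical shape of the window.

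Next, I would fix an arbitrary source-to-sink path $p = (v_{i_0}, v_{i_1}, \ldots, v_{i_n})$ and iterate the precedence inequality along it: $f(v_{i_n}) \ge s(v_{i_n}) + \tau(v_{i_n}) \ge f(v_{i_{n-1}}) + \tau(v_{i_n}) \ge \cdots \ge s(v_{i_0}) + \sum_{k=0}^{n} \tau(v_{i_k})$. Since $s(v_{i_0}) \ge 0$, the finish time at the sink along this path is at least $\sum_{v \in p} \tau(v)$. Because the overall latency is the finish time at the sink (and the sink cannot finish before its incoming computation along \emph{any} of its feeding paths completes), taking the maximum over all source-to-sink paths through the chosen sink yields the lower bound $L^*$. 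Since the argument holds for every valid schedule, $L^*$ is indeed a lower bound on achievable latency.

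The only subtlety, and the step I would take most care with, is making precise what ``latency'' means in the presence of multiple sinks and possibly multiple sources feeding a sink: I would define latency as the finish time at the distinguished sink starting from time $0$ at all sources, and note that the path maximum is taken over paths ending at that sink. Unlimited parallelism is freely allowed in the argument, so the bound does not rely on any assumption about the number of processors; it is a purely structural consequence of the DAG dependencies together with the additive cost model $\tau(\cdot)$.
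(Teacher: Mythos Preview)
Your proof is correct and follows essentially the same approach as the paper: establish the precedence inequality along edges, telescope it along an arbitrary source-to-sink path to get a lower bound of $\sum_{v\in p}\tau(v)$ on the sink's finish time, and then take the maximum over paths. Your version is somewhat more explicit in introducing the schedule functions $s(\cdot),f(\cdot)$ and carefully stating the normalization and latency convention, but the underlying argument is identical to the paper's.
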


\begin{proof}
  Consider any source-to-sink path $p = (v_{i_0}, v_{i_1}, \dots, v_{i_n})$
  where $v_{i_0}$ is a source and $v_{i_n}$ is a sink. By the dependency
  structure of the DAG, node $v_{i_k}$ cannot begin computation until all its
  predecessors have completed. In particular, along path $p$, node $v_{i_k}$
  cannot start until node $v_{i_{k-1}}$ has finished, which takes time at least
  $\tau(v_{i_{k-1}})$.

  Thus, the total time to compute the output at $v_{i_n}$ following path $p$
  is at least
  \[
    \sum_{k=0}^{n} \tau(v_{i_k}) = \sum_{v \in p} \tau(v).
  \]
  Since the sink output depends on at least one source-to-sink path, and
  computation must proceed sequentially along any such path, the minimum
  latency is bounded below by the maximum path length. This maximum is the
  critical path.

  Note that this bound is tight: a scheduler that processes nodes in
  topological order, with infinite parallelism across independent nodes,
  achieves exactly this latency.
\end{proof}

\begin{remark}[Implications for system design]
  Theorem~\ref{thm:crit-path} establishes that the critical path length
  $L^*$ represents a fundamental lower bound on latency, independent of
  available computational resources. Even with infinite parallelism, latency
  cannot be reduced below $L^*$. This has several practical implications:

  \begin{itemize}
    \item \textbf{DAG restructuring for latency reduction.} If a system fails to
      meet latency requirements, the only solution is to restructure the DAG to
      reduce the critical path length—either by optimizing slow nodes or by
      reorganizing dependencies to create shorter paths.

    \item \textbf{Resource allocation.} Nodes on the critical path are
      candidates for priority resource allocation (e.g., GPU acceleration,
      caching), as improvements to non-critical nodes do not reduce overall
      latency.

    \item \textbf{Load balancing.} In distributed execution, nodes on the
      critical path should be assigned to the fastest available workers to
      minimize end-to-end latency.
  \end{itemize}

  Figure~\ref{fig:critical_path_gantt} illustrates these concepts with a Gantt
  chart comparing sequential and parallel execution.
\end{remark}

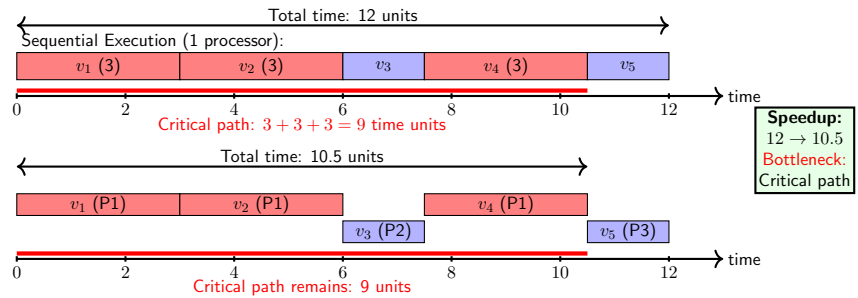
\begin{figure}[ht]
\centering
\begin{tikzpicture}[scale=0.85]
  \node[anchor=west, font=\bfseries] at (0, 6) {Sequential Execution (1 processor):};

  \draw[thick,->] (0, 5) -- (13, 5);
  \foreach \i in {0,2,...,12} {
    \draw[thick] (\i, 4.95) -- (\i, 5.05);
    \node[below, font=\tiny] at (\i, 4.95) {$\i$};
  }
  \node[right] at (13, 5) {time};

  \draw[fill=red!50] (0, 5.3) rectangle (3, 5.8) node[midway] {$v_1$ (3)};
  \draw[fill=red!50] (3, 5.3) rectangle (6, 5.8) node[midway] {$v_2$ (3)};
  \draw[fill=blue!30] (6, 5.3) rectangle (7.5, 5.8) node[midway, font=\small] {$v_3$};
  \draw[fill=red!50] (7.5, 5.3) rectangle (10.5, 5.8) node[midway] {$v_4$ (3)};
  \draw[fill=blue!30] (10.5, 5.3) rectangle (12, 5.8) node[midway, font=\small] {$v_5$};

  \draw[ultra thick, red] (0, 5.1) -- (10.5, 5.1);
  \node[below, red, font=\small] at (5.25, 4.7) {Critical path: $3+3+3=9$ time units};

  \draw[<->, thick] (0, 6.3) -- (12, 6.3);
  \node[above, font=\small] at (6, 6.3) {Total time: 12 units};

  \node[anchor=west, font=\bfseries] at (0, 3) {Parallel Execution (3 processors):};

  \draw[thick,->] (0, 2) -- (13, 2);
  \foreach \i in {0,2,...,12} {
    \draw[thick] (\i, 1.95) -- (\i, 2.05);
    \node[below, font=\tiny] at (\i, 1.95) {$\i$};
  }
  \node[right] at (13, 2) {time};

  \draw[fill=red!50] (0, 2.8) rectangle (3, 3.2) node[midway] {$v_1$ (P1)};
  \draw[fill=red!50] (3, 2.8) rectangle (6, 3.2) node[midway] {$v_2$ (P1)};
  \draw[fill=red!50] (7.5, 2.8) rectangle (10.5, 3.2) node[midway] {$v_4$ (P1)};

  \draw[fill=blue!30] (6, 2.3) rectangle (7.5, 2.7) node[midway, font=\small] {$v_3$ (P2)};

  \draw[fill=blue!30] (10.5, 2.3) rectangle (12, 2.7) node[midway, font=\small] {$v_5$ (P3)};

  \draw[ultra thick, red] (0, 2.1) -- (10.5, 2.1);
  \node[below, red, font=\small] at (5.25, 1.7) {Critical path remains: 9 units};

  \draw[<->, thick] (0, 3.7) -- (10.5, 3.7);
  \node[above, font=\small] at (5.25, 3.7) {Total time: 10.5 units};

  \node[draw, thick, fill=green!10, align=center] at (14.5, 4) {
    \textbf{Speedup:} \\
    $12 \to 10.5$ \\
    \textcolor{red}{Bottleneck:} \\
    Critical path
  };

\end{tikzpicture}
\caption{Critical path performance comparison. Top: Sequential execution where all nodes run serially, taking 12 time units. Bottom: Parallel execution with 3 processors reduces total time to 10.5 units by executing $v_3$ and $v_5$ (blue) in parallel with critical path nodes (red). However, the critical path of 9 units remains the fundamental bottleneck. Nodes on the critical path ($v_1 \to v_2 \to v_4$) determine minimum latency regardless of parallelism.}
\label{fig:critical_path_gantt}
\end{figure}

\begin{remark}[Parallelism and DAG width]
  \label{rem:parallelism-ceiling}
  The maximum instantaneous parallelism achievable in a DAG is upper bounded by
  the width of the DAG, defined as the size of the largest antichain (set of
  pairwise incomparable nodes in the partial order induced by the DAG).

  For a DAG with $n$ nodes, total work $W = \sum_{v \in V} \tau(v)$, and
  critical path length $L^*$, standard DAG scheduling results establish:
  \begin{itemize}
    \item The optimal makespan with $P$ processors is at least
      $\max(L^*, W/P)$.
    \item Greedy topological scheduling achieves makespan at most
      $L^* + (W-L^*)/P$.
  \end{itemize}

  In DataFlow, these bounds apply both to intra-tile parallelism (executing
  multiple nodes on the same time slice) and to inter-tile parallelism
  (processing multiple tiles concurrently).
\end{remark}

\begin{definition}[DAG width and antichain]
  An \emph{antichain} in a DAG $G = (V,E)$ is a set $A \subseteq V$ of nodes
  such that for any distinct $u,v \in A$, neither $u \prec v$ nor $v \prec u$
  (i.e., they are incomparable in the partial order). The \emph{width} of $G$
  is the size of the largest antichain.
\end{definition}

\begin{proposition}[Maximum parallelism bound]
  Let $G$ have width $w$. Then at most $w$ nodes can execute in parallel at any
  given time step in a valid topological schedule.
\end{proposition}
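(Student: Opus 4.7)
The plan is to argue by contraposition on the definition of antichain: any set of nodes that are scheduled to execute simultaneously in a valid schedule must, by the very definition of ``valid,'' be pairwise incomparable in the precedence order $\prec$, hence form an antichain, whose size is bounded by $w$.

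First I would fix notation. Consider a valid topological schedule, and let $S \subseteq V$ denote the set of nodes whose execution overlaps a given instant (or time step). The schedule is valid in the sense that whenever $u \prec v$, node $v$ cannot begin execution until $u$ has completed. I would then take any two distinct nodes $u,v \in S$ and argue that they must be incomparable. Indeed, if $u \prec v$ held, then $v$ would be forbidden from starting until $u$ finished, which contradicts the hypothesis that $u$ and $v$ are both executing at the same instant (so in particular $u$ has not finished). The symmetric argument rules out $v \prec u$. Hence neither $u \prec v$ nor $v \prec u$, so by the definition of antichain, $S$ is an antichain.

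Applying the definition of DAG width (the supremum of $|A|$ over antichains $A \subseteq V$) immediately gives $|S| \leq w$, which is the claimed bound. I would close by noting that this bound is tight whenever $G$ admits an antichain of size $w$ all of whose nodes are simultaneously ready (i.e., all their predecessors have completed), which is always achievable by a greedy schedule that releases nodes as soon as their dependencies are satisfied.

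The proof is essentially a one-line consequence of the definitions, so there is no substantive obstacle; the only subtlety is making precise the link between ``parallel execution'' and the incomparability condition, which amounts to clarifying that a valid schedule refuses to start any node $v$ before its strict predecessors have finished. Once that is stated, the rest is immediate from the definition of width.
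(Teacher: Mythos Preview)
Your proposal is correct and matches the paper's proof essentially verbatim: both argue that the set of nodes executing at any instant forms an antichain (because $u \prec v$ forces $v$ to wait for $u$), and then invoke the definition of width to bound its size by $w$. The paper compresses this into two sentences and omits your closing remark on tightness, but the substance is identical.
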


\begin{proof}
  At any time step, the set of nodes currently executing forms an antichain
  (since if $u \prec v$, then $v$ cannot start until $u$ completes). By
  definition, no antichain can have size exceeding $w$.
\end{proof}

\begin{remark}[Amdahl's law for DAGs]
  Even if most nodes in a DAG can be parallelized, the critical path imposes a
  serial bottleneck. This is analogous to Amdahl's law: if fraction $f$ of the
  work lies on the critical path, then the speedup with $P$ processors is
  bounded by $1/(f + (1-f)/P)$, which approaches $1/f$ as $P \to \infty$. For
  latency-critical applications, reducing the critical path fraction $f$ is
  essential.
\end{remark}

  \section {Execution Engine}

\subsection{Graph computation}

\subsubsection{DataFlow model}
A DataFlow model (aka \verb|DAG|) is a direct acyclic graph composed of DataFlow
nodes

It allows one to connect, query the structure, \ldots{}

Running a method on a DAG means running that method on all its nodes in
topological order, propagating values through the DAG nodes.

TODO(Paul, Samarth): Add picture.

\subsubsection{DagConfig}
A \verb|Dag| can be built by assembling Nodes using a function representing
the connectivity of the nodes and parameters contained in a \verb|Config| (e.g.,
through a call to a builder \verb|DagBuilder.get_dag(config)|).

A DagConfig is hierarchical and contains one subconfig per DAG node. It should
only include \verb|Dag| node configuration parameters, and not information
about \verb|Dag| connectivity, which is specified in the \verb|Dag| builder part.
A \verb|Dag| can be built by assembling Nodes using a function representing the
connectivity of the nodes and parameters contained in a \verb|Config| (e.g.,
through a call to a builder \verb|DagBuilder.get_dag(config)|).

A DagConfig is hierarchical and contains one subconfig per DAG node. It should
only include \verb|Dag| node configuration parameters, and not information about
\verb|Dag| connectivity, which is specified in the \verb|Dag| builder part.

\subsection{Graph execution}

\subsubsection{Simulation kernel}

A computation graph is a directed graph where nodes represent operations or variables,
and edges represent dependencies between these operations.

For example, in a computation graph for a mathematical expression, nodes would
represent operations like addition or multiplication, while edges would indicate
the order (and grouping) of operations.

The DataFlow simulation kernel schedules nodes according to their dependencies.

\subsection{Simulation kernel details}

The most general case of simulation consists of multiple nested loops:

\begin{enumerate}
  \item \textbf{Multiple DAG computation}. The general workload contains multiple
    DAG computations, each one inferred through a \verb|Config| belonging to a
    list of \verb|Config|s describing the entire workload to execute.
    \begin{itemize}
      \item In this set-up each DAG computation is independent, although some
        pieces of computations can be common across the workload. DataFlow will
        compute and then cache the common computations automatically as part of
        the framework execution
    \end{itemize}

  \item \textbf{Learning pattern}. For each DAG computation, multiple train/predict
    loops represent different machine learning patterns (e.g., in-sample vs out-of-sample,
    cross-validation, rolling window)
    \begin{itemize}
      \item This loop accommodates the need for nodes with state to be driven to
        learn parameters and hyperparameters and then use the learned state to
        predict on unseen data (i.e., out-of-sample)
    \end{itemize}

  \item \textbf{Temporal tiling}. Each DAG computation runs over a tile representing
    an interval of time
    \begin{itemize}
      \item As explained in section XYZ, DataFlow partition the time
        dimension in multiple tiles

      \item Temporal tiles might overlap to accommodate the amount of memory needed
        by each node (see XYZ), thus each timestamp will be covered by at
        least one tile. In the case of DAG nodes with no memory, then time is
        partitioned in non-overlapping tiles.

      \item The tiling pattern over time does not affect the result as long as
        the system is properly designed (see XYZ)
    \end{itemize}

  \item \textbf{Spatial tiling}. Each temporal slice can be computed in terms of
    multiple sections across the horizontal dimension of the dataframe inputs,
    as explained in section XYZ.

    \begin{itemize}
      \item This is constrained by nodes that compute features cross-sectionally,
        which require the entire space slice to be computed at once
    \end{itemize}

  \item \textbf{Single DAG computation}. Finally a topological sorting based on
    the specific DAG connectivity is performed in order to execute nodes in the
    proper order. Each node executes over temporal and spatial tiles.
\end{enumerate}

Figure~\ref{fig:nested_loops} illustrates the nested structure of these
simulation loops.

\begin{figure}[ht]
\centering
\begin{tikzpicture}[scale=0.9,
    box/.style={draw, thick, rounded corners, minimum height=0.8cm},
    label/.style={font=\small\bfseries, anchor=west}]

  \node[box, fill=blue!20, minimum width=4cm] (dag) at (0, 0)
    {Single DAG Computation};
  \node[label, right] at (2.5, 0)
    {Topological execution};

  \node[box, fill=green!15, minimum width=6cm, minimum height=1.8cm,
        fit={(dag)}] (spatial) {};
  \node[label] at (-3.2, 1.1) {Spatial Tiling};
  \node[font=\tiny, text width=4cm, align=left, anchor=west] at (2.5, 0.6)
    {Per-asset or column-group tiles};

  \node[box, fill=orange!15, minimum width=8cm, minimum height=3cm,
        fit={(spatial)}] (temporal) {};
  \node[label] at (-4.2, 1.8) {Temporal Tiling};
  \node[font=\tiny, text width=4cm, align=left, anchor=west] at (2.5, 1.4)
    {Time intervals (days, months)};

  \node[box, fill=purple!15, minimum width=10cm, minimum height=4.2cm,
        fit={(temporal)}] (learning) {};
  \node[label] at (-5.2, 2.5) {Learning Pattern};
  \node[font=\tiny, text width=4cm, align=left, anchor=west] at (2.5, 2.2)
    {Train/predict, cross-validation};

  \node[box, fill=red!15, minimum width=12cm, minimum height=5.4cm,
        fit={(learning)}] (multiple) {};
  \node[label] at (-6.2, 3.2) {Multiple DAG Computations};
  \node[font=\tiny, text width=4cm, align=left, anchor=west] at (2.5, 3.0)
    {Parameter sweeps, experiments};

  \foreach \i/\start/\end in {1/-3/-2.5, 2/-4/-3.5, 3/-5/-4.5, 4/-6/-5.5} {
    \draw[->, thick, gray] (\start, -2.5) -- (\end, -2.5);
  }
  \node[below, font=\small, gray] at (0, -2.8) {Increasing scope $\rightarrow$};

\end{tikzpicture}

\caption{Nested loop structure of the DataFlow simulation kernel. Each outer loop encompasses multiple executions of inner loops. The innermost level performs topological execution of a single DAG over a tile. Outer levels handle tiling, learning patterns, and parameter sweeps. This hierarchy enables caching (common computations across experiments) and parallelism (independent loops can run concurrently).}
\label{fig:nested_loops}
\end{figure}
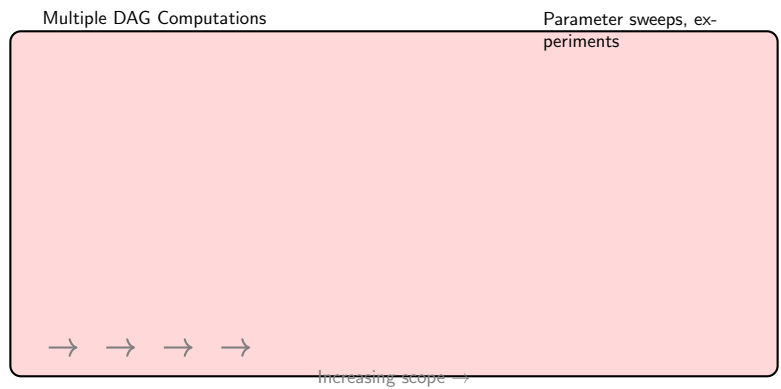

Note that it is possible to represent all the computations from the above
loops in a single ``scheduling graph'' and use this graph to schedule
executions in a global fashion.

Parallelization across CPUs comes naturally from the previous approach, since computations
that are independent in the scheduling graph can be executed in parallel, as
described in Section XYZ.

Incremental and cached computation is built-in in the scheduling algorithm
since it's possible to memoize the output by checking for a hash of all the
inputs and of the code in each node, as described in Section XYZ.

Even though each single DAG computation is required to have no loops, a System
(see XYZ) can have components introducing loops in the computation (e.g., a Portfolio
component in a trading system, where a DAG computes forecasts which are acted
upon based on the available funds). In this case, the simulation kernel needs to
enforce dependencies in the time dimension.

\subsection{Node ordering for execution}

TODO(gp, Paul): Extend this to the multiple loop.

Topological sorting is a linear ordering of the vertices of a directed graph such
that for every directed edge from vertex u to vertex v, u comes before v in
the ordering. This sorting is only possible if the graph has no directed cycles,
i.e., it must be a Directed Acyclic Graph (DAG).

\begin{lstlisting}[language=Python]
def topological_sort(graph):
  visited = set()
  post_order = []

  def dfs(node):
      if node in visited:
          return
      visited.add(node)
      for neighbor in graph.get(node, []):
          dfs(neighbor)
      post_order.append(node)

  for node in graph:
      dfs(node)

  return post_order[::-1]  # Reverse the post-order to get the topological order
\end{lstlisting}

\subsection{Heuristics for splitting computational steps into nodes}

There are degrees of freedom in splitting the work between various nodes of a
graph E.g., the same DataFlow computation can be described with several nodes
or with a single node containing all the code

The trade-off is often between several metrics:

\begin{itemize}
  \item Observability

    \begin{itemize}
      \item More nodes make it easier to:

        \begin{itemize}
          \item observe and debug intermediate the result of complex computation

          \item profile graph executions to understand performance bottlenecks
        \end{itemize}
    \end{itemize}

  \item latency/throughput

    \begin{itemize}
      \item More nodes:

        \begin{itemize}
          \item allows for better caching of computation

          \item allows for smaller incremental computation when only one part of
            the inputs change

          \item prevents optimizations performed across nodes

          \item incurs in more simulation kernel overhead for scheduling

          \item allows more parallelism between nodes being extracted and exploited
        \end{itemize}
    \end{itemize}

  \item memory consumption

    \begin{itemize}
      \item More nodes:

        \begin{itemize}
          \item allows one to partition the computation in smaller chunks requiring
            less working memory
        \end{itemize}
    \end{itemize}
\end{itemize}

A possible heuristic is to start with smaller nodes, where each node has a clear
function, and then merge nodes if this is shown to improve performance

\subsection{DataFlow System}

\subsubsection{Motivation}

While DataFlow requires that a DAG should not have cycles, general computing
systems might need to reuse the state from computation performed on past data.
E.g., in a trading system, there is often a Forecast component that can be
modeled as a DAG with no cycles and a Portfolio object that uses the forecasts
to compute the desired allocation of capital across different positions based
on the previous positions.

DataFlow supports this need by assembling multiple DAGs into a complete
\verb|System| that allows cycles.

The assumption is that DAGs are computationally expensive, while other components
mainly execute light procedural computation that requires interaction with external
objects such as databases, filesystems, sockets.

TODO(gp): Add picture

TODO(gp): Explain that System are derived from other Python objects.

\subsection{Timing semantic and clocks}

\subsubsection{Time semantics}

DataFlow components can execute in real-time or simulated mode, with different
approaches for representing the passage of time. The framework supports multiple
temporal execution modes designed to prevent future peeking while maintaining
consistency between simulation and production environments.

\subsubsection{Clock types}

The framework defines three distinct clock implementations:

\begin{enumerate}
  \item \textbf{Static clock}: A clock that remains constant during system execution.
    Future peeking is technically permissible with this clock type.

  \item \textbf{Replayed clock}: A clock that advances through historical time,
    either synchronized with wall-clock time or driven by computational events.
    The clock can be positioned in either past or future relative to actual time,
    but future peeking is prohibited to maintain simulation fidelity.

  \item \textbf{Real clock}: The wall-clock time, where data becomes available
    as generated by external systems. Future peeking is inherently impossible.
\end{enumerate}

\subsubsection{Knowledge time}

Knowledge time represents the timestamp when data becomes available to the system,
either through download or computation. Each data row is annotated with its
corresponding knowledge time. The framework enforces that data with knowledge time
exceeding the current clock time remains inaccessible, preventing inadvertent
future peeking.

\subsubsection{Timed and non-timed simulation}

\textbf{Timed simulation}. In timed simulation (also referred to as historical,
vectorized, or batch simulation), data is provided with an advancing clock that
reports the current timestamp. The system enforces that only data with knowledge
time less than or equal to the current timestamp is observable, thus preventing
future peeking. This mode typically employs either a replayed clock or static
clock depending on the specific use case.

\textbf{Non-timed simulation}. In non-timed simulation (also referred to as
event-based or reactive simulation), the clock type is static. The wall-clock
time corresponds to a timestamp equal to or greater than the latest knowledge
time in the dataset. Consequently, all data in the dataframe becomes immediately
available since each row has a knowledge time less than or equal to the wall-clock
time. In this mode, data for the entire period of interest is provided as a
single dataframe.

For example, consider a system generating predictions every 5 minutes. In non-timed
simulation, all input data are equally spaced on a 5-minute grid and indexed by
knowledge time:

\begin{lstlisting}[language=Python]
df["c"] = (df["a"] + df["b"]).shift(1)
\end{lstlisting}

\subsubsection{Real-time execution}

In real-time execution, the clock type is a real clock. For a system predicting
every 5 minutes, one forecast is generated every 5 minutes of wall-clock time,
with data arriving incrementally rather than in bulk.

\subsubsection{Replayed simulation}

In replayed simulation, data is provided in the same format and timing as in
real-time execution, but the clock type is a replayed clock. This allows the
system to simulate real-time behavior while processing historical data, facilitating
testing and validation of production systems.

\subsubsection{Synchronous and asynchronous execution modes}

\textbf{Asynchronous mode}. In asynchronous mode, multiple system components
execute concurrently. For example, the DAG may compute while orders are transmitted
to the market and other components await responses. The implementation utilizes
Python's \verb|asyncio| framework. While true asynchronous execution typically
requires multiple CPUs, under certain conditions (e.g., when I/O operations overlap
with computation), a single CPU can effectively simulate asynchronous behavior.

\textbf{Synchronous mode}. In synchronous mode, components execute sequentially.
For instance, the DAG completes its computation, then passes the resulting dataframe
to the Order Management System (OMS), which subsequently executes orders and
updates the portfolio.

The framework supports simulating the same system in either synchronous or
asynchronous mode. Synchronous execution follows a sequential pattern: the DAG
computes, passes data to the OMS, which then executes orders and updates the
portfolio. Asynchronous execution creates persistently active objects that
coordinate through mutual blocking mechanisms.

\subsection{Vectorization}

\subsubsection{Vectorization}
Vectorization is a technique for enhancing the performance of computations by
simultaneously processing multiple data elements with a single instruction, leveraging
the capabilities of modern processors (e.g., SIMD (Single Instruction,
Multiple Data) units).

\subsubsection{Vectorization in DataFlow}
Given the DataFlow format, where features are organized in a hierarchical structure,
DataFlow allows one to apply an operation to be applied across the cross-section
of a dataframe. In this way DataFlow exploits Pandas and NumPy data manipulation
and numerical computing capabilities, which are in turns built on top of low-level
libraries written in languages like C and Fortran. These languages provide efficient
implementations of vectorized operations, thus bypassing the slower execution
speed of Python loops.

\subsubsection{Example of vectorized node in DataFlow}

TODO

\subsection{Incremental, cached, and parallel execution}

\subsubsection{DataFlow and functional programming}

The DataFlow computation model shares many similarity with functional
programming:

\begin{itemize}
  \item Data immutability: data in dataframe columns is typically added or
    replaced. A node in a DataFlow graph cannot alter data in the nodes earlier
    in the graph.

  \item Pure functions: the output of a node depends only on its input values
    and it does not cause observable side effects, such as modifying a global state
    or changing the value of its inputs

  \item Lack of global state: nodes do not rely on data outside their scope, especially
    global state
\end{itemize}

\subsubsection{Incremental computation}

Only parts of a compute graph that see a change of inputs need to be
recomputed.

Incremental computation is an approach where the result of a computation is
updated in response to changes in its inputs, rather than recalculating everything
from scratch

\subsubsection{Caching}

Because of the "functional" style (no side effects) of data flow, the output
of a node is determinstic and function only of its inputs and code.

Thus the computation can be cached across runs. E.g., if many DAG simulations share
the first part of simulation, then that part will be automatically cached and
reused, without needing to be recomputed multiple times.

Figure~\ref{fig:caching_flowchart} shows the caching algorithm used by DataFlow.

\begin{figure}[ht]
\centering
\begin{tikzpicture}[scale=0.9,
    process/.style={rectangle, draw, thick, minimum width=3.5cm, minimum height=1cm,
                    text centered, rounded corners, fill=blue!10},
    decision/.style={diamond, draw, thick, aspect=2, minimum width=3cm,
                     text centered, fill=orange!20},
    io/.style={trapezium, trapezium left angle=70, trapezium right angle=110,
               draw, thick, minimum width=3cm, minimum height=0.8cm,
               text centered, fill=green!10}]

  \node[io] (start) at (0, 0) {Node Execution Request};

  \node[process, below=1cm of start] (hash)
    {Compute Hash Key $H = h(\text{inputs}, \text{config}, \text{code})$};

  \node[decision, below=1.2cm of hash] (check) {In Cache?};

  \node[process, right=3cm of check] (hit) {Retrieve Cached Result};
  \node[io, below=0.8cm of hit] (return_hit) {Return Result};

  \node[process, below=1.5cm of check] (miss) {Execute Node Computation};
  \node[process, below=0.8cm of miss] (store) {Store Result in Cache};
  \node[io, below=0.8cm of store] (return_miss) {Return Result};

  \draw[->, thick] (start) -- (hash);
  \draw[->, thick] (hash) -- (check);
  \draw[->, thick] (check) -- node[above] {\small Yes} (hit);
  \draw[->, thick] (hit) -- (return_hit);
  \draw[->, thick] (check) -- node[right] {\small No} (miss);
  \draw[->, thick] (miss) -- (store);
  \draw[->, thick] (store) -- (return_miss);

  \node[draw, thick, fill=yellow!10, text width=3.5cm, align=left] at (6, -1)
    {\textbf{Hash Key Components:} \\
     \small • Input data \\
     \small • Node configuration \\
     \small • Code version/hash};

\end{tikzpicture}

\caption{DataFlow caching algorithm. Before executing a node, the system computes a hash key from inputs, configuration, and code. If a matching result exists in the cache, it is returned immediately. Otherwise, the node executes and stores its result for future use. This enables automatic reuse of common computations across parameter sweeps and experiments.}
\label{fig:caching_flowchart}
\end{figure}
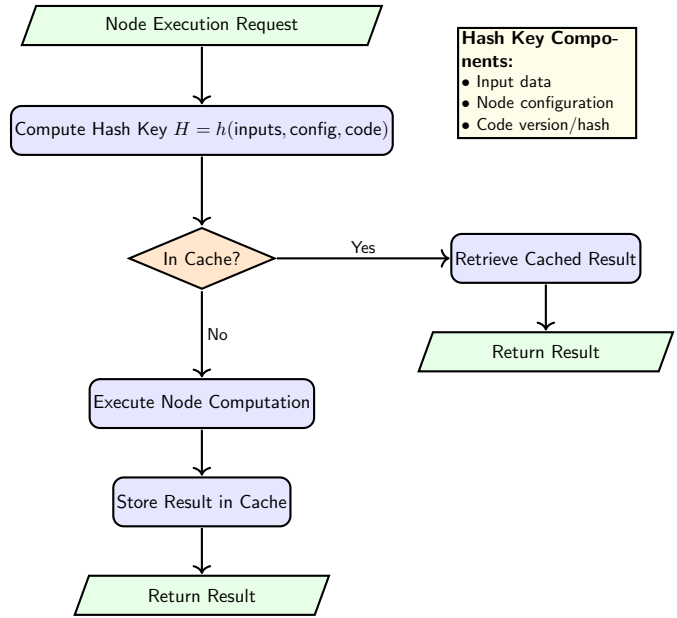

\subsubsection{Parallel execution}

Parallel and distributed execution in DataFlow is supported at two different
levels:
\begin{itemize}
  \item Across runs: given a list of \verb|Config|, each describing a different
    system, each simulation can be in parallel because they are completely
    independent.

  \item Intra runs: each DataFlow graph can be run exploiting the fact that
    nodes
\end{itemize}

In the current implementation for intra-run parallelism Kaizen flow relies on \verb|Dask|
For across-run parallelism DataFlow relies on \verb|joblib| or \verb|Dask|

Dask extends the capabilities of the Python ecosystem by providing an
efficient way to perform parallel and distributed computing.

Dask supports various forms of parallelism, including multi-threading, multi-processing,
and distributed computing. This allows it to leverage multiple cores and
machines for computation.

When working in a distributed environment, Dask distributes data and computation
across multiple nodes in a cluster, managing communication and synchronization
between nodes. It also provides resilience by re-computing lost data if a node
fails.

\subsection{Train and predict}

A DAG computation may undergo multiple evaluation phases (also referred to as
methods) to accommodate different experimental designs and validation strategies.

\subsubsection{Evaluation phases}

The framework supports several distinct phases:

\begin{itemize}
  \item \textbf{Initialization phase}: Performs computations necessary to establish
    an initial state, such as loading previously learned model parameters.

  \item \textbf{Fit phase}: Learns the state of stateful nodes using training data.

  \item \textbf{Validate phase}: Tunes hyperparameters of the system using a
    validation set. Examples include determining the optimal number of epochs
    or layers in a neural network, or the time constant of a smoothing parameter.

  \item \textbf{Predict phase}: Applies the learned state of each node to generate
    predictions on unseen data.

  \item \textbf{Load state}: Retrieves previously learned state of stateful DAG
    nodes, such as trained model weights.

  \item \textbf{Save state}: Persists the learned state of a DAG following a fit
    phase, enabling subsequent deployment to production environments.

  \item \textbf{Save results}: Stores artifacts generated during model execution,
    such as predictions from a predict phase.
\end{itemize}

The simulation kernel schedules these phases according to the type of simulation
and the dependency structure across DAG nodes. For instance, the initialization
phase can load previously learned DAG state, enabling a subsequent predict phase
without requiring a fit phase.

\subsubsection{Experimental designs}

\paragraph{In-sample evaluation}

In-sample evaluation tests the model on the same dataset used for training. While
this approach provides optimistic performance estimates, it serves as a useful
baseline. The process consists of:
\begin{enumerate}
  \item Feeding all data to the DAG in fit mode
  \item Learning parameters for each stateful node
  \item Running the DAG in predict mode on the training data
\end{enumerate}

\paragraph{Train/test evaluation}

Train/test evaluation (also known as in-sample/out-of-sample evaluation) partitions
the data into disjoint training and test sets:
\begin{enumerate}
  \item Split the data into training and test sets without temporal overlap
  \item Feed training data to the DAG in fit mode
  \item Learn parameters for each stateful node
  \item Run the DAG in predict mode on the test data
\end{enumerate}

\paragraph{Train/validate/test evaluation}

This extends the train/test approach by introducing a validation set for
hyperparameter tuning. The validation set enables selection of design parameters
such as network architecture or regularization strength before final evaluation
on the test set.

\paragraph{Cross-validation}

Cross-validation provides robust model evaluation by partitioning the dataset
into multiple subsets. For each partition:
\begin{enumerate}
  \item Use one subset as the test set and remaining data as training set
  \item Feed training data to the DAG in fit mode
  \item Learn parameters for each stateful node
  \item Run the DAG in predict mode on the test subset
\end{enumerate}
Aggregate performance across all subsets to assess overall model quality. For
time series data, this approach must respect temporal ordering to prevent future
peeking.

\paragraph{Rolling train/test evaluation}

Rolling evaluation is particularly suited for time series analysis. The approach
sequentially partitions the dataset such that each test set immediately follows
its corresponding training set in time:
\begin{enumerate}
  \item Partition the dataset into sequential train and test sets
  \item For each partition:
    \begin{itemize}
      \item Use earlier data as the training set
      \item Use immediately following data as the test set
      \item Feed training data to the DAG in fit mode
      \item Learn parameters for each stateful node
      \item Run the DAG in predict mode on the test data
    \end{itemize}
\end{enumerate}
This method simulates realistic scenarios where the model is trained on historical
data and tested on future observations, with the model continually updated as
new data becomes available.

\subsubsection{Stateful nodes}

A DAG node is stateful if it uses data to learn parameters (e.g., linear regression
coefficients, weights in a neural network, support vectors in a SVM) during
the \verb|fit| stage, that are then used in a successive \verb|predict| stage.

The state is stored inside the implementation of the node.

The state of stateful DAG node varies during a single simulation.

The following example demonstrates a stateful node implementation:

\begin{figure}[ht]
\centering
\begin{lstlisting}[language=Python, style=codebgstyle]
class MovingAverageNode(Node):
    """
    Stateful node that learns optimal window size during fit phase
    and applies it during predict phase.
    """

    def __init__(self, nid: str, window_range: tuple = (5, 50)):
        """
        Args:
            nid: Unique node identifier
            window_range: Range of window sizes to search (min, max)
        """
        super().__init__(nid)
        self.window_range = window_range
        self.optimal_window = None  # Learned state

    def fit(self, df: pd.DataFrame) -> pd.DataFrame:
        """
        Learn optimal window size from training data using
        cross-validation on a holdout metric.
        """
        best_score = float('inf')
        best_window = self.window_range[0]

        # Search for optimal window size
        for window in range(*self.window_range):
            ma = df['price'].rolling(window=window).mean()
            # Evaluate on some metric (e.g., forecast error)
            score = self._evaluate_window(df, ma)
            if score < best_score:
                best_score = score
                best_window = window

        # Store learned state
        self.optimal_window = best_window
        return self._compute_ma(df, best_window)

    def predict(self, df: pd.DataFrame) -> pd.DataFrame:
        """
        Apply learned window size to new data.
        State must be set before calling predict.
        """
        assert self.optimal_window is not None, \
            "Must call fit() before predict()"
        return self._compute_ma(df, self.optimal_window)

    def _compute_ma(self, df: pd.DataFrame, window: int) -> pd.DataFrame:
        """Compute moving average with given window."""
        df_out = df.copy()
        df_out['ma'] = df['price'].rolling(window=window).mean()
        return df_out

    def _evaluate_window(self, df: pd.DataFrame,
                        ma: pd.Series) -> float:
        """Evaluate window size quality (implementation detail)."""
        # Example: mean squared error on next-step prediction
        return ((df['price'].shift(-1) - ma) ** 2).mean()
\end{lstlisting}
\caption{Example of a stateful DataFlow node. The node learns an optimal window size during the \texttt{fit()} phase by cross-validation, stores it as internal state (\texttt{optimal\_window}), and applies the learned parameter during \texttt{predict()}. This separation enables proper in-sample/out-of-sample evaluation.}
\label{fig:stateful_node_code}
\end{figure}

\subsubsection{Stateless nodes}
A DAG node is stateless if the output is not dependent on previous \verb|fit|
stages. In other words the output of the node is only function of the current inputs
and of the node code, but not from inputs from previous tiles of inputs.

A stateless DAG node emits the same output independently from the current and
previous \verb|fit| vs \verb|predict| phases.

A stateless DAG node has no state that needs to be stored across a simulation.

\subsubsection{Loading and saving node state}

Each stateful node provides mechanisms for persisting and retrieving its internal
state on demand. The framework orchestrates the serialization and deserialization
of entire DAG states to disk.

A stateless node returns an empty state when saving and raises an assertion error
if presented with a non-empty state during loading.

The framework enables loading DAG state for subsequent analysis. For instance,
one might examine how linear model weights evolve over time in a rolling window
simulation.

\subsubsection{Batch computation and tiled execution}

DataFlow supports batch computation through tiled execution, which partitions
computation across temporal and spatial dimensions. Tiled execution provides
several advantages:

\begin{itemize}
  \item \textbf{Memory efficiency}: Large-scale simulations that exceed available
    memory can be executed by processing data in manageable tiles.

  \item \textbf{Incremental computation}: Results can be computed progressively
    and cached, avoiding redundant calculations.

  \item \textbf{Parallelization}: Independent tiles can be processed concurrently
    across multiple compute resources.
\end{itemize}

\paragraph{Temporal tiling}

Temporal tiling partitions the time dimension into intervals. Each tile represents
a specific time period (e.g., a single day or month). Tiles may overlap to accommodate
node memory requirements. For nodes without memory dependencies, time is partitioned
into non-overlapping intervals.

\paragraph{Spatial tiling}

Within each temporal slice, computation may be further divided across the horizontal
dimension of dataframes. This approach is constrained by nodes that compute
cross-sectional features, which require simultaneous access to the entire spatial
slice.

\paragraph{DAG runner implementations}

Different \verb|DagRunner| implementations support various execution patterns:

\begin{itemize}
  \item \verb|FitPredictDagRunner|: Implements separate fit and predict phases
    for in-sample and out-of-sample evaluation.

  \item \verb|RollingFitPredictDagRunner|: Supports rolling window evaluation
    with periodic retraining.

  \item \verb|RealTimeDagRunner|: Executes nodes with real-time semantics, processing
    data as it arrives.
\end{itemize}

\subsection{Backtesting and model evaluation}

Backtesting provides a framework for evaluating model performance on historical
data, supporting various levels of abstraction and fidelity to production
environments.

\subsubsection{Backtest execution modes}

A backtest consists of code configured by a single \verb|Config| object. The
framework supports multiple execution modes:

\begin{itemize}
  \item \textbf{Batch mode}: All data is available from the outset and processed
    in bulk, either as a single operation or partitioned into tiles. No clock
    advancement occurs during execution.

  \item \textbf{Streaming mode}: Data becomes available incrementally as a clock
    advances, simulating real-time operation. This mode is equivalent to processing
    tiles with temporal span matching the data arrival frequency.
\end{itemize}

\subsubsection{Research flow}

The research flow provides rapid model evaluation without portfolio management
complexity. This mode excludes position tracking, order submission, and exchange
interaction. Consequently, transaction costs and market microstructure effects
are not reflected in performance metrics. The research flow proves valuable for
assessing predictive power and conducting preliminary model comparison.

\subsubsection{Tiled backtesting}

Tiled backtesting extends the basic backtest concept by partitioning execution
across multiple dimensions:

\begin{itemize}
  \item \textbf{Asset dimension}: Each tile processes a subset of assets, potentially
    a single instrument, over the entire time period.

  \item \textbf{Temporal dimension}: Each tile processes all assets over a specific
    time interval (e.g., one day or month), closely resembling real-time system
    operation.

  \item \textbf{Hybrid tiling}: Arbitrary partitioning across both dimensions to
    optimize memory usage and computational efficiency.
\end{itemize}

The framework represents each tile as a \verb|Config| object. Source nodes support
tiling through Parquet and database backends, computation nodes handle tiling
naturally through DataFlow's streaming architecture, and sink nodes write results
using Hive-partitioned Parquet format.

\subsubsection{Configuration and reproducibility}

The framework employs hierarchical \verb|Config| objects to ensure reproducibility:

\begin{itemize}
  \item \textbf{DagConfig}: Contains node-specific parameters, excluding connectivity
    information which is specified in the \verb|DagBuilder|.

  \item \textbf{SystemConfig}: Encompasses the entire system specification,
    including market data configuration, execution parameters, and DAG configuration.

  \item \textbf{BacktestConfig}: Defines temporal boundaries, universe selection,
    trading frequency, and data lookback requirements.
\end{itemize}

Each configuration is serialized and stored alongside results, enabling precise
reproduction of experiments.

\subsection{Observability and debuggability}

\subsubsection{Running a DAG partially}
DataFlow allows one to run nodes and DAGs in a notebook during design, analysis,
and debugging phases, and in a Python script during simulation and production phases.

It is possible to run a DAG up to a certain node to iterate on its design and debug.

TODO: Add example

\subsubsection{Replaying a DAG}

Each DAG node can:

\begin{itemize}
  \item capture the stream of data presented to it during either a simulation and
    real-time execution

  \item serialize the inputs and the outputs, together with the knowledge
    timestamps

  \item play back the outputs
\end{itemize}

DataFlow allows one to describe a cut in a DAG and capture the inputs and outputs
at that interface. In this way it is possible to debug a DAG replacing all the
components before a given cut with a synthetic one replaying the observed
behavior together with the exact timing in terms of knowledge timestamps.

This allows one to easily:

\begin{itemize}
  \item capture failures in production and replay them in simulation for debugging

  \item write unit tests using observed data traces
\end{itemize}

DataFlow allows each node to automatically save all the inputs and outputs to
disk to allow replay and analysis of the behavior with high fidelity.



%


\section{Comparison to Related Work}

The landscape of distributed data processing systems encompasses streaming
frameworks, batch processing engines, and machine learning platforms. While
these systems address various aspects of large-scale computation, none provide
comprehensive support for time-aware machine learning with formal guarantees of
causality, reproducibility, and unified batch-streaming semantics. This section
examines existing frameworks and positions DataFlow's contributions relative to
the state of the art.

\subsection{Distributed Stream Processing Systems}

\textbf{Apache Flink}~\cite{CaEwHaKaMaTz15} is a distributed stream processing
engine designed for stateful computations over unbounded data streams. The
system provides event-time processing semantics, sophisticated windowing
operators, and fault tolerance through distributed snapshots. Flink excels at
high-throughput real-time analytics but does not provide integrated support for
machine learning model development, historical simulation capabilities, or
formal enforcement of point-in-time causality constraints required for
financial time series applications.

\textbf{Apache Spark Structured Streaming}~\cite{ShMoAlNa19,ZhChFrShSt10}
extends the Apache Spark batch processing engine with streaming capabilities
through a micro-batch execution model. The system treats streaming data as
unbounded tables and supports time-based windowing operations. However, the
micro-batch semantics introduce latency constraints, and the framework provides
limited support for continuous time series modeling. Critically, Spark lacks
built-in mechanisms to prevent future-peeking errors or validate
context-window dependencies in machine learning pipelines.

\textbf{Google Dataflow} and \textbf{Apache Beam}~\cite{Aketal15} provide a
unified programming model for batch and streaming data processing. The Dataflow
model introduces the concepts of event time, processing time, and windowing,
along with triggers for controlling output emission. While this model addresses
many challenges in distributed stream processing, it does not provide native
abstractions for time series machine learning, model training and evaluation
workflows, or formal validation of causal correctness.

\subsection{Machine Learning and Feature Engineering Systems}

\textbf{Metaflow}, developed by Netflix, is an ML workflow orchestration
framework focused on experiment tracking, model versioning, and deployment
infrastructure. The system provides DAG-based pipeline definitions and
integrates with cloud computing resources. However, Metaflow does not provide
native support for streaming data processing, temporal context window
management, or real-time model serving, limiting its applicability to
time series machine learning applications.

\textbf{Kaskada} is a time series feature engine that introduces a
domain-specific query language for temporal feature definitions. The system
provides point-in-time correctness guarantees and compositional feature
construction. While Kaskada addresses temporal semantics, it focuses primarily
on feature computation rather than end-to-end model training and deployment. It
lacks integration with general-purpose machine learning libraries and does not
support flexible DAG-based execution with arbitrary user-defined functions.

\textbf{Tecton} and \textbf{Feast} are feature store systems designed to
maintain consistency between offline training and online serving environments.
These systems support point-in-time-correct feature joins and materialization
of precomputed features. However, feature stores address only the data layer of
ML systems; they do not provide model execution engines, training orchestration,
or comprehensive pipeline observability. Feature stores complement but do not
replace systems like DataFlow that provide end-to-end computation frameworks.

\textbf{River} is a Python library for online machine learning that provides
incremental learning algorithms designed for streaming data. The library offers
implementations of various online learning methods but does not provide a
distributed execution engine, DAG-based orchestration, or formal guarantees of
point-in-time idempotency. River targets single-machine online learning rather
than distributed time series model development and backtesting.

\subsection{Legacy Distributed Computing Systems}

\textbf{Apache Storm} is a distributed real-time computation system designed
for processing unbounded streams of data. Storm provides at-least-once or
exactly-once processing guarantees through its Trident API. However, the system
lacks native support for event-time processing, does not enforce temporal
correctness constraints, and requires substantial engineering effort to
implement machine learning pipelines with proper causality guarantees.

\textbf{MapReduce}~\cite{DeGh08} is a programming model for distributed batch
processing that influenced subsequent big data systems. While MapReduce excels
at embarrassingly parallel computations, the model does not provide abstractions
for streaming data, temporal windowing, or stateful computations. Implementing
time series machine learning with sliding context windows in MapReduce would
require significant data duplication and careful manual management of temporal
dependencies.

\subsection{DataFlow's Distinguishing Contributions}

DataFlow distinguishes itself from existing systems through the integration of
several key capabilities:
\begin{enumerate}
  \item \textbf{Point-in-time idempotency} as a formal correctness criterion,
    with automatic validation through tiling tests.
  \item \textbf{Unified batch-streaming semantics} that guarantee identical
    outputs in research backtesting and production deployment.
  \item \textbf{Native dataframe-based programming model} supporting Python's
    scientific computing ecosystem (pandas, NumPy, scikit-learn).
  \item \textbf{Context window enforcement} preventing future-peeking errors
    and enabling memory-bounded computation.
  \item \textbf{Deterministic replay} capabilities for production debugging and
    systematic testing.
  \item \textbf{Automatic caching and incremental computation} based on
    explicit DAG dependencies.
\end{enumerate}

While systems like Apache Flink and Apache Beam provide sophisticated stream
processing primitives, they do not integrate these capabilities with machine
learning model development and validation workflows. Conversely, systems like
Metaflow and feature stores address ML engineering concerns but lack native
support for streaming execution and temporal correctness validation. DataFlow
is specifically designed to support the complete time series machine learning
lifecycle—from exploratory research to production deployment—with formal
correctness guarantees, observability, and reproducibility as first-class design
principles.

\begin{table}[ht]
\caption{Comparison of streaming and ML frameworks}
\centering
\small
\begin{tabular}{|l|c|c|c|c|c|c|c|}
\hline
\textbf{Framework} & \textbf{Stream} & \textbf{Batch} &
\shortstack{\textbf{ML}\\\textbf{Integration}} &
\shortstack{\textbf{Time}\\\textbf{Semantics}} &
\shortstack{\textbf{Causal}\\\textbf{Safety}} &
\textbf{Backtesting} &
\shortstack{\textbf{DAG}\\\textbf{Support}} \\
\hline
\textbf{DataFlow}        & \cmark & \cmark & \cmark   & \Strong   & \cmark & \cmark   & \cmark \\
\textbf{Apache Flink}    & \cmark & \cmark & \Limited & \Strong   & \xmark & \xmark   & \cmark \\
\textbf{Apache Spark}    & \cmark & \cmark & \Partial & \Moderate & \xmark & \Partial & \cmark \\
\textbf{Google Dataflow} & \cmark & \cmark & \Limited & \Strong   & \xmark & \xmark   & \cmark \\
\textbf{Metaflow}        & \xmark & \cmark & \cmark   & \Weak     & \xmark & \Partial & \cmark \\
\textbf{Kaskada}         & \cmark & \cmark & \Limited & \Strong   & \cmark & \Partial & \xmark \\
\textbf{Tecton/Feast}    & \cmark & \cmark & \Partial & \Strong   & \cmark & \xmark   & \xmark \\
\textbf{River}           & \cmark & \xmark & \cmark   & \Weak     & \xmark & \xmark   & \xmark \\
\textbf{Apache Storm}    & \cmark & \xmark & \Limited & \Weak     & \xmark & \xmark   & \cmark \\
\textbf{MapReduce}       & \xmark & \cmark & \Limited & \None     & \xmark & \xmark   & \xmark \\
\hline
\end{tabular}
\label{table:framework_comparison}
\end{table}

\subsection{Additional Related Systems}

Several additional systems address specific aspects of distributed stream
processing and machine learning but remain complementary to rather than
competitive with DataFlow's comprehensive approach.

\textbf{Apache Kafka Streams} is a client library for building stream
processing applications on top of Apache Kafka. The library provides stateful
stream processing with exactly-once semantics but focuses on Kafka-centric
architectures and does not provide integrated support for machine learning
workflows or point-in-time idempotency guarantees.

\textbf{Apache Samza} is a distributed stream processing framework originally
developed at LinkedIn. Samza integrates tightly with Apache Kafka for messaging
and provides stateful stream processing capabilities. However, like Kafka
Streams, it does not address the specific requirements of time series machine
learning or provide formal temporal correctness guarantees.

\textbf{Dask} is a parallel computing library in Python that extends NumPy,
pandas, and scikit-learn to distributed environments. While Dask supports
dataframe operations and integrates well with the Python scientific computing
ecosystem, it does not provide native streaming semantics, event-time
processing, or the temporal correctness guarantees required for time series
machine learning applications. Dask is well-suited for batch processing of
large datasets but does not address real-time streaming or causality validation.

\textbf{MillWheel}~\cite{Aketal13} is Google's internal framework for
fault-tolerant stream processing that inspired Apache Beam and Google Cloud
Dataflow. The system introduced important concepts including persistent state
and exactly-once processing semantics. However, MillWheel is an internal Google
system and does not provide the ML-focused abstractions or Python-native
programming model offered by DataFlow.

A comprehensive treatment of distributed data systems architecture and design
patterns is provided by Kleppmann~\cite{Kleppmann17}.

  \bibliography{references}
  \bibliographystyle{amsplain}
\end{document}